\documentclass[a4paper]{article}

\usepackage{arxiv}

\usepackage[utf8]{inputenc} % allow utf-8 input
\usepackage[T1]{fontenc}    % use 8-bit T1 fonts
\usepackage{hyperref}       % hyperlinks
\hypersetup{
  colorlinks  = true,       % Colours links instead of ugly boxes
  urlcolor    = darkgray,   % Colour for external hyperlinks
  linkcolor   = darkgray,   % Colour of internal links
  citecolor   = darkgray    % Colour of citations
}

\usepackage{url}            % simple URL typesetting
\usepackage{booktabs}       % professional-quality tables
\usepackage{amsfonts}       % blackboard math symbols
\usepackage{nicefrac}       % compact symbols for 1/2, etc.
\usepackage{microtype}      % microtypography
\usepackage{multirow}       % table tricks

% Our own Macros
\usepackage{amssymb,amsthm,amsmath}
\usepackage{textcomp}
\usepackage[shortlabels]{enumitem}
\usepackage{tikz}
\usetikzlibrary{positioning}

\providecommand{\st}{}
\renewcommand{\st}{\mathrel{\mid}}
\newcommand{\ldbl}{\{\!\!\{}
\newcommand{\rdbl}{\}\!\!\}}

\newcommand{\architecture}{\mathcal{M}}
\newcommand{\architectureWL}{\mathcal{M}_{\textsl{WL}}}
\newcommand{\architectureano}{\mathcal{M}_{\textsl{anon}}}

% Claims
\newtheorem{theorem}{Theorem}[section]
\newtheorem{lemma}[theorem]{Lemma}
\newtheorem{proposition}[theorem]{Proposition}
\newtheorem{corollary}[theorem]{Corollary}

% Definitions and examples
\theoremstyle{definition}
\newtheorem{definition}[theorem]{Definition}
\newtheorem{example}[theorem]{Example}
\newtheorem{remark}[theorem]{Remark}

%Title 

\title{Let's Agree to Degree: Comparing Graph Convolutional Networks
in the Message-Passing Framework}

\author{
  Floris Geerts\\
  University of Antwerp\\
  \texttt{floris.geerts@uantwerpen.be}
  \And
  Filip Mazowiecki\\
  Max Planck Institute for Software Sciences\\
  \texttt{filipm@mpi-sws.org}
  \And
  Guillermo A. P\'erez\\
  University of Antwerp\\
  \texttt{guillermoalberto.perez@uantwerpen.be}
}
\date{}
\begin{document}

\maketitle

\begin{abstract}
In this paper we cast neural networks defined on graphs as message-passing neural
networks (MPNNs) in order to study the distinguishing power of different classes
of such models. We are interested in whether certain architectures are able to
tell vertices apart based on the feature labels given as input with the graph. We
consider two variants of MPNNS: anonymous MPNNs whose message functions depend
only on the labels of vertices involved; and degree-aware MPNNs in which message
functions can additionally use information regarding the degree of vertices. The
former class covers a popular formalisms for computing functions on graphs: graph neural
networks (GNN). The latter covers the so-called graph convolutional networks
(GCNs), a recently introduced variant of GNNs by Kipf and Welling. We obtain
lower and upper bounds on the distinguishing power of MPNNs in terms of the
distinguishing power of the Weisfeiler-Lehman (WL) algorithm. Our results imply
that (i)~the distinguishing power of GCNs is bounded by the WL algorithm, but
that they are one step ahead; (ii)~the WL algorithm cannot be simulated by
``plain vanilla'' GCNs but the addition of a trade-off parameter between features
of the vertex and those of its neighbours (as proposed by Kipf and Welling
themselves) resolves this problem.
\end{abstract}

\section{Introduction}\label{sec:intro}
A standard approach to learning tasks on graph-structured data, such as vertex
classification, edge prediction, and graph classification, consists of the
construction of a representation of vertices and graphs that captures
their structural information. Graph Neural Networks (GNNs) are currently
considered as the state-of-the art approach for learning such representations.
Many variants of GNNs exist but they all follow a similar strategy. More
specifically, each vertex is initially associated with a feature vector. This
is followed by a recursive neighbourhood-aggregation scheme where each vertex
aggregates feature vectors of its neighbours, possibly combines this with its
own current feature vector, to finally obtain its new feature vector. After a
number of iterations, each vertex is then represented by the resulting feature
vector.

The adequacy of GNNs for graph learning tasks is directly related to their
so-called distinguishing power. Here, distinguishing power refers to
the ability of GNNs to distinguish vertices and graphs in terms of the computed
representation. That is, when two vertices are represented by the same feature
vector, they are considered the same with regards to any subsequent
feature-based task. 

Only recently a formal study of the distinguishing power of some GNN variants
has been initiated. In two independent studies~\cite{xhlj19,grohewl} the
distinguishing power of GNNs is linked to the distinguishing power of the
classical Weisfeiler-Lehman (WL) algorithm. The WL algorithm starts
from an initial vertex colouring of the graph. Then, similarly as GNNs, the WL
algorithm recursively aggregates the colouring of neighbouring vertices. In
each recursive step, a vertex colouring is obtained that refines the previous
one. The WL algorithm stops when no further refinement is obtained. The
distinguishing power of the WL algorithm itself is well understood, see
e.g.,~\cite{CaiFI92,KieferSS15,ArvindKRV17}.

In~\cite{xhlj19,grohewl} it is shown that for any input graph if vertices can be
distinguished by a GNN then they can be distinguished by the WL algorithm.
Conversely, Graph Isomorphism Networks (GINs) were proposed in~\cite{xhlj19}
that can match the distinguishing power of the WL algorithm, on any graph. The
construction of GINs relies on multi-layer perceptrons and their ability to
approximate arbitrary functions. In contrast, \cite{grohewl} show that the
distinguishing power of the WL algorithm can also be matched by using GNNs,
provided that the input graph is fixed. Both these works consider undirected
vertex-labelled graphs. We remark that the work by~\cite{grohewl} has recently
been extended to directed graphs, possibly with vertex- and
edge-labels~\cite{Jaume2019}. We refer to~\cite{Sato2020ASO} for an in-depth
survey on the expressive power of graph neural networks.

In this paper we start from the observation that many popular GNNs fall outside
of the class of GNNs considered in previous work~\cite{xhlj19,grohewl}.
Prominent examples of such GNNs are the so-called Graph Convolutional Networks
(GCNs)~\cite{kipf-loose}. Although GCNs adhere to the same strategy as GNNs
(i.e., recursive neighbourhood aggregation of features), they additionally take
into account \textit{vertex-degree information}. In this paper, we continue the
study of the distinguishing power of large classes of GNNs that may use degree
information.

To do so, we leverage connections between GNNs, GCNs and so-called Message
Passing Neural Network (MPNNs) introduced by~\cite{GilmerSRVD17}. Such neural
networks perform a number of rounds of computation, and in each such round,
vertex labels are propagated along the edges of the graph and aggregated at the
vertices. MPNNs are known to encompass many GNN and GCN
formalisms~\cite{GilmerSRVD17}. We refer to~\cite{Zhou2018,Zonghan2019} for
extensive surveys on GNNs, GCNs and MPNNs.

The general MPNN framework allows us to explore the impact of degree information
on the distinguishing power of MPNNs in general and large classes of GNNs and
GCNs in particular. More precisely, in this paper we consider two general
classes of MPNNs: \textit{anonymous} MPNNs that do not use degree information,
and \textit{degree-aware} MPNNs that do use degree information. The former
class of MPNNs covers the GNNs studied in~\cite{xhlj19,grohewl}, the latter
class covers the GCNs~\cite{kipf-loose}, among others.

\paragraph{Contributions.} 
For general MPNNs, our main results are the following (see Propositions~\ref{pro:eqstrongWL} and~\ref{prop:onestep}):
\begin{enumerate}[(i)]
    \item The distinguishing power of anonymous MPNNs is bounded by the WL algorithm. This result can be seen as a slight generalisation of the results in~\cite{xhlj19,grohewl}.
	\item The distinguishing power of degree-aware MPNNs is bounded by the WL algorithm, \textit{but they may be one step ahead}.  Intuitively, degree-aware MPNNs may be one step ahead of the WL algorithm because the degree information, which is part of degree-aware MPNNs from the start, is only derived by the WL algorithm after one step.
	\item The WL algorithm can be regarded as an anonymous MPNN (and thus also as a degree-aware MPNN). As a consequence, the distinguishing power of the classes of anonymous and degree-aware MPNNs matches that of the WL algorithm.
\end{enumerate}

For anonymous MPNNs related to GNNs~\cite{xhlj19,grohewl} and degree-aware MPNNs
related to GCNs~\cite{kipf-loose}, our main results are the following (see
Theorems~\ref{thm:grohe_lower} and~\ref{thm:equalstrong}, and
Propositions~\ref{prop:notasstrong} and~\ref{prop:indeed-wl-power}):
\begin{enumerate}[resume*]
\item On a fixed input graph, the WL algorithm can be simulated, step-by-step, by GNNs that use ReLU or sign as activation function. This result refines the result in~\cite{grohewl} in that their simulation using the ReLU function requires two GNN ``layers'' for each step of the WL algorithm. We only require one layer in each step. In addition, our simulation is achieved by means of a very simple form of GNNs (see Equation~\eqref{eq:GNNWL} at the end of Section~\ref{sec:anonymous}), which may be of independent interest.
\item  The distinguishing power of GCNs is bounded by the WL algorithm, \textit{but they may be  one step ahead}. This is due to GCNs being degree-aware MPNNs (for which result~(ii) applies). This advantage of GCNs over more classical GNNs may explain the success of GCNs in various graph learning tasks.
\item In contrast, we show that the WL algorithm \textit{cannot} be simulated by popular GCNs such as those from~\cite{kipf-loose}. This observation is somewhat contradictory to the general belief that GCNs can be seen as a ``continuous generalisation'' of the WL algorithm.
\item However, by introducing a learnable trade-off parameter between features of the vertex and those of its neighbours, the simulation of the WL algorithm can be achieved by GCNs. This minor relaxation of GCNs (see Equation~\eqref{gnn:kipfp} at the end of Section~\ref{sec:dMPNNs}) was already suggested in~\cite{kipf-loose} based on empirical results.
Our simulation result thus provides a theoretical justification of this parameter. 
\end{enumerate}

\paragraph{Structure of the paper.} 
After introducing some notations and concepts in Section~\ref{sec:prelims}, we define MPNNs, anonymous and degree-aware MPNNs in Section~\ref{sec:MPNNs}. In Section~\ref{sec:compare} we formally define how to compare classes of MPNNs with regard to their distinguishing power. We characterise the distinguishing power of anonymous MPNNs in Section~\ref{sec:anonymous} and that of degree-aware MPNNs in Section~\ref{sec:dMPNNs}. We conclude the paper in Section~\ref{sec:conclude}.

\section{Preliminaries}\label{sec:prelims}
Let $\mathbb{A}$ denote the set of all algebraic numbers; $\mathbb{Q}$, the set
of all rational numbers; $\mathbb{Z}$, the set of all integer numbers;
$\mathbb{N}$, the set of all natural numbers including zero, i.e., $\mathbb{N} =
\{0,1,2,\dots\}$. We write $\mathbb{S}^+$ to denote the subset of numbers from
$\mathbb{S}$ which are strictly positive, e.g., $\mathbb{N}^+ = \mathbb{N}
\setminus \{0\}$. We use $\{\!\}$ and $\ldbl\! \rdbl$ to indicate sets and
multisets, respectively.

\paragraph{Computing with algebraic numbers.} 
Throughout the paper we will perform basic computations, such as addition and
multiplication, on numbers. It is well-known that these operations are
computable on numbers in $\mathbb{N}$, $\mathbb{Z}$ and $\mathbb{Q}$. However,
in order to capture numbers used by popular graph neural network architectures,
such as roots of integers~\cite{kipf-loose}, we will work with \emph{algebraic
numbers}. An algebraic number is usually represented by a minimal polynomial
such that the number is a root of the polynomial and a pair of rational numbers
to identify that root. Conveniently, it is known that the operations we will
need are indeed computable for algebraic numbers encoded using such a
representation (see, e.g.,~\cite{OuaknineW14}).

\paragraph{Labelled graphs.}
Let $G=(V,E)$ be an undirected graph consisting of $n \in \mathbb{N}$ vertices.
Without loss of generality we assume that $V=\{1,2,\dots,n\}$. Given a vertex
$v\in V$, we denote by $N_G(v)$ its set of neighbours, i.e., $N_G(v):=\{u\st
\{u,v\}\in E\}$. Furthermore, the degree of a vertex $v$, denoted by $d_{v}$, is
the number of vertices in $N_G(v)$. With a labelled graph $(G,\pmb{\nu})$ we
mean a graph $G=(V,E)$ whose vertices are labelled using a function
$\pmb{\nu}:V\to \Sigma$ for some set $\Sigma$ of labels. We denote by
$\pmb{\nu}_v$ the label of $v$ for each $v\in V$.

Henceforth we fix a labelled graph $(G,\pmb{\nu})$ with $G=(V,E)$ and denote by
$\mathbf{A}$ the adjacency matrix (of $G$). That is, $\mathbf{A}$ is a matrix of
dimension $n \times n$ such that the entry $\mathbf{A}_{vw}=1$ if $\{v,w\}\in E$
and $\mathbf{A}_{vw}=0$ otherwise. We denote by $\mathbf{D}$ the diagonal matrix
such that $\mathbf{D}_{vv}=d_v$ for each $v\in V$. Throughout the paper we will
assume that $G$ does not have isolated vertices, which is equivalent to assuming
that $\mathbf{D}$ does not have any $0$ entries on the diagonal. This assumption
will help us to avoid unnecessary technical details in the theoretical analysis.
But it is easy to generalise our results by treating isolated nodes separately.
We will also assume that there are no self-loops, so the diagonal of
$\mathbf{A}$ is filled with $0$s. For an arbitrary matrix $\mathbf{B}$ we denote
by $\mathbf{B}_{i}$ the $i$-th row of $\mathbf{B}$. Furthermore, if $\mathbf{B}$
is a matrix of dimension $n\times m$, we also represent the rows of $\mathbf{B}$
by $\mathbf{B}_{v}$, for $v\in V$.

We will identify $\Sigma$ with elements (row vectors) in $\mathbb{A}^s$ for some
$s\in\mathbb{N}^+$. In this way, a labelling $\pmb{\ell}:V\to\Sigma$ can be
regarded as a matrix in $\mathbb{A}^{n\times s}$ and $\pmb{\ell}_v$ corresponds
to the $v$-th row in that matrix. Conversely, a matrix $\mathbf{L} \in
\mathbb{A}^{n \times s}$ can be regarded as the vertex labelling that labels $v$
with the row vector $\mathbf{L}_{v}$. We use these two interpretations of
labellings interchangeably.

It will be important later on to be able to compare two labellings of $G$. Given
a matrix $\mathbf{L} \in \mathbb{A}^{n\times s}$ and a matrix $\mathbf{L}' \in
\mathbb{A}^{n\times s'}$ we say that the vertex labelling $\mathbf{L}'$ is
coarser than the vertex labelling $\mathbf{L}$, denoted by
$\mathbf{L}\sqsubseteq \mathbf{L}'$, if for all $v,w\in V$, $\mathbf{L}_{v}=\mathbf{L}_{w} \Rightarrow \mathbf{L}'_{v}=\mathbf{L}'_{w}$. The
vertex labellings $\mathbf{L}$ and $\mathbf{L}'$ are equivalent, denoted by
$\mathbf{L}\equiv\mathbf{L}'$, if $\mathbf{L}\sqsubseteq \mathbf{L}'$ and
$\mathbf{L}'\sqsubseteq \mathbf{L}$ hold. In other words,
$\mathbf{L}\equiv\mathbf{L}'$ if and only if for all $v,w\in V$, 
$\mathbf{L}_{v}=\mathbf{L}_{w} \Leftrightarrow \mathbf{L}'_{v}=\mathbf{L}'_{w} $.

\paragraph*{Weisfeiler-Lehman labelling.}
Of particular importance is the labelling obtained by colour refinement, also
known as the Weisfeiler-Lehman algorithm (or WL algorithm, for short). The WL
algorithm constructs a labelling, in an incremental fashion, based on
neighbourhood information and the initial vertex labelling. More specifically,
given $(G,\pmb{\nu})$, the WL algorithm initially sets
$\pmb{\ell}^{(0)}:=\pmb{\nu}$. Then, the WL algorithm computes a labelling
$\pmb{\ell}^{(t)}$, for $t> 0$, as follows:
\begin{equation*}
    \pmb{\ell}^{(t)}_v:=\textsc{Hash}\Bigl(\bigl(\pmb{\ell}^{(t-1)}_v,\ldbl
    \pmb{\ell}_u^{(t-1)} \st u \in N_G(v) \rdbl\bigr)\Bigr),
\end{equation*}
where $\textsc{Hash}$ bijectively maps the above pair, consisting of (i)~the
previous label $\pmb{\ell}^{(t-1)}_v$ of $v$; and (ii)~the multiset $\ldbl
\pmb{\ell}_u^{(t-1)} \st u \in N_G(v) \rdbl$ of labels of the neighbours of $v$,
to a label in $\Sigma$ which has not been used in previous iterations. When the
number of distinct labels in $\pmb{\ell}^{(t)}$ and $\pmb{\ell}^{(t-1)}$ is the
same, the WL algorithm terminates. Termination is guaranteed in at most $n$
steps~\cite{Immerman1990}.
 
\section{Message Passing Neural Networks}\label{sec:MPNNs}
We start by describing message passing neural networks (MPNNs) for  deep
learning on graphs, introduced by \cite{GilmerSRVD17}. Roughly speaking, in
MPNNs, vertex labels are propagated through a graph according to its
connectivity structure. MPNNs are known to model a variety of graph neural
network architectures commonly used in practice. We define MPNNs
in Section~\ref{subsec:def}, provide some examples in Section~\ref{subsec:examples},
and comment on the choice of formalisation of MPNNs in Section~\ref{subsec:comments}.

\subsection{Definition}\label{subsec:def}
Given a labelled graph $( G,\pmb{\nu})$ and a computable function $f : V \to \mathbb{A}$
an MPNN computes a vertex
labelling $\pmb{\ell}:V\to \mathbb{A}^{s}$, for some $s\in\mathbb{N}^+$.
The vertex labelling computed by an MPNN is computed in a finite number of rounds $T$.
After round $0 \le t \le T$ the labelling is denoted by
$\pmb{\ell}^{(t)}$. We next detail how $\pmb{\ell}^{(t)}$ is computed.
\begin{description}\setlength{\itemsep}{-0.4ex}
\item [Initialisation.]  We let $\pmb{\ell}^{(0)}:=\pmb{\nu}$.
\end{description}
Then, for every round $t=1,2,\ldots,T$, we define
$\pmb{\ell}^{(t)}:V\to\mathbb{A}^{s_t}$, as follows\footnote{Note that
we allow for labels to have different dimensions $s_t \in \mathbb{N}^+$
per round $t$.}:
\begin{description}\setlength{\itemsep}{-0.4ex}
\item [Message Passing.] 
Each vertex $v\in V$ receives messages from its neighbours which are
subsequently aggregated. Formally, the function $\textsc{Msg}^{(t)}$ receives as
input $f$ applied to two vertices $v$ and $u$, and the corresponding labels of
these vertices from the previous iteration $\pmb{\ell}^{(t-1)}_v$ and
$\pmb{\ell}^{(t-1)}_u$, and outputs a label in $\mathbb{A}^{s'_t}$. Then, for
every vertex $v$, we aggregate by summing all such labels for every neighbour
$u$.
$$
\mathbf{m}^{(t)}_{v}:=\sum_{u\in
N_G(v)}\textsc{Msg}^{(t)}\left(\pmb{\ell}^{(t-1)}_v,\pmb{\ell}^{(t-1)}_u,f(v),f(u)\right)\in\mathbb{A}^{s'_t}.
$$
\item [Updating.] Each vertex $v\in V$ further updates $\mathbf{m}{}^{(t)}_{v}$
  possibly based on its current label $\pmb{\ell}^{(t-1)}_v$:
$$
\pmb{\ell}^{(t)}_v:=\textsc{Upd}^{(t)}\left(\pmb{\ell}^{(t-1)}_v,\mathbf{m}^{(t)}_{v}\right)\in\mathbb{A}^{s_t}.
$$
\end{description}
Here, the message functions $\textsc{Msg}^{(t)}$ and update functions
$\textsc{Upd}^{(t)}$ are general (computable) functions. After round $T$, we
define the final labelling $\pmb{\ell}:V\to\mathbb{A}^{s}$ as
$\pmb{\ell}_v:=\pmb{\ell}^{(T)}_v$ for every $v\in V$. If further aggregation
over the entire graph is needed, e.g., for graph classification, an additional
readout function $\textsc{ReadOut}(\ldbl\pmb{\ell}_v\mid v\in V\rdbl)$ can be
applied. We omit the readout function here since most of the computation happens
during the rounds of an MPNN.

The role of the function $f$ in this paper is to distinguish between two classes
of MPNNs\footnote{In general, one could consider any function $f$.}: Those whose
message functions only depend on the labels of the vertices involved, in which
case we set $f$ to the zero function $f(v)=0$, for all $v\in V$, and those whose
message functions depend on the labels and on the degrees of the vertices
involved, in which case we set $f$ to the degree function $f(v)=d_v$, for all
$v\in V$. We will refer to the former class as \textit{anonymous} MPNNs and to
the latter as \textit{degree-aware} MPNNs. These classes are denoted by
$\architectureano$ and $\architecture_{\textsl{deg}}$, respectively. We remark
that, by definition, $\architectureano\subseteq \architecture_{\textsl{deg}}$.

\subsection{Examples}\label{subsec:examples}
We next illustrate anonymous and degree-aware MPNNs by a number of examples.
First, we provide two examples of anonymous MPNNs that will play an important role in Section~\ref{sec:anonymous}.

\begin{example}[GNN architectures]\label{ex:GNN}
We first consider the graph neural network architectures~\cite{hyl17,grohewl}
defined by:
\begin{equation}
\mathbf{L}^{(t)}:=\sigma\left(\mathbf{L}^{(t-1)}\mathbf{W}_1^{(t)}+\mathbf{A}\mathbf{L}^{(t-1)}\mathbf{W}_2^{(t)}+\mathbf{B}^{(t)}\right), \label{gnn:grohe}
\end{equation}
where $\mathbf{L}^{(t)}$ is the matrix in $\mathbb{A}^{n\times s_t}$ consisting
of the $n$ rows $\pmb{\ell}^{(t)}_v\in\mathbb{A}^{s_t}$, for $v\in V$,
$\mathbf{A}\in\mathbb{A}^{n\times n}$ is the adjacency matrix of $G$,
$\mathbf{W}_1^{(t)}$ and $\mathbf{W}_2^{(t)}$ are (learnable) weight matrices in
$\mathbb{A}^{s_{t-1}\times s_t}$, $\mathbf{B}^{(t)}$ is a bias matrix in
$\mathbb{A}^{n\times s_t}$ consisting of $n$ copies of the same row
$\mathbf{b}^{(t)}\in \mathbb{A}^{s_t}$, and $\sigma$ is a non-linear activation
function. We can regard this architecture as an MPNN. Indeed,~(\ref{gnn:grohe})
can be equivalently phrased as the architecture which computes, in round $t$,
for each vertex $v\in V$ the label defined by:
\[
\pmb{\ell}^{(t)}_v:=\sigma\left(\pmb{\ell}^{(t-1)}_v\mathbf{W}_1^{(t)}+ \sum_{u\in N_G(v)}\pmb{\ell}^{(t-1)}_u\mathbf{W}_2^{(t)}+\mathbf{b}^{(t)} \right),
\]
where we identified the labellings with their images, i.e., a row vector in
$\mathbb{A}^{s_{t-1}}$ or $\mathbb{A}^{s_t}$. To phrase this as an MPNN, it
suffices to define for each $\mathbf{x}$ and $\mathbf{y}$ in
$\mathbb{A}^{s_{t-1}}$, each $v\in V$ and $u\in N_G(v)$, and each $t\geq 1$:
\begin{equation*}
	\textsc{Msg}^{(t)}\bigl(\mathbf{x},\mathbf{y},-,-):=\mathbf{y}\mathbf{W}_2^{(t)}
\text{ and } 
\textsc{Upd}^{(t)}(\mathbf{x},\mathbf{y}):=\sigma\left(\mathbf{x}\mathbf{W}_1^{(t)}+\mathbf{y} + \mathbf{b}^{(t)}\right).
\end{equation*} 

We write $-$ instead of $0$ to emphasise that the message functions use the zero
function $f(v)=0$, for all $v\in V$, and hence do not depend on $f(v)$ and
$f(u)$. In other words, the MPNN constructed is an anonymous MPNN. Without loss
of generality we will assume that aMPNNs do not use $f(v)$ and $f(u)$ in the
messages. If they do then one can replace them with $0$. This way it is easy to
see that classes of MPNNs that use different functions $f$ in the messages
contain the class of anonymous MPNNs. \hfill$\blacksquare$
\end{example}
Another example of an anonymous MPNN originates from the Weisfeiler-Lehman
algorithm described in the preliminaries.
\begin{example}[Weisfeiler-Lehman]\label{ex:WL}
We recall that WL computes, in round $t \geq 1$, for each vertex $v\in V$ the label:
$$
\pmb{\ell}^{(t)}_v:=\textsc{Hash}\left(\pmb{\ell}^{(t-1)}_v,\ldbl \pmb{\ell}_u^{(t-1)} \st u \in N_G(v) \rdbl\right).
$$
Let us assume that the set $\Sigma$ of labels is $\mathbb{A}^s$ for some fixed
$s\in\mathbb{N}^+$. We cast the WL algorithm as an anonymous MPNN by using an
injection $h : \mathbb{A}^s \to \mathbb{Q}$. What follows is in fact an
adaptation of Lemma 5 from~\cite{xhlj19} itself based on~\cite[Theorem
2]{ZaheerKRPSS17}. We crucially rely on the fact that the set $\mathbb{A}$ of
algebraic numbers is countable (see e.g., Theorem 2.2. in~\cite{Frazer}). As a
consequence, also $\mathbb{A}^s$ is countable.

Let $\tau : \mathbb{A}^s \to \mathbb{N}^+$ be a computable injective function
witnessing the countability of $\mathbb{A}^s$. For instance, since elements of
$\mathbb{A}$ are encoded as a polynomial $a_0 + a_1 x^1 + a_2 x^2 + \dots +
a_kx^k \in \mathbb{Z}[x]$ and a pair $\nicefrac{n_1}{d_1},\nicefrac{n_2}{d_2}$
of rationals, $\tau$ can be taken to be the composition of the injection $\alpha
: \mathbb{A} \to \mathbb{N}^+$, applied point-wise, and the Cantor tuple
function, where
\[
    \alpha(a_0 + a_1 x^1 + a_2 x^2 + \dots + a_kx^k,\nicefrac{n_1}{d_1},\nicefrac{n_2}{d_2}) \mapsto
    p(1,n_1) p(2,n_2) p(3,d_1) p(4,d_2) \prod_{i=0}^k p(i+5,a_i)
\]
with $\pi_i$ being the $i$-th prime number in
\[
    p(i,z) = \begin{cases}
        \pi_{2i}^z & \text{if } z \geq 0\\
        \pi_{2i+1}^{-z} & \text{if } z < 0.
    \end{cases}
\]
We next define $h:\mathbb{A}^s\to\mathbb{Q}^+$ as the mapping $\mathbf{x}
\mapsto (n+1)^{-\tau(\mathbf{x})}$. Note that $h$ is injective and
$h(\mathbf{x})$ can be seen as a number whose $(n+1)$-ary representation has a
single nonzero digit. We next observe that the multiplicity of every element in
$S := \ldbl \pmb{\ell}_u^{(t-1)} \st u \in N_G(v) \rdbl$ is bounded by the
number of all vertices $n$ --- and this for all $t \geq 1$. It follows that the
function $\phi$ mapping any such $S$ to $\sum_{\mathbf{x} \in S} h(\mathbf{x})$
is an injection from $\mathbb{A}^s$ to $\mathbb{Q}$. Therefore, the summands can
be recovered by looking at the $(n+1)$-ary representation of the sum, and thus
the inverse of $\phi$ is computable on its image. To conclude, we define the
message function
\[
\textsc{Msg}^{(t)}(\mathbf{x},\mathbf{y},-,-) :=
h(\mathbf{y}) \in \mathbb{A}.
\]
The update function is defined by
\[
\textsc{Upd}^{(t)}(\mathbf{x},y) :=
\textsc{Hash}(\mathbf{x},\phi^{-1}(y)),
\]
where $y \in \mathbb{A}$ since it corresponds to a sum of messages, themselves
algebraic numbers. As before, we write $-$ instead of $0$ to emphasise that the
message functions use the zero function $f(v)=0$, for all $v\in V$, and hence do
not depend on $f(v)$ and $f(u)$.\hfill$\blacksquare$
\end{example}

We conclude with an example of a degree-aware MPNN. We  study degree-aware MPNNs in Section~\ref{sec:dMPNNs}.
\begin{example}[GCNs by Kipf and Welling]\label{ex:KipfasMPNN}
We consider the GCN architecture by~\cite{kipf-loose}, which in round $t \geq 1$ computes
\begin{equation*}
\mathbf{L}^{(t)} := \sigma\left(
(\mathbf{D}+\mathbf{I})^{-1/2}(\mathbf{A}+\mathbf{I})(\mathbf{D}+\mathbf{I})^{-1/2}\mathbf{L}^{(t-1)}\mathbf{W}^{(t)}
  \right),
\end{equation*}
where we use the same notation as in Example~\ref{ex:GNN} but now with a single
(learnable) weight matrix $\mathbf{W}^{(t)}$ in $\mathbb{A}^{s_{t-1}\times
s_t}$. This means that, in round $t$, for each vertex $v\in V$ it computes the
label:
\begin{equation}
\pmb{\ell}^{(t)}_v:=\sigma\left(\left(\frac{1}{1+d_v}\right)\pmb{\ell}_v^{(t-1)}\mathbf{W}^{(t)} + \sum_{u\in N_G(v)} \left(\frac{1}{\sqrt{1+d_v}}\right)\left(\frac{1}{\sqrt{1+d_u}}\right)\pmb{\ell}^{(t-1)}_u\mathbf{W}^{(t)}\right). \label{GNN:Kipf}
\end{equation}
We can regard this architecture again as an MPNN. Indeed, it suffices to define
for each $\mathbf{x}$ and $\mathbf{y}$ in $\mathbb{A}^{s_{t-1}}$, each $v\in V$
and $u\in N_G(v)$, and each $t\geq 1$:
\begin{align*}
\textsc{Msg}^{(t)}\left(\mathbf{x},\mathbf{y},d_v,d_u\right)&:=
\frac{1}{d_v}\left(\frac{1}{1+d_v}\right)\mathbf{x}\mathbf{W}^{(t)}+
\left(\frac{1}{\sqrt{1+d_v}}\right)\left(\frac{1}{\sqrt{1+d_u}}\right)\mathbf{y}\mathbf{W}^{(t)}
\intertext{and} \textsc{Upd}^{(t)}(\mathbf{x},\mathbf{y})&:=\sigma(\mathbf{y}).
\end{align*}
We remark that the initial factor $1/d_v$ in the message functions is introduced
for renormalisation purposes. We indeed observe that the message functions
depend only on $\pmb{\ell}^{(t-1)}_v$, $\pmb{\ell}^{(t-1)}_u$, and the degrees
$d_v$ and $d_u$ of the vertices $v$ and $u$, respectively.\hfill$\blacksquare$
\end{example}

\subsection{On the choice of formalism}\label{subsec:comments}
The expert reader may have noticed that we use a different formalisation of
MPNNs than the one given in the original paper~\cite{GilmerSRVD17}. The first
difference is that our MPNNs are parameterised by an input computable
function~$f$ applied to $v$ and $u \in N_G(v)$. We add this function to avoid a
certain ambiguity in the formalisation in ~\cite{GilmerSRVD17} on what precisely
the message functions can depend on. More specifically, only a dependence on
$\pmb{\ell}_v^{(t-1)}$ and $\pmb{\ell}_u^{(t-1)}$ is specified
in~\cite{GilmerSRVD17}. In contrast, the examples given in~\cite{GilmerSRVD17}
use more information, such as the degree of vertices. The use of the function
$f$ in the definition of MPNNs makes explicit the information that message
functions can use. It is readily verified that every MPNN of~\cite{GilmerSRVD17}
corresponds to an MPNN in our formalism.

The second difference is that the MPNNs in ~\cite{GilmerSRVD17} work on graphs
that carry both vertex and edge labels. We ignore edge labellings in this paper
but most of our results carry over to that more general setting. Indeed, it
suffices to use the extension of the Weisfeiler-Lehman algorithm for
edge-labelled graphs as is done for graph neural networks in~\cite{Jaume2019}.

We also want to compare our formalisation to the MPNNs from~\cite{Loukas2019}.
In that paper, the message functions can depend on identifiers of the vertices
involved. Such position-aware MPNNs correspond to MPNNs in our setting in which
$f$ assigns to each vertex a unique identifier. We remark that~\cite{Loukas2019}
shows Turing universality of position-aware MPNNs using close connections with
the LOCAL model for distributed graph computations of~\cite{Angluin}. As such,
MPNNs from~\cite{Loukas2019} can simulate our MPNNs as one could add a few
initialisation rounds to compute $f(v)$ and $f(u)$. We also remark that in the
MPNNs from~\cite{Loukas2019} every vertex can also send itself a message. We
provide this functionality by parameterising the update functions with the
current label of the vertex itself, just as in~\cite{GilmerSRVD17}.

\section{Comparing the distinguishing power of classes of MPNNs}\label{sec:compare}
The distinguishing power of MPNNs relates to their ability to distinguish
vertices based on the labellings that they compute. We are interested in
comparing the distinguishing power of classes of MPNNs. In this section we
formally define what we mean by such a comparison.

For a given labelled graph $( G,\pmb{\nu})$ and MPNN $M$, we denote by
$\pmb{\ell}_M^{(t)}$ the vertex labelling computed by $M$ after $t$ rounds. We
fix the input graph in what follows, so we do not need to include the dependency
on the graph in the notation of labellings.

\begin{definition}\label{def:mpnnweak}\normalfont
Consider two MPNNs $M_1$ and $M_2$ with the same number of rounds $T$. Let
$\pmb{\ell}_{M_1}^{(t)}$ and $\pmb{\ell}_{M_2}^{(t)}$ be their corresponding
labellings on an input graph $( G,\pmb{\nu})$ obtained after $t$ rounds of
computation for every $0 \le t \le T$. Then $M_1$ is said to be \textit{weaker}
than $M_2$, denoted by $M_1\preceq M_2$, if $M_1$ cannot distinguish more
vertices than $M_2$ in every round of computation. More formally, $M_1\preceq
M_2$ if $\pmb{\ell}_{M_2}^{(t)}\sqsubseteq \pmb{\ell}_{M_1}^{(t)}$ for every
$t\geq 0$. In this case we also say that $M_2$ is \textit{stronger} than
$M_1$.\hfill$\blacksquare$
\end{definition}
We lift this notion to classes $\architecture_1$ and $\architecture_2$ of MPNNs in a
standard way.

\begin{definition}\label{def:classesweak}\normalfont
Consider two classes $\architecture_1$ and $\architecture_2$ of MPNNs. Then,
$\architecture_1$ is said to be \textit{weaker} than $\architecture_2$, denoted
by $\architecture_1\preceq \architecture_2$, if for all $M_1\in \architecture_1$
there exists an $M_2\in\architecture_2$ which is stronger than $M_1$.
\hfill$\blacksquare$
\end{definition}

Finally, we say that $\architecture_1$ and $\architecture_2$ are \textit{equally
strong}, denoted by $\architecture_1\equiv \architecture_2$, if both
$\architecture_1\preceq \architecture_2$ and $\architecture_2\preceq
\architecture_1$ hold.

We will also need a generalisation of the previous definitions in which we
compare labellings computed by MPNNs at different rounds. This is formalised as
follows.

\begin{definition}\label{def:classg}\normalfont
Consider two MPNNs $M_1$ and $M_2$ with $T_1$ and $T_2$ rounds, respectively.
Let $\pmb{\ell}_{M_1}^{(t)}$ and $\pmb{\ell}_{M_2}^{(t)}$ be their corresponding
labellings on an input graph $( G,\pmb{\nu})$ obtained after $t$ rounds of
computation. Let $g:\mathbb{N}\to \mathbb{N}$ be a monotonic function such that
$g(T_1) = T_2$. We say that $M_1$ is \textit{$g$-weaker} than $M_2$, denoted by
$M_1\preceq_{g} M_2$, if $\pmb{\ell}_{M_2}^{g(t)}\sqsubseteq
\pmb{\ell}_{M_1}^{(t)}$ for every $0 \le t\le T_1$.\hfill$\blacksquare$
\end{definition}

Only the following special cases of this definition, depending on extra
information regarding a function $g:\mathbb{N}\to\mathbb{N}$, will be relevant
in this paper:
\begin{itemize}
    \item $g(t)=t$. This case corresponds to Definition~\ref{def:mpnnweak}. If
   $M_1\preceq_{g} M_2$, then we simply say that $M_1$ is weaker than $M_2$, and
   write $M_1\preceq M_2$, as before. 
   \item $g(t)=t+1$. If $M_1\preceq_{g} M_2$,
   then we say that $M_1$ is weaker than $M_2$ \textit{with one step ahead}. We
   denote this by $M_1\preceq_{+1} M_2$.
   \item $g(t)=ct$ for some constant $c$.
   If $M_1\preceq_{g} M_2$, then we say that $M_1$ is weaker than $M_2$
   \textit{with a linear factor of $c$}. We denote this by $M_1\preceq_{\times
   c} M_2$.
\end{itemize}
We lift these definitions to classes of MPNNs, just like in Definition~\ref{def:classesweak}.

\section{The distinguishing power of anonymous MPNNs}\label{sec:anonymous}
In this section we compare classes of anonymous MPNNs in terms of their
distinguishing power using Definition~\ref{def:classesweak}.

We recall from Section~\ref{sec:MPNNs} that anonymous MPNNs are MPNNs whose message
functions only depend on the previous labels of the vertices involved. The
distinguishing power of anonymous MPNNs (or aMPNNs, for short) is well understood.
Indeed, as we will shortly see, it follows from two independent
works~\cite{xhlj19,grohewl} that the distinguishing power of aMPNNs can be linked to
the distinguishing power of the WL algorithm.

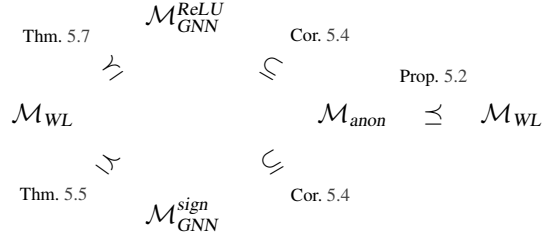
\begin{figure}[t]
    \centering
    \begin{tikzpicture}[node distance=1cm]
        \node (lwl) {$\mathcal{M}_{\textsl{WL}}$};
        \node[above right= of lwl] (relu) {$\mathcal{M}_{\textsl{GNN}}^{\textsl{ReLU}}$};
        \node[below right= of lwl] (sign) {$\mathcal{M}_{\textsl{GNN}}^{\textsl{sign}}$};
        \node[right=3cm of lwl] (anon) {$\mathcal{M}_{\textsl{anon}}$};
        \node[right= of anon] (rwl) {$\mathcal{M}_{\textsl{WL}}$};
        
        \path
        (lwl) edge[draw=none] node[sloped,label=above:\scriptsize{Thm.~\ref{thm:equalstrong}}] {$\preceq$} (relu)
        (lwl) edge[draw=none] node[sloped,label=below:\scriptsize{Thm.~\ref{thm:grohe_lower}}] {$\preceq$} (sign)
        (relu) edge[draw=none] node[sloped,label=above:\scriptsize{Cor.~\ref{corr:GNNwANO}}]{$\subseteq$} (anon)
        (sign) edge[draw=none] node[sloped,label=below:\scriptsize{Cor.~\ref{corr:GNNwANO}}]{$\subseteq$} (anon)
        (anon) edge[draw=none] node[label=above:\scriptsize{Prop.~\ref{pro:eqstrongWL}}] {$\preceq$} (rwl)
        ;
    \end{tikzpicture}
    \caption{Summary of relationships amongst major anonymous MPNN classes considered in Section~\ref{sec:anonymous}.}
    \label{fig:reduxs}
\end{figure}

Let $( G,\pmb{\nu})$ be a labelled graph. We will consider the following classes of
aMPNNs. We denote by $\architectureWL$ the class of aMPNNs consisting of an aMPNN
$M^T_{\textsl{WL}}$, for each $T \in \mathbb{N}$, originating from the WL algorithm
(see Example~\ref{ex:WL}) being ran for $T$ rounds. In a slight abuse of notation,
we will simply write $M_{\textsl{WL}}$ when $T$ is clear from the context. Recall
that the class of anonymous MPNNs is denoted $\architectureano$. Finally, we
introduce two classes of aMPNNs which are of special interest: those arising from
the graph neural networks considered in~\cite{grohewl}. In Example~\ref{ex:GNN} we
established that such graph neural networks correspond to aMPNNs. Let us denote by
$\architecture^\sigma_{\textsl{GNN}}$ the class of aMPNNs with message and update
functions of the form
\begin{equation}\textsc{Msg}^{(t)}\bigl(\mathbf{x},\mathbf{y},-,-):=\mathbf{y}\mathbf{W}_2^{(t)}
\text{ and } 
\textsc{Upd}^{(t)}(\mathbf{x},\mathbf{y}):=\sigma\left(\mathbf{x}\mathbf{W}_1^{(t)}+\mathbf{y} + \mathbf{b}^{(t)}\right) \label{eq:MPNN-GNN}
\end{equation}
for any $\mathbf{x},\mathbf{y}\in\mathbb{A}^{s_{t-1}}$, $\mathbf{W}_1^{(t)}\in\mathbb{A}^{s_{t-1}\times
s_t}$, $\mathbf{W}_2^{(t)}\in\mathbb{A}^{s_{t-1}\times s_t}$, bias vector
$\mathbf{b}^{(t)}\in\mathbb{A}^{s_t}$, and non-linear activation function $\sigma$.

The following is our main result for this section.
\begin{theorem}\label{thm:anonymous}
The classes $\mathcal{M}_{\textsl{WL}}$, $\mathcal{M}_{\textsl{GNN}}^{\textsl{ReLU}}$, $\mathcal{M}_{\textsl{GNN}}^{\textsl{sign}}$ and $\mathcal{M}_{\textsl{anon}}$ are
all equally strong.
\end{theorem}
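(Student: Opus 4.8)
The plan is to derive all the claimed equivalences as a purely formal consequence of the relations already summarised in Figure~\ref{fig:reduxs}, relying only on the transitivity of the ``weaker than'' relation. So the first thing I would do is record two elementary facts about $\preceq$. The first is that $\preceq$ is transitive on individual MPNNs: if $M_1 \preceq M_2$ and $M_2 \preceq M_3$, then for every round $t$ we have $\pmb{\ell}^{(t)}_{M_3} \sqsubseteq \pmb{\ell}^{(t)}_{M_2}$ and $\pmb{\ell}^{(t)}_{M_2} \sqsubseteq \pmb{\ell}^{(t)}_{M_1}$, and since the refinement order $\sqsubseteq$ is itself transitive (it is defined by an implication between equalities of labels), we conclude $\pmb{\ell}^{(t)}_{M_3} \sqsubseteq \pmb{\ell}^{(t)}_{M_1}$, i.e.\ $M_1 \preceq M_3$; this lifts to classes in the obvious way through Definition~\ref{def:classesweak}. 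The second fact is that a class inclusion $\architecture_1 \subseteq \architecture_2$ immediately yields $\architecture_1 \preceq \architecture_2$, since every $M \in \architecture_1$ also lies in $\architecture_2$ and is (trivially) stronger than itself.

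With these in hand, I would simply chain the arrows of Figure~\ref{fig:reduxs} into two cycles. For the ReLU case, Theorem~\ref{thm:equalstrong} gives $\mathcal{M}_{\textsl{WL}} \preceq \mathcal{M}_{\textsl{GNN}}^{\textsl{ReLU}}$, Corollary~\ref{corr:GNNwANO} gives $\mathcal{M}_{\textsl{GNN}}^{\textsl{ReLU}} \subseteq \mathcal{M}_{\textsl{anon}}$ and hence $\mathcal{M}_{\textsl{GNN}}^{\textsl{ReLU}} \preceq \mathcal{M}_{\textsl{anon}}$, and Proposition~\ref{pro:eqstrongWL} gives $\mathcal{M}_{\textsl{anon}} \preceq \mathcal{M}_{\textsl{WL}}$. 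Transitivity then closes the loop
\[
\mathcal{M}_{\textsl{WL}} \preceq \mathcal{M}_{\textsl{GNN}}^{\textsl{ReLU}} \preceq \mathcal{M}_{\textsl{anon}} \preceq \mathcal{M}_{\textsl{WL}},
\]
which forces every $\preceq$ in the cycle to hold in both directions, so that $\mathcal{M}_{\textsl{WL}} \equiv \mathcal{M}_{\textsl{GNN}}^{\textsl{ReLU}} \equiv \mathcal{M}_{\textsl{anon}}$.

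For the sign case I would run the identical argument with Theorem~\ref{thm:grohe_lower} in place of Theorem~\ref{thm:equalstrong}, obtaining the cycle
\[
\mathcal{M}_{\textsl{WL}} \preceq \mathcal{M}_{\textsl{GNN}}^{\textsl{sign}} \preceq \mathcal{M}_{\textsl{anon}} \preceq \mathcal{M}_{\textsl{WL}},
\]
whence $\mathcal{M}_{\textsl{GNN}}^{\textsl{sign}} \equiv \mathcal{M}_{\textsl{WL}}$. Combining the two cycles through the common class $\mathcal{M}_{\textsl{WL}}$, and using that $\equiv$ inherits transitivity from $\preceq$, yields that all four classes are equally strong, as required.

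I do not anticipate any genuine obstacle here: the entire mathematical content sits in the cited results, and the theorem is really a bookkeeping statement that assembles them. The only points to be careful about are that the comparison is used at the level of classes rather than single MPNNs, so the transitivity step and the ``$\subseteq \Rightarrow \preceq$'' step must both be phrased at that level, and that each cited relation is the round-for-round $\preceq$ (i.e.\ $g(t)=t$) rather than a shifted $\preceq_{+1}$. Since Theorems~\ref{thm:equalstrong} and~\ref{thm:grohe_lower} provide step-by-step simulations of WL, one layer per step, this last point should hold; but it is worth verifying explicitly, as any shift in one of the cited bounds would collapse the conclusion only to an equivalence in the $\preceq_{g}$ sense and would require restating the theorem accordingly.
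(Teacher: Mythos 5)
Your proposal is correct and takes essentially the same approach as the paper: the paper proves Theorem~\ref{thm:anonymous} exactly by establishing the relations summarised in Figure~\ref{fig:reduxs} (Proposition~\ref{pro:eqstrongWL}, Corollary~\ref{corr:GNNwANO}, and Theorems~\ref{thm:grohe_lower} and~\ref{thm:equalstrong}) and chaining them into cycles via transitivity of $\preceq$, which is the bookkeeping you spell out. Your closing caveat is also on point, and it is handled the same way in the paper: for the ReLU case one must cite Theorem~\ref{thm:equalstrong} (round-for-round simulation) rather than the $\preceq_{\times 2}$ bound of Theorem~\ref{thm:grohe_lower}, which is precisely why that theorem is proved.
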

We prove this theorem in the following subsections by providing the relationships that are summarised in Figure~\ref{fig:reduxs}.

\subsection{General anonymous MPNNs}
We presently focus on the relation between the WL algorithm and anonymous MPNNs in
general. More specifically, we establish that these are equally strong. We remark
that in the proof of Theorem~\ref{thm:anonymous} we only need that
$\architectureano$ is weaker than $\architectureWL$, as is indicated in
Figure~\ref{fig:reduxs}.

\begin{proposition}[Based on~\cite{xhlj19,grohewl}]\label{pro:eqstrongWL}
The classes $\architectureano$ and  $\architectureWL$ are equally strong.
\end{proposition}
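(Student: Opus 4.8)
The plan is to establish the two inclusions $\architectureWL \preceq \architectureano$ and $\architectureano \preceq \architectureWL$ separately. The first is immediate: Example~\ref{ex:WL} exhibits each WL MPNN as a genuine anonymous MPNN, so $\architectureWL \subseteq \architectureano$; given any $M \in \architectureWL$ I may then take $M$ itself as the witness in $\architectureano$, and $M \preceq M$ holds trivially since $\pmb{\ell}_M^{(t)} \sqsubseteq \pmb{\ell}_M^{(t)}$ for every $t$. All the content is in the converse direction.

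For $\architectureano \preceq \architectureWL$, I fix an arbitrary anonymous MPNN $M$ with $T$ rounds and claim that the WL MPNN $M_{\textsl{WL}}$ run for the same $T$ rounds is stronger than $M$, i.e. $\pmb{\ell}_{\textsl{WL}}^{(t)} \sqsubseteq \pmb{\ell}_M^{(t)}$ for all $0 \le t \le T$. I would prove this by induction on $t$. The base case $t=0$ is clear because both labellings are initialised to $\pmb{\nu}$. For the inductive step, assume $\pmb{\ell}_{\textsl{WL}}^{(t-1)} \sqsubseteq \pmb{\ell}_M^{(t-1)}$ and fix vertices $v,w$ with $\pmb{\ell}_{\textsl{WL},v}^{(t)} = \pmb{\ell}_{\textsl{WL},w}^{(t)}$; I must deduce $\pmb{\ell}_{M,v}^{(t)} = \pmb{\ell}_{M,w}^{(t)}$. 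Since $\textsc{Hash}$ is injective, equality of the WL labels at round $t$ forces both $\pmb{\ell}_{\textsl{WL},v}^{(t-1)} = \pmb{\ell}_{\textsl{WL},w}^{(t-1)}$ and equality of the neighbour multisets $\ldbl \pmb{\ell}_{\textsl{WL},u}^{(t-1)} \st u \in N_G(v) \rdbl = \ldbl \pmb{\ell}_{\textsl{WL},u}^{(t-1)} \st u \in N_G(w) \rdbl$.

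The heart of the argument, and the step I expect to be the main obstacle, is transferring these two WL-level equalities to the $M$-level using the induction hypothesis. From $\pmb{\ell}_{\textsl{WL},v}^{(t-1)} = \pmb{\ell}_{\textsl{WL},w}^{(t-1)}$ and the hypothesis I get $\pmb{\ell}_{M,v}^{(t-1)} = \pmb{\ell}_{M,w}^{(t-1)}$. For the multisets, I would argue that equality of the WL neighbour multisets yields a bijection $N_G(v) \to N_G(w)$ matching neighbours carrying identical WL labels; applying the hypothesis to each matched pair upgrades this to identical $M$-labels, so $\ldbl \pmb{\ell}_{M,u}^{(t-1)} \st u \in N_G(v) \rdbl = \ldbl \pmb{\ell}_{M,u}^{(t-1)} \st u \in N_G(w) \rdbl$. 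Because $M$ is anonymous its message function ignores $f$, so each message depends only on the pair of labels; since the aggregation $\mathbf{m}^{(t)}_v = \sum_{u \in N_G(v)} \textsc{Msg}^{(t)}(\pmb{\ell}^{(t-1)}_v, \pmb{\ell}^{(t-1)}_u, -, -)$ is a sum, and hence determined by the multiset of its summands, equal centre-labels together with equal neighbour multisets give $\mathbf{m}^{(t)}_v = \mathbf{m}^{(t)}_w$. Feeding the equal pairs $(\pmb{\ell}_{M,v}^{(t-1)}, \mathbf{m}^{(t)}_v)$ and $(\pmb{\ell}_{M,w}^{(t-1)}, \mathbf{m}^{(t)}_w)$ into $\textsc{Upd}^{(t)}$ yields $\pmb{\ell}_{M,v}^{(t)} = \pmb{\ell}_{M,w}^{(t)}$, closing the induction. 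The two delicate points worth spelling out are the use of injectivity of $\textsc{Hash}$ to decompose the WL label into centre label plus neighbour multiset, and the observation that sum-aggregation is insensitive to order and thus depends only on the multiset of messages, so that the multiset-based refinement of WL and the summation of the MPNN line up; the choice $f=0$ is exactly what guarantees the messages cannot access any information beyond the inductively controlled labels.
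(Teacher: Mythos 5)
Your proposal is correct and follows essentially the same route as the paper's own proof: the easy direction via the inclusion $\mathcal{M}_{\textsl{WL}}\subseteq\mathcal{M}_{\textsl{anon}}$ from Example~\ref{ex:WL}, and the converse by induction on rounds, using injectivity of \textsc{Hash} to decompose the WL label, transferring the centre-label and neighbour-multiset equalities to $M$-labels via the induction hypothesis and a matching bijection on neighbourhoods, and concluding through the sum-aggregation and update functions. The only cosmetic difference is that you build the bijection at the WL level and then apply the hypothesis pairwise, whereas the paper first transfers the multiset equality and then extracts the bijection; the two orderings are interchangeable.
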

\begin{proof}
First, we prove that $\architectureWL$ is weaker than $\architectureano$. It
suffices to note that $\mathcal{M}_{\textsl{WL}} \subseteq \architectureano$.

It remains to argue that $\architectureano$ is weaker than $\architectureWL$. The
proof is a trivial adaptation of the proofs of Lemma 2 in~\cite{xhlj19} and Theorem
5 in~\cite{grohewl}. We show, by induction on the number of rounds of computation,
that $\pmb{\ell}_{M_{\textsl{WL}}}^{(t)}\sqsubseteq \pmb{\ell}_M^{(t)}$ for all $M
\in \architectureano$ and every $t\geq 0$.

Clearly, this holds for $t=0$ since
$\pmb{\ell}_{M_{\textsl{WL}}}^{(0)}=\pmb{\ell}_M^{(0)}:=\pmb{\nu}$, by definition.
We assume next that the induction hypothesis holds up to round $t-1$ and consider
round $t$. Let $v$ and $w$ be two vertices such that
$(\pmb{\ell}_{M_{\textsl{WL}}}^{(t)})_v=(\pmb{\ell}_{M_{\textsl{WL}}}^{(t)})_w$
holds. This implies, by the definition of $M_{\textsl{WL}}$, that
$(\pmb{\ell}_{M_{\textsl{WL}}}^{(t-1)})_v=(\pmb{\ell}_{M_{\textsl{WL}}}^{(t-1)})_w$
and
$$
\ldbl (\pmb{\ell}_{M_{\textsl{WL}}}^{(t-1)})_u\mid u\in N_G(v) \rdbl=
\ldbl (\pmb{\ell}_{M_{\textsl{WL}}}^{(t-1)})_u\mid u\in N_G(w) \rdbl.
$$
By the induction hypothesis, this implies that
$(\pmb{\ell}_{M}^{(t-1)})_v=(\pmb{\ell}_{M}^{(t-1)})_w$ and
$$
\ldbl (\pmb{\ell}_{M}^{(t-1)})_u\mid u\in N_G(v) \rdbl=
\ldbl (\pmb{\ell}_{M}^{(t-1)})_u\mid u\in N_G(w) \rdbl.
$$
As a consequence, there is a bijection between $N_G(v)$ and $N_G(w)$ such that to
every vertex $u\in N_G(v)$ we can assign a unique vertex $u'\in N_G(w)$ such that
$(\pmb{\ell}_{M}^{(t-1)})_u=(\pmb{\ell}_{M}^{(t-1)})_{u'}$. Hence,
$$
\textsc{Msg}^{(t)}\left((\pmb{\ell}_{M}^{(t-1)})_v,(\pmb{\ell}_{M}^{(t-1)})_u,-,-\right)=
\textsc{Msg}^{(t)}\left((\pmb{\ell}_{M}^{(t-1)})_w,(\pmb{\ell}_{M}^{(t-1)})_{u'},-,-\right).
$$
Since this mapping between $N_G(v)$ and $N_G(w)$ is a bijection we also have:
$$
\mathbf{m}^{(t)}_v=\sum_{u\in N_G(v)}\textsc{Msg}^{(t)}\left((\pmb{\ell}_{M}^{(t-1)})_v,(\pmb{\ell}_{M}^{(t-1)})_u,-,-\right)=\sum_{u'\in N_G(w)}\textsc{Msg}^{(t)}\left((\pmb{\ell}_{M}^{(t-1)})_w,(\pmb{\ell}_{M}^{(t-1)})_{u'},-,-\right)=\mathbf{m}^{(t)}_w.
$$
We may thus conclude that $$(\pmb{\ell}_{M}^{(t)})_v=\textsc{Upd}^{(t)}\left((\pmb{\ell}_{M}^{(t-1)})_v,\mathbf{m}^{(t)}_v\right)=\textsc{Upd}^{(t)}\left((\pmb{\ell}_{M}^{(t-1)})_w,\mathbf{m}^{(t)}_w\right)=(\pmb{\ell}_{M}^{(t)})_w,
$$
as desired.
\end{proof}

We remark that we cannot use the results in~\cite{xhlj19} and~\cite{grohewl} as a
black box because the class $\architectureano$ is more general than the class
considered in those papers. The proofs in \cite{xhlj19} and \cite{grohewl} relate to
graph neural networks which, in round $t\geq 1$, compute for each vertex $v$ a label
$\pmb{\ell}^{(t)}_{v}$, as follows:
\begin{equation}
\pmb{\ell}^{(t)}_{v}:=
f_{\textsl{comb}}^{(t)}\left(
\pmb{\ell}_{v}^{(t-1)},f_{\textsl{aggr}}^{(t)}\left(\ldbl \pmb{\ell}^{(t-1)}_{u} \mid u \in N_G(v) \rdbl\right)
\right), \label{eq:combaggr}
\end{equation}
where $f_{\textsl{comb}}^{(t)}$ and $f_{\textsl{aggr}}^{(t)}$ are general
(computable) combination and aggregation functions which we assume to assign labels
in $\mathbb{A}^{s_t}$. Furthermore, $\pmb{\ell}^{(0)}:=\pmb{\nu}$, just as before.
Every graph neural network of the form~(\ref{eq:combaggr}) is readily cast as an
aMPNN. Indeed, it suffices to observe, just as we did in Example~\ref{ex:WL}, that
the aggregation functions $f_{\textsl{aggr}}^{(t)}\bigl(\ldbl \pmb{\ell}^{(t-1)}_{u}
\mid u \in N_G(v) \rdbl\bigr)$ can be written in the form $g^{(t)}\bigl(\sum_{u\in
N_G(v)} h^{(t)}(\pmb{\ell}^{(t-1)}_{u})\bigr)$, based on Lemma 5 from~\cite{xhlj19}.

Suppose that $\pmb{\nu}:V\to\mathbb{A}^{s_0}$. It now suffices to define for every
$t \geq 1$, every $\mathbf{x}$ and $\mathbf{y}$ in $\mathbb{A}^{s_{t-1}}$, every
$v\in V$ and $u\in N_G(u)$:
\begin{equation}
\textsc{Msg}^{(t)}(\mathbf{x},\mathbf{y},-,-):=h^{(t)}(\mathbf{y}) \text{ and } \textsc{Upd}^{(t)}(\mathbf{x},\mathbf{y}):=f_{\textsl{comb}}^{(t)}\left(\mathbf{x},g^{(t)}\left(\mathbf{y}\right)\right).\label{eq:combaggrtoaMPNN}
\end{equation}
This is clearly an aMPNN which computes the same labelling as~(\ref{eq:combaggr}).

The aMPNNs that we consider in this paper are slightly more general than those
defined by (\ref{eq:combaggrtoaMPNN}). Indeed, we consider message functions that
can also depend on the previous label $\pmb{\ell}_v^{(t-1)}$. In contrast, the
message functions in~(\ref{eq:combaggrtoaMPNN}) only depend on $\mathbf{y}$, which
corresponds to the previous labels $\pmb{\ell}_u^{(t-1)}$ of neighbours $u\in
N_G(v)$. Let $\architecture{}^{-}_{\textsl{anon}}$ denote the class of aMPNNs whose
message functions only depend on the previous labels of neighbours. It now suffices
to observe (see Example~\ref{ex:WL}) that $M_{\textsl{WL}}\in
\architecture{}^{-}_{\textsl{anon}}$ to infer, combined with
Proposition~\ref{pro:eqstrongWL}, that:
 \begin{corollary}
	 The classes $\architecture{}^{-}_{\textsl{anon}}$, $\architectureano$ and $\architectureWL$ are all equally strong.
 \end{corollary}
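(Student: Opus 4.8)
The plan is to sandwich $\architecture{}^{-}_{\textsl{anon}}$ between $\architectureWL$ and $\architectureano$ and then let Proposition~\ref{pro:eqstrongWL}, which already identifies the two outer classes as equally strong, do the work. First I would record the inclusion chain $\architectureWL \subseteq \architecture{}^{-}_{\textsl{anon}} \subseteq \architectureano$. The right-hand inclusion is immediate from the definitions, since an aMPNN whose message function ignores the vertex's own label is a special case of a general anonymous MPNN. The left-hand inclusion is exactly the observation made in Example~\ref{ex:WL}: each $M_{\textsl{WL}}$ has message function $\textsc{Msg}^{(t)}(\mathbf{x},\mathbf{y},-,-) = h(\mathbf{y})$, which depends only on the neighbour's label $\mathbf{y}$, so $M_{\textsl{WL}} \in \architecture{}^{-}_{\textsl{anon}}$.

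Next I would upgrade these set inclusions to weakness relations. The elementary fact I would isolate is that $\architecture_1 \subseteq \architecture_2$ implies $\architecture_1 \preceq \architecture_2$: given $M_1 \in \architecture_1 \subseteq \architecture_2$, the witness demanded by Definition~\ref{def:classesweak} is $M_1$ itself, which is stronger than itself because $\sqsubseteq$ is reflexive and $M_1$ has, trivially, the same number of rounds as $M_1$. Applying this to the chain yields $\architectureWL \preceq \architecture{}^{-}_{\textsl{anon}} \preceq \architectureano$.

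To close the loop I would combine these with Proposition~\ref{pro:eqstrongWL}, which supplies $\architectureano \preceq \architectureWL$, and use transitivity of $\preceq$ on classes (a consequence of transitivity of $\sqsubseteq$, since composing the two dominating MPNNs preserves the domination and the number of rounds). From $\architectureano \preceq \architectureWL \preceq \architecture{}^{-}_{\textsl{anon}}$ I obtain $\architectureano \preceq \architecture{}^{-}_{\textsl{anon}}$, which together with $\architecture{}^{-}_{\textsl{anon}} \preceq \architectureano$ gives $\architecture{}^{-}_{\textsl{anon}} \equiv \architectureano$; symmetrically, $\architecture{}^{-}_{\textsl{anon}} \preceq \architectureano \preceq \architectureWL$ and $\architectureWL \preceq \architecture{}^{-}_{\textsl{anon}}$ give $\architecture{}^{-}_{\textsl{anon}} \equiv \architectureWL$. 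As $\architectureano \equiv \architectureWL$ by Proposition~\ref{pro:eqstrongWL}, all three classes are equally strong.

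There is no genuine obstacle here beyond bookkeeping; the mathematical content is entirely carried by Example~\ref{ex:WL} (placing $M_{\textsl{WL}}$ in $\architecture{}^{-}_{\textsl{anon}}$) and by Proposition~\ref{pro:eqstrongWL}. The single point that deserves a line of care is that every $\preceq$-witness must have the same number of rounds $T$ as the MPNN it dominates. This is automatic in each step I use: the witness is either the MPNN itself, or it is $M_{\textsl{WL}}$ run for exactly $T$ rounds, which is precisely the witness produced in the proof of Proposition~\ref{pro:eqstrongWL}.
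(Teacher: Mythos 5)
Your proof is correct and follows essentially the same route as the paper: you sandwich $\architecture{}^{-}_{\textsl{anon}}$ via $\architectureWL \subseteq \architecture{}^{-}_{\textsl{anon}} \subseteq \architectureano$, using the observation from Example~\ref{ex:WL} that $M_{\textsl{WL}} \in \architecture{}^{-}_{\textsl{anon}}$, and then invoke Proposition~\ref{pro:eqstrongWL}. The only difference is that you spell out the bookkeeping (inclusion implies $\preceq$, transitivity of $\preceq$) which the paper leaves implicit.
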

We observe, however, that this does not imply that for every aMPNN $M$ in
$\architectureano$ there exists an aMPNN $M'$ in
$\architecture{}^{-}_{\textsl{anon}}$ such that $\pmb{\ell}_{M}^{(t)}\equiv
\pmb{\ell}_{M'}^{(t)}$ for all $t\geq 0$. Indeed, the corollary implies that for
every $M$ in $\architectureano$ there exists an aMPNN $M'$ in
$\architecture{}^{-}_{\textsl{anon}}$ such that $M\preceq M'$, and there exists an
$M''$ in $\architectureano$, possibly different from $M$, such that $M'\preceq M''$.
In fact, such an aMPNN $M''$, in this case is $M_{\textsl{WL}}$.

\subsection{Graph neural network-based anonymous MPNNs}
In this subsection we study the subclasses of aMPNNs arising from graph neural
network architectures. For convenience, let us write $\architecture_{\textsl{GNN}}
:= \architecture^{\textsl{sign}}_{\textsl{GNN}} \cup
\architecture^{\textsl{ReLU}}_{\textsl{GNN}}$.

We start by stating a direct consequence of Proposition~\ref{pro:eqstrongWL}. It
follows by observing that $\architecture_{\textsl{GNN}}$ is a subclass of
$\architectureano$ as presented in Example~\ref{ex:GNN}.
\begin{corollary}\label{corr:GNNwANO}
	The class 
$\architecture_{\textsl{GNN}}$ is weaker than $\architectureano$ and is thus also weaker than $\architectureWL$.
\end{corollary}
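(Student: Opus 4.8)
The plan is to reduce the claim to a set-theoretic containment followed by two elementary observations about the ordering $\preceq$. Concretely, I would first show that $\architecture_{\textsl{GNN}} \subseteq \architectureano$, then invoke the general principle that class containment implies the weaker-than relation, and finally chain this with Proposition~\ref{pro:eqstrongWL} via transitivity of $\preceq$.

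For the containment, I would appeal directly to Example~\ref{ex:GNN}. There it is shown that every graph neural network of the form~\eqref{gnn:grohe} can be recast as an MPNN whose message and update functions are exactly those displayed in~\eqref{eq:MPNN-GNN}, namely $\textsc{Msg}^{(t)}(\mathbf{x},\mathbf{y},-,-):=\mathbf{y}\mathbf{W}_2^{(t)}$ and $\textsc{Upd}^{(t)}(\mathbf{x},\mathbf{y}):=\sigma(\mathbf{x}\mathbf{W}_1^{(t)}+\mathbf{y}+\mathbf{b}^{(t)})$. Crucially, the message function uses the zero function $f(v)=0$ for all $v\in V$ (indicated by the dashes), so the resulting MPNN is by definition anonymous. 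Since both $\sigma=\textsl{sign}$ and $\sigma=\textsl{ReLU}$ are admissible non-linear activation functions, each member of $\architecture^{\textsl{sign}}_{\textsl{GNN}}$ and of $\architecture^{\textsl{ReLU}}_{\textsl{GNN}}$ is an element of $\architectureano$; hence $\architecture_{\textsl{GNN}}=\architecture^{\textsl{sign}}_{\textsl{GNN}}\cup\architecture^{\textsl{ReLU}}_{\textsl{GNN}}\subseteq\architectureano$.

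It remains to pass from containment to the weaker-than relation. I would record the general observation that $\architecture_1\subseteq\architecture_2$ implies $\architecture_1\preceq\architecture_2$: given any $M_1\in\architecture_1$, the witness in $\architecture_2$ is simply $M_1$ itself, and $M_1\preceq M_1$ holds because $\sqsubseteq$ is reflexive, so that $\pmb{\ell}_{M_1}^{(t)}\sqsubseteq\pmb{\ell}_{M_1}^{(t)}$ for every $t\ge 0$ as required by Definition~\ref{def:mpnnweak}. Applying this to $\architecture_{\textsl{GNN}}\subseteq\architectureano$ yields $\architecture_{\textsl{GNN}}\preceq\architectureano$. Finally, Proposition~\ref{pro:eqstrongWL} gives $\architectureano\preceq\architectureWL$, and since $\preceq$ is transitive on classes (each witness can be composed), I conclude $\architecture_{\textsl{GNN}}\preceq\architectureWL$.

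There is no substantial obstacle here: the entire content is packaged in Example~\ref{ex:GNN} and Proposition~\ref{pro:eqstrongWL}, both of which I may assume. The only point requiring a moment's care is the auxiliary lemma that containment implies $\preceq$ together with transitivity of $\preceq$; these are routine but should be stated explicitly, since the ordering $\preceq$ on classes quantifies existentially over witnesses and one must check that the witnesses compose. Both facts follow immediately from reflexivity and transitivity of the underlying coarsening relation $\sqsubseteq$ on labellings.
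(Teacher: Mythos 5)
Your proposal is correct and matches the paper's own argument exactly: the paper also derives the corollary by observing that $\architecture_{\textsl{GNN}}\subseteq\architectureano$ (via Example~\ref{ex:GNN}) and then chaining with Proposition~\ref{pro:eqstrongWL}. Your explicit verification that containment implies $\preceq$ (via reflexivity of $\sqsubseteq$) and that $\preceq$ is transitive (via transitivity of $\sqsubseteq$) is left implicit in the paper, but it is a faithful, slightly more careful rendering of the same proof.
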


More challenging is to show that $\architecture^{\textsl{sign}}_{\textsl{GNN}}$,
$\architecture^{\textsl{ReLU}}_{\textsl{GNN}}$ and $\architectureWL$, and thus also
$\architectureano$, are equally strong. The following results are known.

\begin{theorem}[\cite{grohewl}] \label{thm:grohe_lower}
(i)~The classes $\architecture_{\textsl{GNN}}^{\textsl{sign}}$ and $\architectureWL$
are equally strong. (ii)~The class $\architecture_{\textsl{GNN}}^{\textsl{ReLU}}$ is
weaker than $\architectureWL$, and $\architectureWL$ is weaker than
$\architecture_{\textsl{GNN}}^{\textsl{ReLU}}$, with a factor of two, i.e.,
$\architectureWL\preceq_{\times 2}\architecture_{\textsl{GNN}}^{\textsl{ReLU}}$.
\end{theorem}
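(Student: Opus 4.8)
The plan is to separate the two ``weaker than'' directions that reduce to earlier results from the two simulation directions, which require an explicit construction on the fixed graph $(G,\pmb{\nu})$. Since $\mathcal{M}_{\textsl{GNN}}^{\textsl{sign}}$ and $\mathcal{M}_{\textsl{GNN}}^{\textsl{ReLU}}$ are subclasses of $\architectureano$, Corollary~\ref{corr:GNNwANO} immediately gives that both are weaker than $\architectureWL$, which settles one half of~(i) and the first claim of~(ii). It then remains to show the reverse simulations: that $\architectureWL$ is weaker than $\mathcal{M}_{\textsl{GNN}}^{\textsl{sign}}$ (with $g(t)=t$) and weaker than $\mathcal{M}_{\textsl{GNN}}^{\textsl{ReLU}}$ with a linear factor of two.

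For both simulations I would maintain, by induction on the WL round $t$, the invariant that the GNN has produced a labelling that is an injective re-encoding of the WL colouring $\pmb{\ell}^{(t)}$, stored in a form from which the neighbour multiset is recoverable by summation. Because the graph is fixed, the WL colourings are known in advance, each round uses at most $n$ distinct colours, and neighbour-count vectors have entries bounded by $n$; hence all $\mathbf{W}_1^{(t)},\mathbf{W}_2^{(t)},\mathbf{b}^{(t)}$ may be hard-coded per round, with dimensions $s_t\le n$. The crux is the construction of a single scalar pre-feature: the neighbour aggregation $\sum_{u\in N_G(v)}\pmb{\ell}^{(t-1)}_u$ determines the colour-count vector, and a single linear functional with base-$(n+1)$ coefficients --- exactly as in the injective-sum argument of Example~\ref{ex:WL} --- yields a scalar $z_v$ that is injective on the occurring pairs (own colour, neighbour-count vector), which is precisely the information $\textsc{Hash}$ receives.

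For the sign case I would finish each round in \emph{one} layer: taking several copies of $z_v$ offset by thresholds $\theta_1<\dots<\theta_K$ that separate the finitely many values of $z_v$ and applying the sign activation produces a ``thermometer'' code whose $j$-th coordinate records whether $z_v$ exceeds $\theta_j$. This code is injective on the new WL colours, so $\pmb{\ell}^{(t)}_M\equiv\pmb{\ell}^{(t)}_{M_{\textsl{WL}}}$, and its neighbour-sum still determines the colour multiset: the $j$-th aggregated coordinate equals $2N_{\ge j}-d_v$, from which $d_v$ and all counts are recovered by (linear) differencing and thus fold back into the weights defining the next $z_v$. Hence the invariant propagates and only one sign layer is spent per WL round. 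For the ReLU case the same thresholding cannot simultaneously separate colours and yield an aggregation-friendly code in a single layer, which is exactly why a factor of two appears. I would use \emph{two} layers per WL round: the first computes the rectified thresholds $\max(0,z_v-\zeta_m)$ at the distinct values $\zeta_1<\dots<\zeta_K$ of $z_v$; the second, with its neighbour weight matrix set to zero so that it acts vertex-wise, takes second differences of these to form a triangular ``hat'' positive only at $\zeta_m$ and applies ReLU to sharpen it, recovering a one-hot encoding of the new colour. Two ReLU layers thus realise one WL step, giving $\architectureWL\preceq_{\times 2}\mathcal{M}_{\textsl{GNN}}^{\textsl{ReLU}}$, and the intermediate odd rounds cause no trouble since Corollary~\ref{corr:GNNwANO} already guarantees WL refines the GNN at every round.

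The main obstacle I anticipate is the propagation of the invariant across rounds rather than within a single step: one must verify that the chosen encoding (thermometer for sign, one-hot for ReLU) is such that a plain sum over neighbours still injectively represents the colour multiset, so that $z_v$ can be re-formed as an affine function of the previous label and the aggregate in the next round. Handling the base case --- re-encoding the initial labelling $\pmb{\nu}$ into this form by a preliminary thresholding layer --- is a minor additional technicality.
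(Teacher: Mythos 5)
The paper never proves Theorem~\ref{thm:grohe_lower}: it is imported verbatim from~\cite{grohewl}, and the closest in-paper analogue is the proof of Theorem~\ref{thm:equalstrong} (together with the remark that it adapts to sign via Lemma~\ref{lem:signlemma9}). Your handling of the two ``easy'' directions through Corollary~\ref{corr:GNNwANO} matches the paper exactly. For the simulation directions your route is genuinely different from the paper's: the paper maintains the invariant that labels are row-independent modulo equality, inverts $\mathsf{uniq}(\mathbf{L}^{(t-1)})$ by a matrix $\mathbf{U}^{(t)}$, and restores independence after the activation using Lemma~\ref{lem:signlemma9} (sign) or Lemma~\ref{lem:ReLUlemma9} (ReLU); you instead compress the pair (own colour, neighbour-count vector) into a single injective base-$(n+1)$ scalar, in the spirit of Example~\ref{ex:WL} and~\cite{xhlj19}, and decode it with a thermometer code (sign, one layer) or with hat functions obtained as second differences of rectified ramps (ReLU, two layers). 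In outline both work, your two-layer ReLU decoding honestly explains where the factor two in~\cite{grohewl} comes from, and the within-round bookkeeping (recovering counts linearly from summed thermometer codes via $2N_{\ge j}-d_v$, and from summed one-hot codes directly) is sound.

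There is, however, one genuine gap, and it is precisely the point you dismiss as ``a minor additional technicality'': the base case. Your induction needs the round-$0$ labels to be \emph{sum-injective}, i.e., a plain neighbour sum must determine the neighbour colour multiset. For an arbitrary initial labelling $\pmb{\nu}:V\to\mathbb{A}^{s_0}$ this fails: if four distinct labels satisfy $\mathbf{a}+\mathbf{b}=\mathbf{c}+\mathbf{d}$, two vertices with the same own label and neighbour multisets $\ldbl\mathbf{a},\mathbf{b}\rdbl$ and $\ldbl\mathbf{c},\mathbf{d}\rdbl$ get identical rows of $\mathbf{A}\mathbf{L}^{(0)}$, and since $\mathbf{A}\mathbf{L}^{(0)}\mathbf{W}_2^{(1)}=\bigl(\mathbf{A}\mathbf{L}^{(0)}\bigr)\mathbf{W}_2^{(1)}$, no choice of $\mathbf{W}_1^{(1)},\mathbf{W}_2^{(1)},\mathbf{b}^{(1)}$ can separate them in round~$1$, whereas WL does separate them. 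So part~(i), which by Definitions~\ref{def:mpnnweak} and~\ref{def:classesweak} demands round-for-round refinement, is simply false for arbitrary $\pmb{\nu}$. Your proposed patch---spending one GNN layer to re-encode $\pmb{\nu}$---does not repair this: it shifts the alignment by one round and only yields $\architectureWL\preceq_{+1}\architecture_{\textsl{GNN}}^{\textsl{sign}}$ (and $g(t)=2t+1$ on the ReLU side), not the $g(t)=t$, resp.\ $g(t)=2t$, claimed by the theorem. The correct repair is the assumption the paper itself states just before Theorem~\ref{thm:equalstrong}: the initial labelling is row-independent modulo equality (e.g., one-hot, which is the setting of~\cite{grohewl}). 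Under that assumption your invariant already holds at round~$0$, no preliminary layer is needed, and your argument goes through.
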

The reason for the factor of two in (ii) in Theorem~\ref{thm:grohe_lower} is due to
a simulation of the sign activation function by means of a two-fold application of
the ReLU function. We next show that this factor of two can be avoided. As a side
effect, we obtain a simpler aMPNN $M$ in $\architecture_{\textsl{GNN}}$, satisfying
$M _{\textsl{WL}} \preceq M$, than the one constructed in \cite{grohewl}. The proof
strategy is inspired by that of~\cite{grohewl}. Crucial in the proof is the notion
of row-independence modulo equality, which we define next.

\begin{definition}[Row-independence modulo equality]\label{def:label2}\
A labelling $\pmb{\ell}:V\to\mathbb{A}^s$ is \textit{row-independent modulo
equality} if the set of unique labels assigned by $\pmb{\ell}$ is linearly
independent. \hfill$\blacksquare$
\end{definition}

In what follows, we always assume that the initial labelling $\pmb{\nu}$ of $G$ is
row-independent modulo equality. One can always ensure this by extending the labels.

\begin{theorem}\label{thm:equalstrong}
The classes $\architecture_{\textsl{GNN}}^{\textsl{ReLU}}$ and $\architectureWL$ are
equally strong.
\end{theorem}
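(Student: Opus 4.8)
The plan is to establish the two relations $\architecture_{\textsl{GNN}}^{\textsl{ReLU}}\preceq \architectureWL$ and $\architectureWL\preceq \architecture_{\textsl{GNN}}^{\textsl{ReLU}}$ separately. The first is immediate: since $\architecture_{\textsl{GNN}}^{\textsl{ReLU}}$ is a subclass of $\architecture_{\textsl{GNN}}$, Corollary~\ref{corr:GNNwANO} already gives $\architecture_{\textsl{GNN}}^{\textsl{ReLU}}\preceq\architectureWL$. All the work goes into the converse. There I would fix an aMPNN $M_{\textsl{WL}}\in\architectureWL$ running for $T$ rounds and construct, round by round, an aMPNN $M\in\architecture_{\textsl{GNN}}^{\textsl{ReLU}}$ whose labelling is not merely coarser but in fact \emph{equivalent} to the WL labelling at every round, i.e. $\pmb{\ell}_M^{(t)}\equiv\pmb{\ell}_{M_{\textsl{WL}}}^{(t)}$ for all $0\le t\le T$. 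This clearly suffices for $\architectureWL\preceq\architecture_{\textsl{GNN}}^{\textsl{ReLU}}$.

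The construction proceeds by induction, maintaining the invariant that $\pmb{\ell}_M^{(t)}$ is row-independent modulo equality (Definition~\ref{def:label2}) and equivalent to $\pmb{\ell}_{M_{\textsl{WL}}}^{(t)}$. The base case $t=0$ holds since both labellings equal $\pmb{\nu}$, which is assumed row-independent modulo equality. For the inductive step, the key observation is that, because the distinct rows of $\mathbf{L}^{(t-1)}$ are linearly independent, the neighbourhood sum $\sum_{u\in N_G(v)}\pmb{\ell}^{(t-1)}_u=(\mathbf{A}\mathbf{L}^{(t-1)})_v$ records exactly the multiset $\ldbl\pmb{\ell}^{(t-1)}_u\st u\in N_G(v)\rdbl$: expanding the sum in the basis of distinct labels, the coefficients are precisely the neighbour multiplicities, which are recoverable by linear independence. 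Hence the pair $\bigl(\pmb{\ell}^{(t-1)}_v,(\mathbf{A}\mathbf{L}^{(t-1)})_v\bigr)$ has the same fibres over $V$ as the WL update, so it is equivalent to $\pmb{\ell}_{M_{\textsl{WL}}}^{(t)}$.

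It remains to realise this pair, in a single ReLU layer of the form~\eqref{eq:MPNN-GNN}, as a labelling that is again row-independent modulo equality. First I would collapse the pair to a scalar $c_v:=\pmb{\ell}^{(t-1)}_v\mathbf{w}_1+(\mathbf{A}\mathbf{L}^{(t-1)})_v\mathbf{w}_2$ for column vectors $\mathbf{w}_1,\mathbf{w}_2$ chosen so that $c_v$ still separates the WL classes; such a choice exists because a linear functional fails to separate finitely many distinct points only on a finite union of hyperplanes, and $\mathbb{A}$ is infinite. Let $c^{(1)}<\dots<c^{(k)}$ be the resulting distinct values, one per WL class. Taking $s_t=k$, setting every column of $\mathbf{W}_1^{(t)}$ equal to $\mathbf{w}_1$ and every column of $\mathbf{W}_2^{(t)}$ equal to $\mathbf{w}_2$, and choosing the bias $\mathbf{b}^{(t)}$ with $j$-th entry $-c^{(j-1)}$ (with $c^{(0)}$ any value below $c^{(1)}$), the $j$-th output coordinate at $v$ becomes $\operatorname{ReLU}(c_v-c^{(j-1)})$. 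Evaluated on the classes this yields the $k\times k$ matrix $R_{ij}=\max(0,c^{(i)}-c^{(j-1)})$, which is lower triangular with strictly positive diagonal entries $c^{(i)}-c^{(i-1)}$, hence invertible. Invertibility gives at once that distinct classes map to distinct (and nonnegative) rows, so $\pmb{\ell}_M^{(t)}\equiv\pmb{\ell}_{M_{\textsl{WL}}}^{(t)}$, and that these rows are linearly independent, restoring the invariant.

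I expect the main obstacle to be exactly this last step. Unlike the sign activation, which directly emits $\{0,1\}$-indicator vectors that are easy to keep linearly independent, ReLU has no saturation, so a naive thresholding would not obviously preserve row-independence modulo equality. The triangular ``staircase'' matrix $R$ is precisely what buys linear independence out of a \emph{single} ReLU layer, and thereby removes the factor of two incurred in~\cite{grohewl}, where the sign function is simulated by two ReLU applications per WL round.
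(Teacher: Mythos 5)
Your proof is correct, and its skeleton matches the paper's: the easy direction via Corollary~\ref{corr:GNNwANO}, and for the converse an induction maintaining equivalence with the WL labelling together with row-independence modulo equality, recovering neighbour multisets from sums via linear independence, and restoring linear independence after the ReLU by producing a triangular matrix. The technical realisation of the two key steps, however, is genuinely different. First, the paper deliberately restricts itself to networks with a \emph{single} weight matrix, coupling the vertex's own contribution to its neighbours' through a scalar $p\in(0,1)$ (update $\text{ReLU}(p\mathbf{x}\mathbf{W}^{(t)}+\mathbf{y}+\mathbf{b}^{(t)})$); it then routes through the pseudo-inverse $\mathbf{U}^{(t)}$ to obtain the ``one-hot plus counts'' labelling $\pmb{\mu}^{(t)}_v=p\mathbf{e}_{\rho(v)}+\sum_i|N_G(v,i)|\mathbf{e}_i$, and separation of WL classes is an explicit arithmetic argument: integer neighbour counts cannot absorb the fractional $p$. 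You instead exploit the full freedom of two independent weight matrices in~\eqref{eq:MPNN-GNN} and separate classes by a generic linear functional (hyperplane avoidance over an infinite field), which is simpler and makes the parameter $p$ unnecessary. Second, the paper's ReLU step is Lemma~\ref{lem:ReLUlemma9}: a \emph{uniform} bias $-q\mathbf{J}$, with $q$ the largest ratio below $1$, yielding an upper-triangular matrix of ratios; your staircase bias with $j$-th entry $-c^{(j-1)}$ (legitimate, since the bias row may vary across coordinates while being shared by all vertices) yields a lower-triangular matrix of differences. Your route is more self-contained, but the paper's extra discipline is exactly what buys its by-products: the simple single-matrix, uniform-bias class $\architecture_{\textsl{GNN}^-}$ of~\eqref{eq:GNNWL} with its interpretation~\eqref{GNN:plusp} (Corollary~\ref{cor:pluspstrongwl}), and the $\pmb{\mu}^{(t)}$/parameter-$p$ machinery that is reused in the degree-aware setting (Proposition~\ref{prop:indeed-wl-power}). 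Your construction, with two weight matrices and per-coordinate thresholds, proves the theorem as stated but would not directly yield those refinements.
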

\begin{proof}
We already know that $\architecture_{\textsl{GNN}}^{\textsl{ReLU}}$ is weaker than
$\architectureWL$ (Theorem~\ref{thm:grohe_lower} and also
Corollary~\ref{corr:GNNwANO}). It remains to show that $\architectureWL$ is weaker
than $\architecture_{\textsl{GNN}}^{\textsl{ReLU}}$. That is, given an aMPNN
$M_{\textsl{WL}}$, we need to construct an aMPNN $M$ in
$\architecture_{\textsl{GNN}}^{\textsl{ReLU}}$ such that
$\pmb{\ell}_{M}^{(t)}\sqsubseteq \pmb{\ell}_{M_{\textsl{WL}}}^{(t)}$, for all $t\geq
0$. We observe that since $\pmb{\ell}_{M_{\textsl{WL}}}^{(t)}\sqsubseteq
\pmb{\ell}_{M}^{(t)}$ for any $M$ in $\architecture_{\textsl{GNN}}^{\textsl{ReLU}}$,
this is equivalent to constructing an $M$ such that $\pmb{\ell}_{M}^{(t)}\equiv
\pmb{\ell}_{M_{\textsl{WL}}}^{(t)}$.

The proof is by induction on the number of computation rounds. The aMPNN $M$ in
$\architecture_{\textsl{GNN}}^{\textsl{ReLU}}$ that we will construct will use
message and update functions of the form:
\begin{equation}\textsc{Msg}^{(t)}\bigl(\mathbf{x},\mathbf{y},-,-):=\mathbf{y}\mathbf{W}^{(t)}
\text{ and } 
\textsc{Upd}^{(t)}(\mathbf{x},\mathbf{y}):=\text{ReLU}\left(p\mathbf{x}\mathbf{W}^{(t)}+\mathbf{y} + \mathbf{b}^{(t)}\right)
\end{equation}
for some value $p\in\mathbb{A}$, $0<p<1$, weight matrix
$\mathbf{W}^{(t)}\in\mathbb{A}^{s_{t-1}\times s_t}$, and bias vector
$\mathbf{b}^{(t)}\in\mathbb{A}^{s_t}$. Note that, in contrast to aMPNNs of the
form~(\ref{eq:MPNN-GNN}), we only have one weight matrix per round, instead of two,
at the cost of introducing an extra parameter $p\in\mathbb{A}$. Furthermore, the
aMPNN constructed in \cite{grohewl} uses two distinct weight matrices in
$\mathbb{A}^{(s_{t-1} + s_0)\times (s_t + s_0)}$ (we come back to this at the end of
this section) whereas our weight matrices are elements of $\mathbb{A}^{s_{t-1}\times
s_t}$ and thus of smaller dimension.

The induction hypothesis is that $\pmb{\ell}^{(t)}_M\equiv
\pmb{\ell}_{M_{\textsl{WL}}}^{(t)}$ and that $\pmb{\ell}^{(t)}_M$ is row-independent
modulo equality.

For $t=0$, we have that for any $M\in \architecture_{\textsl{GNN}}^{\textsl{ReLU}}$,
$\pmb{\ell}_M^{(0)}=\pmb{\ell}_{M_{\textsl{WL}}}^{(0)}:=\pmb{\nu}$, by definition.
Moreover, $\pmb{\ell}_M^{(0)}$ is row-independent modulo equality because
$\pmb{\nu}$ is so, by assumption.

We next assume that up to round $t-1$ we have found weight matrices and bias vectors
for $M$ such that $\pmb{\ell}_M^{(t-1)}$ satisfies the induction hypothesis. We will
show that for round $t$ we can find a weight matrix
$\mathbf{W}^{(t)}\in\mathbb{A}^{s_{t-1}\times s_t }$ and bias vector
$\mathbf{b}^{(t)}\in\mathbb{A}^{s_t}$ such that $\pmb{\ell}_M^{(t)}$ also satisfies
the hypothesis.

Let $\mathbf{L}^{(t-1)}\in\mathbb{A}^{n\times s_{t-1}}$ denote the matrix consisting
of rows $(\pmb{\ell}_M^{(t-1)})_v$, for $v\in V$. Moreover, we denote by
$\mathsf{uniq}(\mathbf{L}^{(t-1)})$ a $(m\times s_{t-1})$-matrix consisting of the
$m$ unique rows in $\mathbf{L}^{(t-1)}$ (the order of rows is irrelevant). We denote
the rows in $\mathsf{uniq}(\mathbf{L}^{(t-1)})$ by
$\mathbf{a}_1,\ldots,\mathbf{a}_m\in\mathbb{A}^{s_{t-1}}$. By the induction
hypothesis, these rows are linearly independent. Following the same argument as
in~\cite{grohewl}, this implies that there exists an $(s_{t-1}\times m)$-matrix
$\mathbf{U}^{(t)}$ such that
$\mathsf{uniq}(\mathbf{L}^{(t-1)})\mathbf{U}^{(t)}=\mathbf{I}$. Let us denote by
$\mathbf{e}_1,\ldots,\mathbf{e}_m\in\mathbb{A}^m$ the rows of $\mathbf{I}$. In other
words, in $\mathbf{e}_i$, all entries are zero except for entry $i$, which holds
value $1$.

We consider the following intermediate labelling $\pmb{\mu}^{(t)}:V\to\mathbb{A}^m$
defined by
\begin{equation}
v\mapsto \left((\mathbf{A}+p\mathbf{I})\mathbf{L}^{(t-1)}\mathbf{U}^{(t)}\right)_{v}.\label{eq:labelmu}
\end{equation}
We know that for every vertex $v$, $(\pmb{\ell}_M^{(t-1)})_v$ corresponds to a
unique row $\mathbf{a}_i$ in $\mathsf{uniq}(\mathbf{L}^{(t-1)})$. We denote the
index of this row by $\rho(v)$. More specifically,
$(\pmb{\ell}_M^{(t-1)})_v=\mathbf{a}_{\rho(v)}$. Let $N_G(v,i):=\{u \st u\in N_G(v),
\rho(v)=i\}$. That is, $N_G(v,i)$ consists of all neighbours $u$ of $v$ which are
labelled as $\mathbf{a}_i$ by $\pmb{\ell}_M^{(t-1)}$. It is now readily verified
that the label $\pmb{\mu}^{(t)}_v$ defined in~(\ref{eq:labelmu}) is of the form
\begin{equation}
\pmb{\mu}^{(t)}_v=
p\mathbf{e}_{\rho(v)} + \sum_{i=1}^m |N_G(v,i)|\mathbf{e}_i.  \label{eq:linearcomb}
\end{equation}
We clearly have that $\pmb{\ell}_{M_{\textsl{WL}}}^{(t)}\sqsubseteq\pmb{\mu}^{(t)}$. The converse also holds, as is shown in the following lemma.
\begin{lemma}
For any two vertices $v$ and $w$, we have that 
	$\pmb{\mu}^{(t)}_v=\pmb{\mu}^{(t)}_w$ implies 
	$(\pmb{\ell}_{M_{\textsl{WL}}}^{(t)})_v=(\pmb{\ell}_{M_{\textsl{WL}}}^{(t)})_w$.
\end{lemma}
\begin{proof}
We argue by contradiction. Suppose, for the sake of contradiction, that there exist two vertices $v,w\in V$ such that 
\begin{equation}
		\pmb{\mu}^{(t)}_{v}=\pmb{\mu}^{(t)}_{w} \text{ and } (\pmb{\ell}_{M_{\textsl{WL}}}^{(t)})_v\neq(\pmb{\ell}_{M_{\textsl{WL}}}^{(t)})_w \label{eq:contra-pwl}
\end{equation}
hold. We show that this is impossible for any value $p$ satisfying $0<p<1$.
(Recall from~\eqref{eq:linearcomb} that $\pmb{\mu}^{(t)}_v$ depends on $p$.)

We distinguish between the following two cases. If $ (\pmb{\ell}_{M_{\textsl{WL}}}^{(t)})_v\neq(\pmb{\ell}_{M_{\textsl{WL}}}^{(t)})_w$ then either
\begin{enumerate}[(i)]
\item \label{itm:case1} $(\pmb{\ell}_{M_{\textsl{WL}}}^{(t-1)})_v\neq(\pmb{\ell}_{M_{\textsl{WL}}}^{(t-1)})_w$; or 
\item \label{itm:case2}
$(\pmb{\ell}_{M_{\textsl{WL}}}^{(t-1)})_v=(\pmb{\ell}_{M_{\textsl{WL}}}^{(t-1)})_w$
and
	$
	\ldbl (\pmb{\ell}_{M_{\textsl{WL}}}^{(t-1)})_u \st u \in N_G(v) \rdbl\neq
	\ldbl(\pmb{\ell}_{M_{\textsl{WL}}}^{(t-1)})_u \st u \in N_G(w) \rdbl.
	$
\end{enumerate}

We first consider case~\ref{itm:case1}. Observe that
$(\pmb{\ell}_{M_{\textsl{WL}}}^{(t-1)})_v\neq(\pmb{\ell}_{M_{\textsl{WL}}}^{(t-1)})_w
$ implies that $(\pmb{\ell}^{(t-1)}_M)_{v}\neq (\pmb{\ell}_M^{(t-1)})_w$. This
follows from the induction hypothesis $\pmb{\ell}^{(t-1)}_M\equiv
\pmb{\ell}_{M_{\textsl{WL}}}^{(t-1)}$. It now suffices to observe that
$\pmb{\mu}^{(t)}_{v}=\pmb{\mu}^{(t)}_{w}$ implies that the corresponding linear
combinations, as described in~\eqref{eq:linearcomb}, satisfy:
$$
 p\mathbf{e}_{\rho(v)} + \sum_{i=1}^m |N_G(v,i)|\mathbf{e}_i =
 p\mathbf{e}_{\rho(w)} + 
\sum_{i=1}^m |N_G(w,i)|\mathbf{e}_i.
$$
We can assume, without loss of generality, that
$(\pmb{\ell}^{(t-1)}_M)_{v}=\mathbf{a}_1$ and
$(\pmb{\ell}^{(t-1)}_M)_{w}=\mathbf{a}_2$. Recall that $\mathbf{a}_1$ and
$\mathbf{a}_2$ are two distinct labels. Then, the previous equality implies:
\[
\left(|N_G(v,1)|+p-|N_G(w,1)|\right)\mathbf{e}_1+
\left(|N_G(v,2)|-|N_G(w,2)|-p\right)\mathbf{e}_2 +
\sum_{i=3}^m \left(|N_G(v,i)|-|N_G(w,i)|\right)\mathbf{e}_i=0.
\]
Since $\mathbf{e}_1,\ldots,\mathbf{e}_m$ are linearly independent, this implies that
$|N_G(v,i)|-|N_G(w,i)|=0$ for all $i=3,\ldots,m$ and $|N_G(v,1)|+p-|N_G(w,1)|=0$ and
$|N_G(v,2)|-|N_G(w,2)|-p=0$. Since $|N_G(v,1)|-|N_G(w,1)|\in\mathbb{Z}$ and $0<p<1$,
this is impossible. We may thus conclude that case~\ref{itm:case1} cannot occur.

Suppose next that we are in case~\ref{itm:case2}. Recall that for
case~\ref{itm:case2}, we have that
$(\pmb{\ell}{}_{M_{\textsl{WL}}}^{(t-1)})_v=(\pmb{\ell}{}_{M_{\textsl{WL}}}^{(t-1)})_
w$ and thus also $(\pmb{\ell}{}_M^{(t-1)})_v=(\pmb{\ell}{}_M^{(t-1)})_w$. Using the
same notation as above, we may assume that
$(\pmb{\ell}{}_M^{(t-1)})_v=(\pmb{\ell}{}_M^{(t-1)})_w=\mathbf{a}_1$. In
case~\ref{itm:case2}, however, we have that
	$
	\ldbl (\pmb{\ell}{}_{M_{\textsl{WL}}}^{(t-1)})_{u} \st u \in N_G(v) \rdbl\neq
	\ldbl (\pmb{\ell}{}_{M_{\textsl{WL}}}^{(t-1)})_{u} \st u \in N_G(w) \rdbl
	$ and thus also 
	$
	\ldbl (\pmb{\ell}{}_{M}^{(t-1)})_{u} \st u \in N_G(v) \rdbl\neq
	\ldbl (\pmb{\ell}{}_{M}^{(t-1)})_{u} \st u \in N_G(w) \rdbl
	$.
That is, there must exist a label  assigned by $\pmb{\ell}{}_M^{(t-1)}$ that does not
occur the same number of times in the neighbourhoods of $v$ and $w$, respectively.
Suppose that this label is $\mathbf{a}_2$. The case when this label is
$\mathbf{a}_1$ can be treated similarly. It now suffices to observe that
$\pmb{\mu}^{(t)}_{v}=\pmb{\mu}^{(t)}_{w}$ implies that the corresponding linear
combinations, as described in~(\ref{eq:linearcomb}), satisfy:
$$
\left(|N_G(v,1)|+p\right)\mathbf{e}_1 +|N_G(v,2)|\mathbf{e}_2+\sum_{i=3}^m |N_G(v,i)|\mathbf{e}_i=
\left(|N_G(w,1)|+p\right)\mathbf{e}_1 +|N_G(w,2)|\mathbf{e}_2+\sum_{i=3}^m |N_G(w,i)|\mathbf{e}_i.
$$
Using a similar argument as before, based on the linear independence of
$\mathbf{e}_1,\ldots,\mathbf{e}_m$, we can infer that $|N_G(v,2)|=|N_G(w,2)|$. We
note, however, that $\mathbf{a}_2$ appeared a different number of times among the
neighbours of $v$ and $w$. Hence, also case~\ref{itm:case2} is ruled out and our
assumption~\eqref{eq:contra-pwl} is invalid. This implies
$\pmb{\mu}^{(t)}\sqsubseteq\pmb{\ell}_{M_{\textsl{WL}}}^{(t)}$, as desired and thus
concludes the proof of the lemma.
\end{proof}
From here, to continue with the proof of Theorem~\ref{thm:equalstrong}, we still
need to take care of the ReLU activation function. Importantly, its application
should ensure row-independence modulo equality and make sure the labelling
``refines'' $\pmb{\ell}^{(t)}_{M_{\textsl{WL}}}$. To do so, we again follow closely
the proof strategy of~\cite{grohewl}. More specifically, we will need an analogue of
the following result. In the sequel we denote by $\mathbf{J}$ a matrix with all
entries having value $1$ and whose size will be determined from the context.

\begin{lemma}[Lemma 9 from~\cite{grohewl}]\label{lem:signlemma9}
  Let $\mathbf{C}\in \mathbb{A}^{m\times w}$ be a matrix in which all entries are
  non-negative and all rows are pairwise disjoint. Then there exists a matrix
  $\mathbf{X}\in\mathbb{A}^{w\times m}$ such that $\text{\normalfont
  sign}(\mathbf{CX}-\mathbf{J})$ is a non-singular matrix in $\mathbb{A}^{m\times
  m}$.
\end{lemma}

We prove the following for the ReLU function.
\begin{lemma}\label{lem:ReLUlemma9}
  Let $\mathbf{C}\in \mathbb{A}^{m\times w}$ be a matrix in which all entries are
  non-negative, all rows are pairwise disjoint and such that no row consists
  entirely out of zeroes\footnote{Compared to Lemma~\ref{lem:signlemma9}, we
  additionally require non-zero rows.}. Then there exists a matrix
  $\mathbf{X}\in\mathbb{A}^{w\times m}$ and a constant $q\in\mathbb{A}$ such that
  $\text{\normalfont ReLU}(\mathbf{CX}-q\mathbf{J})$ is a non-singular matrix in
  $\mathbb{A}^{m\times m}$.
\end{lemma}
\begin{proof}
Let $C$ be the maximal entry in $\mathbf{C}$ and consider the column vector
$\mathbf{z}=(1,C,C^2,\ldots,C^{w-1})^{\textsc{t}}\in\mathbb{A}^{w\times 1}$. Then
each entry in $\mathbf{c}=\mathbf{C}\mathbf{z}\in\mathbb{A}^{m\times 1}$ is positive
and all entries in $\mathbf{c}$ are pairwise distinct. Let $\mathbf{P}$ be a
permutation matrix in $\mathbb{A}^{m\times m}$ such that
$\mathbf{c}'=\mathbf{P}\mathbf{c}$ is such that
$\mathbf{c}'=(c_1',c_2',\ldots,c_m')^{\textsc{t}}\in\mathbb{A}^{m\times 1}$ with
$c_1'> c_2'>\cdots > c_m'>0$. Consider
$\mathbf{x}=\left(\frac{1}{c_1'},\ldots,\frac{1}{c_m'}\right)\in \mathbb{A}^{1\times
m}$. Then, for $\mathbf{E}=\mathbf{c}'\mathbf{x}\in\mathbb{A}^{m\times m}$
$$
\mathbf{E}_{ij}=\frac{c_i'}{c_j'}  \text{ and } \mathbf{E}_{ij}=\begin{cases}  1 & \text{if $i=j$}\\
>1 & \text{if $i<j$}\\
< 1 & \text{if $i>j$}.
\end{cases}
$$
Let $q$ be the greatest value in $\mathbf{E}$ smaller than $1$. Consider
$\mathbf{F}=\mathbf{E}- q\mathbf{J}$. Then,
$$
\mathbf{F}_{ij}=\frac{c_i'}{c_j'}- q \text{ and } \mathbf{F}_{ij}=\begin{cases}  1-q & \text{if $i=j$} \\
> 0 & \text{if $i<j$}\\
\leq 0  & \text{if $i>j$}.
\end{cases}
$$
As a consequence,
$$
\text{ReLU}(\mathbf{F})_{ij}=\begin{cases}  1-q & \text{if $i=j$}\\
>0 & \text{if $i<j$}\\
0  & \text{if $i>j$}.
\end{cases}
$$
This is an upper triangular matrix with (nonzero) value $1-q$ on its diagonal. It is
therefore non-singular.

We now observe that
$\mathbf{Q}\text{ReLU}(\mathbf{F})=\text{ReLU}(\mathbf{Q}\mathbf{F})$ for any row
permutation $\mathbf{Q}$. Furthermore, non-singularity is preserved under row
permutations and $\mathbf{Q}\mathbf{J}=\mathbf{J}$. Hence, if we define
$\mathbf{X}=\mathbf{z}\mathbf{x}$ and use the permutation matrix $\mathbf{P}$, then:
\begin{align*}
\mathbf{P}\text{ReLU}(\mathbf{C}\mathbf{X}-q\mathbf{J})&=
\text{ReLU}(\mathbf{P}\mathbf{C}\mathbf{z}\mathbf{x}-q\mathbf{P}\mathbf{J})=\text{ReLU}(\mathbf{E}-q\mathbf{J}) = \text{ReLU}(\mathbf{F}),
\end{align*}
and we have that $\text{ReLU}(\mathbf{C}\mathbf{X}-q\mathbf{J})$ is non-singular, as
desired. This concludes the proof of the lemma.
\end{proof}

We now apply this lemma to the matrix $\mathsf{uniq}(\mathbf{M}^{(t)})$, with
$\mathbf{M}^{(t)}\in\mathbb{A}^{n\times m}$ consisting of the rows
$\pmb{\mu}^{(t)}_v$, for $v\in V$. Inspecting the expression from
Equation~\eqref{eq:linearcomb} for $\pmb{\mu}^{(t)}_v$ we see that each row in
$\mathbf{M}^{(t)}$ holds non-negative values and no row consists entirely out of
zeroes. Let $\mathbf{X}^{(t)}$ and $q^{(t)}$ be the matrix and constant returned by
Lemma~\ref{lem:ReLUlemma9} such that
$\text{ReLU}\left(\mathsf{uniq}(\mathbf{M}^{(t)})\mathbf{X}^{(t)}-q^{(t)}\mathbf{J}\right)$ is an $m\times m$ non-singular matrix. We now define
$$
\pmb{\ell}_M^{(t)}:=\text{ReLU}\left(\mathbf{M}^{(t)}\mathbf{X}^{(t)}-q^{(t)}\mathbf{J}\right).$$
From the non-singularity of
$\text{ReLU}\left(\mathsf{uniq}(\mathbf{M}^{(t)})\mathbf{X}^{(t)}-q^{(t)}\mathbf{J}
\right)$ we can immediately infer that $\pmb{\ell}_M^{(t)}$ is row-independent modulo
equality. It remains to argue that
$\pmb{\ell}_M^{(t)}\equiv\pmb{\ell}_{M_{\textsl{WL}}}^{(t)}$. This now follows from
the fact that $\pmb{\mu}^{(t)}\equiv \pmb{\ell}_{M_{\textsl{WL}}}^{(t)}$ and each of
the $m$ unique labels assigned by $\pmb{\mu}^{(t)}$ uniquely corresponds to a row in
$\mathsf{uniq}(\mathbf{M}^{(t)})$, which in turn can be mapped bijectively to a row
in
$\text{ReLU}\left(\mathsf{uniq}(\mathbf{M}^{(t)})\mathbf{X}^{(t)}-q^{(t)}\mathbf{J}
\right)$. We conclude by observing that the desired weight matrices and bias vector
at round $t$ for $M$ are now given by
$\mathbf{W}^{(t)}:=\mathbf{U}^{(t)}\mathbf{X}^{(t)}$
and $\mathbf{b}^{(t)}:=-q^{(t)}\mathbf{1}$. This concludes the proof of Theorem~\ref{thm:equalstrong}.
\end{proof}

We remark that the previous proof can be used for
$\architecture_{\textsl{GNN}}^{\textsl{sign}}$ as well. One just has to use
Lemma~\ref{lem:signlemma9} instead of Lemma~\ref{lem:ReLUlemma9}. It is interesting
to note that the bias vector for the sign activation function in
Lemma~\ref{lem:signlemma9} is the same for every $t$. A similar statement holds for
the ReLU function. Indeed, we recall that we apply Lemma~\ref{lem:ReLUlemma9} to
$\mathsf{uniq}(\mathbf{M}^{(t)})$. For every $t$, the entries in this matrix are of
the form $i+p$ (which is smaller than $i+1$) or $i$, for $i\in \{1,2,\dots,n\}$.
Hence, for every $t$, the maximal entry (denoted by $C$ in the proof of
Lemma~\ref{lem:ReLUlemma9}) is upper bounded by $n+1$. The value $q^{(t)}$ relates
to the largest possible ratios, smaller than $1$, of elements in the matrix
constructed in Lemma~\ref{lem:ReLUlemma9}.

When the lemma is applied to an $m \times w$ matrix, this ratio is upper bounded by
$\frac{(n+1)^w-1}{(n+1)^w}$. Note that, since the lemma is applied to matrices
arising from $\pmb{\mu}^{(t)}$, $w$ will always be at most $n$. Hence, taking any
$q^{(t)}:=q$ for $\frac{(n+1)^n-1}{(n+1)^n}<q<1$ suffices. We can take $q$ to be
arbitrarily close to $1$, but not $1$ itself.

We can thus strengthen Theorem~\ref{thm:grohe_lower}, as follows. We denote by
$\architecture_{\textsl{GNN}^-}$ the class of aMPNNs using message and update
functions of the form:
\begin{equation}\textsc{Msg}^{(t)}\bigl(\mathbf{x},\mathbf{y},-,-):=\mathbf{y}\mathbf{W}^{(t)}
\text{ and } 
\textsc{Upd}^{(t)}(\mathbf{x},\mathbf{y}):=\sigma\left(p\mathbf{x}\mathbf{W}^{(t)}+\mathbf{y} -q \mathbf{1}\right), \label{eq:GNNWL}
\end{equation}
parameterised with values $p,q\in\mathbb{A}$, $0\leq p,q\leq 1$ and weight matrices
$\mathbf{W}^{(t)}\in\mathbb{A}^{s_{t-1}\times s_t}$, and where $\sigma$ can be
either the sign or ReLU function.
\begin{corollary}\label{cor:pluspstrongwl}
The class $\architecture_{\textsl{GNN}^-}$ is equally strong as $\architecture_{\textsl{GNN}}$ and is equally strong as $\architectureWL$.\qed
\end{corollary}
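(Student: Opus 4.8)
The plan is to sandwich $\architecture_{\textsl{GNN}^-}$ between $\architectureWL$ and $\architecture_{\textsl{GNN}}$ by chaining $\preceq$-relations that are already available, so that no genuinely new simulation argument is needed; the corollary is essentially a repackaging of Theorem~\ref{thm:equalstrong} together with the remark that follows it.

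First I would dispose of the easy containment $\architecture_{\textsl{GNN}^-}\preceq\architecture_{\textsl{GNN}}$. Every aMPNN in $\architecture_{\textsl{GNN}^-}$ uses, in round $t$, the update $\sigma(p\mathbf{x}\mathbf{W}^{(t)}+\mathbf{y}-q\mathbf{1})$ from \eqref{eq:GNNWL}, which is literally the update in \eqref{eq:MPNN-GNN} once we set $\mathbf{W}_1^{(t)}:=p\mathbf{W}^{(t)}$, $\mathbf{W}_2^{(t)}:=\mathbf{W}^{(t)}$ and $\mathbf{b}^{(t)}:=-q\mathbf{1}$. Since $\sigma$ is either sign or ReLU, the resulting MPNN lies in $\architecture^{\textsl{sign}}_{\textsl{GNN}}$ or $\architecture^{\textsl{ReLU}}_{\textsl{GNN}}$, hence in $\architecture_{\textsl{GNN}}$. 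Thus $\architecture_{\textsl{GNN}^-}\subseteq\architecture_{\textsl{GNN}}$ and in particular $\architecture_{\textsl{GNN}^-}\preceq\architecture_{\textsl{GNN}}$. Combined with Corollary~\ref{corr:GNNwANO}, which already gives $\architecture_{\textsl{GNN}}\preceq\architectureWL$, this settles one side of the claimed equalities.

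The substantive direction is $\architectureWL\preceq\architecture_{\textsl{GNN}^-}$, and here I would simply reread the witness produced in the proof of Theorem~\ref{thm:equalstrong}. Given $M_{\textsl{WL}}$, that proof constructs an aMPNN $M$ whose message function is $\mathbf{y}\mathbf{W}^{(t)}$ with $\mathbf{W}^{(t)}=\mathbf{U}^{(t)}\mathbf{X}^{(t)}$ and whose update function is $\text{ReLU}(p\mathbf{x}\mathbf{W}^{(t)}+\mathbf{y}-q^{(t)}\mathbf{1})$ with $0<p<1$. This is already of the shape \eqref{eq:GNNWL}, except that the bias constant $q^{(t)}$ a priori varies with the round. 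The key observation, supplied by the remark following Theorem~\ref{thm:equalstrong}, is that the entries of $\mathsf{uniq}(\mathbf{M}^{(t)})$ are always bounded by $n+1$ and the relevant width never exceeds $n$, so a single constant $q$ with $\frac{(n+1)^n-1}{(n+1)^n}<q<1$ works for every round simultaneously. With this uniform $q\in(0,1)$ and the admissible $p\in(0,1)$, the witness $M$ falls squarely into $\architecture_{\textsl{GNN}^-}$, and $\pmb{\ell}_M^{(t)}\equiv\pmb{\ell}_{M_{\textsl{WL}}}^{(t)}$ yields $\architectureWL\preceq\architecture_{\textsl{GNN}^-}$. For the sign activation one instead invokes Lemma~\ref{lem:signlemma9}, whose bias is already round-independent.

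Finally I would chain the three relations $\architecture_{\textsl{GNN}^-}\preceq\architecture_{\textsl{GNN}}\preceq\architectureWL\preceq\architecture_{\textsl{GNN}^-}$ and use transitivity of $\preceq$ to conclude that $\architecture_{\textsl{GNN}^-}$, $\architecture_{\textsl{GNN}}$ and $\architectureWL$ are all equally strong. I expect the only real obstacle to be bookkeeping rather than fresh mathematics: one must verify carefully that the witness of Theorem~\ref{thm:equalstrong} genuinely respects the syntactic constraints of \eqref{eq:GNNWL}, in particular that $q$ may be chosen uniformly in $t$ and that the required values satisfy $0\le p,q\le 1$.
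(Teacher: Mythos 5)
Your proposal is correct and follows exactly the route the paper intends: the corollary carries a \qed precisely because it is the chain $\architecture_{\textsl{GNN}^-}\subseteq\architecture_{\textsl{GNN}}\preceq\architectureWL$ (via Corollary~\ref{corr:GNNwANO}) combined with the observation that the witness built in Theorem~\ref{thm:equalstrong} already has the shape~\eqref{eq:GNNWL}, using the remark after that theorem to pick a single $q$ with $\frac{(n+1)^n-1}{(n+1)^n}<q<1$ uniformly over all rounds (and Lemma~\ref{lem:signlemma9}, whose bias is round-independent, for the sign case). Your bookkeeping of the constraints $0\le p,q\le 1$ and the appeal to transitivity of $\preceq$ match the paper's implicit argument, so there is nothing to add.
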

We remark that the factor two, needed for the ReLU activation function in
Theorem~\ref{thm:grohe_lower}, has been eliminated. Phrased in terms of graph neural
networks, an aMPNN in $\architecture_{\textsl{GNN}^-}$ is of the form
\begin{equation}
\mathbf{L}^{(t)}=\sigma\left((\mathbf{A}+p\mathbf{I})\mathbf{L}^{(t-1)}\mathbf{W}^{(t)}-q\mathbf{J}\right), \label{GNN:plusp}
\end{equation}
and, thus, these suffice to implement the WL algorithm. It would be interesting to
see how graph neural networks defined by~(\ref{GNN:plusp}), with learnable
parameters $p$ and $q$, perform in practice.
In contrast, if one inspects the proof in ~\cite[pg. 14, Appendix]{grohewl}, even
for the sign activation function, the graph neural network given to implement the WL
algorithm has the more complicated form:
$$
\left(\mathbf{L}^{(0)},\mathbf{L}^{(t)}\right):=\sigma\left(\left(\mathbf{L}^{(0)},\mathbf{L}^{(t-1)}\right)\begin{pmatrix}
\mathbf{I} & \mathbf{0}\\
\mathbf{0} & \mathbf{0}\end{pmatrix}
+\mathbf{A}\left(\mathbf{L}^{(0)},\mathbf{L}^{(t-1)}\right)
\begin{pmatrix}
\mathbf{0} & \mathbf{0}\\
\mathbf{0} & \mathbf{W}^{(t)}\end{pmatrix}-
\left(\mathbf{0}, \mathbf{J}\right)
\right).
$$
We thus have obtained a simpler class of aMPNNs, $\architecture_{\textsl{GNN}^-}$,
which is equally strong as $\architectureWL$. We will see in the next section that
the parameter $p$ also plays an important role for degree-aware aMPNNs.

\begin{table}
\hrule
\hspace*{1ex}
 \caption{Various graph neural network formalisms, as reported in e.g.,\cite{kipf-loose,Wu2019,DBLP:journals/corr/abs-1905-03046}, which correspond to degree-aware MPNNs. We implicitly assume the presence of a  bias matrix $\mathbf{B}^{(t)}$ consisting of copies of the same row $\mathbf{b}^{(t)}$.}
    \label{tab:dMPNNs}
    \centering
    \begin{tabular}{ll}
dGNN$_1$: &
 $\mathbf{L}^{(t)}:=\sigma\left(\mathbf{D}^{-1}\mathbf{A}\mathbf{L}^{(t-1)}\mathbf{W}^{(t)}\right)$ \\
dGNN$_2$: &
$\mathbf{L}^{(t)}:=\sigma\left(\mathbf{D}^{\nicefrac{-1}{2}}\mathbf{A}\mathbf{D}^{\nicefrac{-1}{2}}\mathbf{L}^{(t-1)}\mathbf{W}^{(t)}\right)$\\
dGNN$_3$: &
$
\mathbf{L}^{(t)}:=\sigma\left((\mathbf{D}+\mathbf{I})^{-1}(\mathbf{A}+\mathbf{I})\mathbf{L}^{(t-1)}\mathbf{W}^{(t)}\right)$ \\
dGNN$_4$: &
$
\mathbf{L}^{(t)}:=\sigma\Bigl(\bigl(\mathbf{D}+\mathbf{I}\bigr)^{\nicefrac{-1}{2}} (\mathbf{A}+\mathbf{I})\bigl(\mathbf{D}+\mathbf{I}\bigr)^{\nicefrac{-1}{2}} \mathbf{L}^{(t-1)}\mathbf{W}^{(t)}\Bigr)
$ \\
dGNN$_5$: &
$
\mathbf{L}^{(t)}:=\sigma\left((\mathbf{D}^{\nicefrac{-1}{2}}\mathbf{A}\mathbf{D}^{\nicefrac{-1}{2}}+\mathbf{I})\mathbf{L}^{(t-1)}\mathbf{W}^{(t)}\right)$\\
dGNN$_6$: &
$\mathbf{L}^{(t)}:=\sigma\left((r\mathbf{I}+(1-r)\mathbf{D})^{\nicefrac{-1}{2}}(\mathbf{A}+p\mathbf{I})(r\mathbf{I}+(1-r)\mathbf{D})^{\nicefrac{-1}{2}}\mathbf{L}^{(t-1)}\mathbf{W}^{(t)}\right)$ 
    \end{tabular}
\hspace*{1ex}
\hrule
\end{table}

\section{The distinguishing power of degree-aware MPNNs}\label{sec:dMPNNs}
In this section we compare various classes of degree-aware MPNNs in terms of their
distinguishing power. We recall that degree-aware MPNNs (dMPNNs for short) have
message functions that depend on the labels and degrees of vertices. To compare
these classes we use Definition~\ref{def:classesweak} and also
Definition~\ref{def:classg}. In the latter definition we will be interested in the
function $g(n) = n+1$. That is, when comparing classes of dMPNNs we consider the
notions of being weaker or stronger with $1$ step ahead.

We will also compare degree-aware MPNNs with anonymous MPNNs. Recall that by
Theorem~\ref{thm:anonymous} all classes of anonymous MPNNs considered in
Section~\ref{sec:anonymous} are equivalent for $\equiv$. In particular, they are all
equivalent to the class $\mathcal{M}_{\textsl{WL}}$. Therefore, instead of comparing
a class $\mathcal{M}$ of dMPNNs with all classes considered in
Section~\ref{sec:anonymous} it suffices to compare it with
$\mathcal{M}_{\textsl{WL}}$. For example, if $\architectureWL \preceq_g
\architecture$ then the same relationship to $\architecture$ holds for all classes
in Section~\ref{sec:anonymous} that are equivalent to $\architectureWL$. Similarly,
for when $\architecture \preceq_g \architectureWL$ holds.

Quintessential examples of degree-aware MPNNs are the popular graph convolutional
networks, as introduced by~\cite{kipf-loose}. These are of the form:
$$
\mathbf{L}^{(t)}:=\sigma\Bigl(\bigl(\mathbf{D}+\mathbf{I}\bigr)^{-1/2} (\mathbf{A}+\mathbf{I})\bigl(\mathbf{D}+\mathbf{I}\bigr)^{-1/2} \mathbf{L}^{(t-1)}\mathbf{W}^{(t)}\Bigr),
$$
as already described and phrased as dMPNNs in Example~\ref{ex:KipfasMPNN}. In fact,
many commonly used graph neural networks use degree information. We list a couple of
such formalisms in Table~\ref{tab:dMPNNs}. It is easily verified that these can all
be cast as dMPNNs along the same lines as Example~\ref{ex:KipfasMPNN}. We detail
this later in this section.

We consider the following classes of dMPNNs. First, we recall that
$\architecture_{\textsl{deg}}$ is the class of degree-aware MPNNs. Furthermore, for
$i\in \{1,2,\dots,6\}$, we define $\architecture_{\textsl{dGNN}_i}$ as the class of
dMPNNs originating from a GNN of the form dGNN$_i$, from Table~\ref{tab:dMPNNs}, by
varying the weight matrices $\mathbf{W}^{(t)}$ and, when applicable, the bias
$\mathbf{B}^{(t)}$ and parameters $p$, $r$. The following is our main result for
this section.
\begin{theorem}\label{thm:dmpnn}
For the class of degree-aware MPNNs:
\begin{enumerate}
 \item $\architectureWL \preceq \architecture_{\textsl{deg}}$ and $\architecture_{\textsl{deg}} \not \preceq \architectureWL$;
 \item $\architecture_{\textsl{deg}} \preceq_{+1} \architectureWL$.
\end{enumerate}
For the architectures from Table~\ref{tab:dMPNNs}:
\begin{enumerate}
 \item[3.] $\architecture_{\textsl{dGNN}_i} \not \preceq \architectureWL$ for $i = 2,4,5,6$ and $\architecture_{\textsl{dGNN}_i} \preceq \architectureWL$ for $i = 1,3$;
 \item[4.] $\architectureWL \not \preceq \architecture_{\textsl{dGNN}_i}$ for $1 \le i \le 5$ and $\architectureWL \preceq \architecture_{\textsl{dGNN}_6}$.
\end{enumerate}
\end{theorem}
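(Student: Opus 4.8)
The plan is to dispatch the four parts in order, in each case reusing the inductive bijection argument from the proof of Proposition~\ref{pro:eqstrongWL} and augmenting it with bookkeeping for degrees. The inclusion $\architectureWL\preceq\architecture_{\textsl{deg}}$ in part~1 is immediate, since $M_{\textsl{WL}}\in\architectureano\subseteq\architecture_{\textsl{deg}}$ and $M_{\textsl{WL}}\preceq M_{\textsl{WL}}$. The core of parts~1 and~2 is the one-step-ahead upper bound $\architecture_{\textsl{deg}}\preceq_{+1}\architectureWL$. I would first record the elementary fact that, for every $t\ge 1$, the label $(\pmb{\ell}_{M_{\textsl{WL}}}^{(t)})_v$ determines the degree $d_v$: the round-$1$ WL label injectively encodes the neighbour multiset $\ldbl\pmb{\nu}_u\st u\in N_G(v)\rdbl$, whose size is $d_v$, and every later WL label refines the round-$1$ one. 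I would then prove by induction on $t$ that $\pmb{\ell}_{M_{\textsl{WL}}}^{(t+1)}\sqsubseteq\pmb{\ell}_M^{(t)}$ for all $M\in\architecture_{\textsl{deg}}$. The base case $t=0$ holds because $\pmb{\ell}_{M_{\textsl{WL}}}^{(1)}$ refines $\pmb{\nu}=\pmb{\ell}_M^{(0)}$. For the step ($t\ge 1$), equality of the round-$(t+1)$ WL labels at $v,w$ forces equality of their round-$t$ WL labels and a bijection $\beta:N_G(v)\to N_G(w)$ matching the round-$t$ WL labels of neighbours; the induction hypothesis converts this into equal $(\pmb{\ell}_M^{(t-1)})$-labels and a matching neighbour bijection for $M$, while the degree-encoding fact gives $d_v=d_w$ and $d_u=d_{\beta(u)}$. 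Every summand $\textsc{Msg}^{(t)}((\pmb{\ell}_M^{(t-1)})_v,(\pmb{\ell}_M^{(t-1)})_u,d_v,d_u)$ is therefore matched under $\beta$, so $\mathbf{m}^{(t)}_v=\mathbf{m}^{(t)}_w$ and hence $(\pmb{\ell}_M^{(t)})_v=(\pmb{\ell}_M^{(t)})_w$.

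The separation $\architecture_{\textsl{deg}}\not\preceq\architectureWL$ in part~1 needs a single witnessing graph exhibiting a genuine one-step advantage. I would take $\pmb{\nu}$ constant and $G$ the disjoint union of a triangle and a path on three vertices: a triangle vertex and the centre of the path both have degree~$2$, but neighbour-degree multisets $\ldbl 2,2\rdbl$ and $\ldbl 1,1\rdbl$. A degree-aware MPNN whose round-$1$ message records $d_u$ separates them after one round, whereas $(\pmb{\ell}_{M_{\textsl{WL}}}^{(1)})$ labels both alike (the neighbour-label multiset has size~$2$ in either case). Since the round-$1$ WL labelling is the same for every number of rounds, no WL MPNN with the same number of rounds can be stronger, giving the separation.

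For part~3, the split is governed by whether the normalisation consults neighbour degrees. For $i\in\{1,3\}$ the coefficient is the scalar $1/d_v$, resp.\ $1/(d_v{+}1)$, depending only on the central degree, which $\architectureWL$ already fixes at the \emph{same} round through the size of the neighbour multiset; rerunning the induction of Proposition~\ref{pro:eqstrongWL} and additionally noting $d_v=d_w$ whenever the round-$t$ WL labels agree yields $\pmb{\ell}_{M_{\textsl{WL}}}^{(t)}\sqsubseteq\pmb{\ell}_M^{(t)}$, i.e.\ $\architecture_{\textsl{dGNN}_i}\preceq\architectureWL$. For $i\in\{2,4,5,6\}$ the symmetric normalisation introduces a neighbour coefficient depending on $d_u$ (for $i=6$ take $r=0$), so the triangle-plus-path witness above, with a concrete weight such as $\mathbf{W}=\mathbf{I}$, again makes the round-$1$ labels differ and establishes $\architecture_{\textsl{dGNN}_i}\not\preceq\architectureWL$.

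Part~4 contains the positive simulation and the hardest separations. For $\architectureWL\preceq\architecture_{\textsl{dGNN}_6}$ I would specialise dGNN$_6$ to $r=1$, where $(r\mathbf{I}+(1-r)\mathbf{D})^{-1/2}=\mathbf{I}$ and the update reduces to $\sigma((\mathbf{A}+p\mathbf{I})\mathbf{L}^{(t-1)}\mathbf{W}^{(t)}+\mathbf{B}^{(t)})$, which is exactly the form~\eqref{GNN:plusp}; Corollary~\ref{cor:pluspstrongwl} then simulates WL. The statements $\architectureWL\not\preceq\architecture_{\textsl{dGNN}_i}$ for $1\le i\le 5$ must hold for \emph{every} weight matrix and activation, and this is where the main difficulty lies. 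My strategy is to choose graphs that neutralise the degree-dependence of the coefficients and then force an unavoidable collision: for $i\in\{1,2\}$ (no self-loop) take two vertices with identical neighbourhoods but distinct initial labels, so their round-$1$ pre-activations coincide for all $\mathbf{W}$ and all $\sigma$ while WL keeps them apart through the retained own label; for $i\in\{3,4,5\}$ take a regular graph, on which all normalisation factors become constants, and pick initial labels making $\text{(own)}+c\cdot\text{(neighbour sum)}$ equal at two vertices whose $(\text{own},\text{neighbour multiset})$ differ. The obstacle is precisely this last family: one must check that the collision is robust to every weight and activation (it is, because the pre-activation vectors are literally equal) and that the regular-graph restriction simultaneously removes both the $d_v$- and $d_u$-dependence of the self and neighbour coefficients of dGNN$_4$ and dGNN$_5$, whose two coefficients are genuinely distinct functions of the degree.
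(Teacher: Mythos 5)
Your proposal is correct, and on most parts it takes routes that are genuinely different from, but comparable in effort to, the paper's. For part~2 you prove $\architecture_{\textsl{deg}}\preceq_{+1}\architectureWL$ by a direct refinement induction, keyed to the observation that for $t\geq 1$ the round-$t$ WL label determines the degree (injectivity of \textsc{Hash} plus refinement); the paper instead builds an explicit \emph{anonymous} MPNN that spends round~$1$ computing and storing degrees (Lemma~\ref{lem:countdeg}) and thereafter simulates the given dMPNN with a one-round delay (Proposition~\ref{prop:onestep}), then appeals to $\architectureano\preceq\architectureWL$. Your argument is more direct; the paper's simulation is more modular and is reused: the same degree-storing trick yields Proposition~\ref{prop:dGNNc}, which is how the paper gets part~3's positive claims for dGNN$_1$ and dGNN$_3$ (their neighbour coefficient $h$ is constantly one, so they are equivalent to anonymous MPNNs), whereas you rerun the induction of Proposition~\ref{pro:eqstrongWL} with degree bookkeeping---both are sound. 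Your counterexamples are the same ideas as the paper's graphs (Figures~\ref{fig:graph1}, \ref{fig:graphG1}, \ref{fig:graphG2} and~\ref{fig:graphG3}): identical neighbourhoods with distinct labels for $i=1,2$, and pre-activation collisions on graphs where normalisation degenerates for $i=3,4,5$. In fact your single triangle-plus-path witness for all of $i=2,4,5,6$ is tidier than the paper's ``can be shown similarly'', and your regular-graph collision (a single edge already works) handles dGNN$_5$ more simply than the paper's two-star graph $G_3$.

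The one place where your route buys strictly less is $\architectureWL\preceq\architecture_{\textsl{dGNN}_6}$. Setting $r=1$ collapses $(r\mathbf{I}+(1-r)\mathbf{D})^{-1/2}$ to $\mathbf{I}$, reduces dGNN$_6$ to the form~\eqref{GNN:plusp}, and lets you invoke Corollary~\ref{cor:pluspstrongwl}. Since $\architecture_{\textsl{dGNN}_6}$ is defined by varying $p$ and $r$, and $r=1$ gives a positive, degree-determined diagonal, this is a legitimate member of the class, so the literal class-level statement is proved. But the paper's Proposition~\ref{prop:indeed-wl-power} proves more: the simulation works for \emph{every} fixed degree-determined $\mathbf{g}$ and $\mathbf{h}$, in particular for the genuine Kipf--Welling normalisation $(\mathbf{D}+\mathbf{I})^{-1/2}$, i.e., for Equation~\eqref{gnn:kipfp}. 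That requires adapting Theorem~\ref{thm:equalstrong} with two additional lemmas (row-independence modulo equality survives the diagonal scaling, Lemma~\ref{lem:indep}; and the parameter $p$ can be chosen to avoid collisions introduced by $g$, Lemma~\ref{lem:choose-p}). This stronger, per-normalisation statement is exactly what underwrites the paper's conclusion that adding the trade-off parameter $p$ to actual GCNs---keeping their degree normalisation intact---suffices to match the WL algorithm. Your shortcut, by degenerating the normalisation away, cannot support that interpretation; if you want it, you must follow the paper's adaptation rather than the $r=1$ specialisation.
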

We prove this theorem in the following subsections by providing the relationships that are summarised in Figure~\ref{fig:dmpnn}. 

\begin{figure}[t]
    \centering
    \begin{tikzpicture}[node distance=0.9cm]
        \node (lwl) {$\mathcal{M}_{\textsl{WL}}$};
        \node[right= of lwl] (deg) {$\architecture_{\textsl{deg}}$};
        \node[above right=1.2cm of deg] (rwl) {$\mathcal{M}_{\textsl{WL}}$};
        \node[below right=1.2cm of deg] (rwl2) {$\architecture_{\textsl{WL}}$};        
               
        \node[right = 3cm of deg] (blwl) {$\mathcal{M}_{\textsl{WL}}$};
        \node[above right= 1.5cm and 1.75cm of blwl] (gnn1) {$\mathcal{M}_{\textsl{dGNN}_1}, \mathcal{M}_{\textsl{dGNN}_3}$};
        \node[right= 2.3cm of blwl] (gnn6) {$\mathcal{M}_{\textsl{dGNN}_6}$};
        \node[below right= 1.5cm and 1cm of blwl] (gnn2) {$\mathcal{M}_{\textsl{dGNN}_2},\mathcal{M}_{\textsl{dGNN}_4},\mathcal{M}_{\textsl{dGNN}_5}$};
        \node[above right= 1.5cm and 1cm of gnn2] (brwl) {$\mathcal{M}_{\textsl{WL}}$};
               
        \path
        (lwl) edge[draw=none] node[label=above:\scriptsize{ Prop.~\ref{prop:notweaker}}] {$\preceq$} (deg)
        (deg) edge[draw=none] node[rotate=45,align=center]{\scriptsize{ Prop.~\ref{prop:notweaker}} \\ $\not \preceq$} (rwl)
        (deg) edge[draw=none] node[rotate=-45,align=center]{\scriptsize{ Prop.~\ref{prop:onestep}} \\ $\preceq_{+1}$} (rwl2)
        (blwl) edge[draw=none] node[above,rotate=45,align=center] {\scriptsize{ Prop.~\ref*{prop:notasstrong}} \\ $\not \preceq$} (gnn1)
        (blwl) edge[draw=none] node[label=above:\scriptsize{ Prop.~\ref{prop:indeed-wl-power}}] {$\preceq$} (gnn6)
        (blwl) edge[draw=none] node[below,rotate=-45,align=center] {\scriptsize{ Prop.~\ref*{prop:notasstrong}} \\ $\not \preceq$} (gnn2)
        (gnn1) edge[draw=none] node[above,rotate=-45,align=center] {\scriptsize{ Cor.~\ref*{ex:landr}} \\ $\preceq$} (brwl)
        (gnn6) edge[draw=none] node[label=above:\scriptsize{Prop.~\ref{prop:notweaker}}] {$\not \preceq$} (brwl)
        (gnn2) edge[draw=none] node[below,rotate=45,align=center] {\scriptsize{ Prop.~\ref*{prop:notweaker}} \\ $\not \preceq$} (brwl)
        ;
    \end{tikzpicture}
\caption{Summary of results comparing degree-aware MPNNs in Theorem~\ref{thm:dmpnn}. We note that Proposition~\ref{prop:notweaker} shows only $\mathcal{M}_{\textsl{dGNN}_4} \not \preceq \mathcal{M}_{\textsl{WL}}$, but $\mathcal{M}_{\textsl{dGNN}_2},\mathcal{M}_{\textsl{dGNN}_5},\mathcal{M}_{\textsl{dGNN}_6} \not \preceq \mathcal{M}_{\textsl{WL}}$ can be easily inferred from it.}\label{fig:dmpnn}
\end{figure}

\subsection{General degree-aware MPNNs}
We first focus on the relation between the WL
algorithm and dMPNNs in general. More specifically, we start with the first item in Theorem~\ref{thm:dmpnn}.
As part of the proof we show that $\mathcal{M}_{\textsl{dGNN}_4} \not \preceq \mathcal{M}_{\textsl{WL}}$. We can similarly show that 
$\mathcal{M}_{\textsl{dGNN}_2},\mathcal{M}_{\textsl{dGNN}_5},\mathcal{M}_{\textsl{dGNN}_6} \not \preceq \mathcal{M}_{\textsl{WL}}$, hereby also settling the first part of the third item in Theorem~\ref{thm:dmpnn}.

\begin{proposition}\label{prop:notweaker}
The class $\architectureWL$ is weaker than $\architecture_{\textsl{deg}}$; but the
class $\architecture_{\textsl{deg}}$ is not weaker than $\architectureWL$.
\end{proposition}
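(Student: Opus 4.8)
The plan is to handle the two directions separately: the inclusion $\architectureWL\preceq\architecture_{\textsl{deg}}$ is immediate, while the non-relation $\architecture_{\textsl{deg}}\not\preceq\architectureWL$ requires one explicit separating example. For the easy direction I would simply chain the inclusions $\architectureWL\subseteq\architectureano\subseteq\architecture_{\textsl{deg}}$ — the first because the WL algorithm was cast as an anonymous MPNN in Example~\ref{ex:WL}, and the second by definition of degree-aware MPNNs. Hence every $M_{\textsl{WL}}\in\architectureWL$ already lies in $\architecture_{\textsl{deg}}$ and, being trivially stronger than itself, witnesses $\architectureWL\preceq\architecture_{\textsl{deg}}$ on every input graph.

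The substance is $\architecture_{\textsl{deg}}\not\preceq\architectureWL$, which I would establish by exhibiting one labelled graph on which a degree-aware MPNN separates, after a single round, two vertices that WL colours identically after that round. Take $G$ to be the path on five vertices with edge set $\{\{1,2\},\{2,3\},\{3,4\},\{4,5\}\}$ and the \emph{constant} initial labelling $\pmb{\nu}$. The key WL observation is that, under a constant initialisation, the round-$1$ colour of a vertex $v$ depends only on $d_v$, since the multiset of neighbour labels is just the constant label with multiplicity $d_v$. Vertices $2$ and $3$ both have degree $2$, so $(\pmb{\ell}^{(1)}_{M_{\textsl{WL}}})_2=(\pmb{\ell}^{(1)}_{M_{\textsl{WL}}})_3$. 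What actually distinguishes them is their \emph{neighbour-degree} profile — vertex $2$ sees neighbours of degrees $\{1,2\}$ while vertex $3$ sees $\{2,2\}$ — which WL only accesses at round $2$ but which a degree-aware MPNN has from the start.

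To make this concrete, and to simultaneously obtain the promised $\architecture_{\textsl{dGNN}_4}\not\preceq\architectureWL$, I would use the Kipf--Welling GCN of Example~\ref{ex:KipfasMPNN}. Running it for one round on the constant labelling with the $1\times1$ identity weight $\mathbf{W}^{(1)}$ reduces, up to $\sigma$, to the row sums of the normalised operator $(\mathbf{D}+\mathbf{I})^{-1/2}(\mathbf{A}+\mathbf{I})(\mathbf{D}+\mathbf{I})^{-1/2}$. A short computation gives the value $\tfrac13+\tfrac{1}{\sqrt{3}}\bigl(\tfrac{1}{\sqrt{2}}+\tfrac{1}{\sqrt{3}}\bigr)=\tfrac23+\tfrac{1}{\sqrt{6}}$ at vertex $2$ and the value $1$ at vertex $3$; these are distinct and both positive, so applying $\sigma=\mathrm{ReLU}$ (or the identity) keeps them distinct. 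Thus $(\pmb{\ell}^{(1)}_M)_2\neq(\pmb{\ell}^{(1)}_M)_3$ whereas WL equates them, giving $\pmb{\ell}^{(1)}_{M_{\textsl{WL}}}\not\sqsubseteq\pmb{\ell}^{(1)}_M$ and hence $M\not\preceq M_{\textsl{WL}}$ for every $M_{\textsl{WL}}\in\architectureWL$ with the same number of rounds. Since $M\in\architecture_{\textsl{dGNN}_4}\subseteq\architecture_{\textsl{deg}}$, this refutes $\architecture_{\textsl{deg}}\preceq\architectureWL$.

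I expect the only real obstacle to be the choice of graph: because the constant initialisation forces WL's round-$1$ colour to coincide with the degree, the two separated vertices must have \emph{equal} degree yet \emph{differing} neighbour-degree multisets, which rules out the too-small or too-symmetric candidates ($P_4$, a triangle with a pendant, etc.) and is exactly why the five-vertex path is needed. Once the graph is fixed, the one genuine (though short) calculation is verifying that the symmetric normalisation really yields distinct row sums rather than accidentally collapsing the two vertices; everything else is bookkeeping from the definitions and the inclusion $\architecture_{\textsl{dGNN}_4}\subseteq\architecture_{\textsl{deg}}$.
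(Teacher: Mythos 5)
Your proposal is correct and takes essentially the same approach as the paper: the easy direction via the inclusions $\architectureWL \subseteq \architectureano \subseteq \architecture_{\textsl{deg}}$ combined with Theorem~\ref{thm:anonymous}, and the hard direction via a one-round counterexample built from the Kipf--Welling GCN in $\architecture_{\textsl{dGNN}_4}$, exploiting exactly the phenomenon of two vertices with equal WL round-$1$ colours but different neighbour-degree multisets. The paper's counterexample uses a six-vertex graph with a three-colour initial labelling (separating $v_4$ and $v_5$) rather than your $P_5$ with constant labelling, but this is an immaterial difference, and your arithmetic (values $\tfrac{2}{3}+\tfrac{1}{\sqrt{6}}$ versus $1$, both positive so ReLU preserves the distinction) checks out.
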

\begin{proof}
To prove the first part of the claim notice that $\architectureano$ is weaker than
$\architecture_{\textsl{deg}}$, simply because any aMPNN is a dMPNN. Then the result
follows from Theorem~\ref{thm:anonymous}.

For the second part it suffices to provide a dMPNN $M$ and a labelled graph $(
G,\pmb{\nu})$ such that there exists a round $t\geq 0$ for which
$\pmb{\ell}_{M_{\textsl{WL}}}^{(t)}\not\sqsubseteq \pmb{\ell}_M^{(t)}$ holds. We
construct such an $M$ originating from a GCN~\cite{kipf-loose} defined in
Example~\ref{ex:KipfasMPNN}. That is, $M$ is a dMPNN in
$\mathcal{M}_{\textsl{dGNN}_4}$. Consider the labelled graph $( G,\pmb{\nu})$ with
vertex labelling $\pmb{\nu}_{v_1}=\pmb{\nu}_{v_2}=(1,0,0)$,
$\pmb{\nu}_{v_3}=\pmb{\nu}_{v_6}=(0,1,0)$ and
$\pmb{\nu}_{v_4}=\pmb{\nu}_{v_5}=(0,0,1)$, and edges $\{v_1,v_3\}$, $\{v_2,v_3\}$,
$\{v_3,v_4\}$, $\{v_4,v_5\}$, and $\{v_5,v_6\}$, as depicted in
Figure~\ref{fig:graph1}.
\begin{figure}[ht]
\centering
\begin{tikzpicture}
\node[fill=black,circle,inner sep=2pt,align=center,label=below:{$v_1$}] (v1) {};
\node[below=0.5cm of v1,fill=black,circle,inner sep=2pt,align=center,label=below:{$v_2$}] (v2) {};
\node[below right = 0.25cm and 1cm of v1,fill=black,circle,inner sep=2pt,align=center,label=below:{$v_3$}] (v3) {};
\node[right = 1cm of v3,fill=black,circle,inner sep=2pt,align=center,label=below:{$v_4$}] (v4) {};
\node[right = 1cm of v4,fill=black,circle,inner sep=2pt,align=center,label=below:{$v_5$}] (v5) {};
\node[right = 1cm of v5,fill=black,circle,inner sep=2pt,align=center,label=below:{$v_6$}] (v6) {};

\path
(v1) edge[-] (v3)
(v2) edge[-] (v3)
(v3) edge[-] (v4)
(v4) edge[-] (v5)
(v5) edge[-] (v6)
;
\end{tikzpicture}
\caption{Graph $G$.}\label{fig:graph1}
\end{figure}

Recall that $\pmb{\ell}^{(0)} = \pmb{\nu}$ and
\begin{equation*}
(\pmb{\ell}^{(1)}_M)_{v}:=\text{ReLU}\left(\left(\frac{1}{1+d_v}\right)\pmb{\ell}_v^{(0)}\mathbf{W}^{(1)} + \sum_{u\in N_G(v)} \left(\frac{1}{\sqrt{1+d_v}}\right)\left(\frac{1}{\sqrt{1+d_u}}\right)\pmb{\ell}^{(0)}_u\mathbf{W}^{(1)}\right).
\end{equation*}
We next define $\mathbf{W}^{(1)}:=\left(\begin{smallmatrix}
1 & 0 & 0\\
0 & 1 & 0\\
0 & 0 & 1
\end{smallmatrix}\right)$.
It can be verified that 
$$
\pmb{\ell}_M^{(1)}=
\begin{pmatrix}
\frac{1}{2} & 0\vphantom{\frac{1}{2\sqrt{2}}} & 0\\
\frac{1}{2} & 0\vphantom{\frac{1}{2\sqrt{2}}} & 0\\
0\vphantom{\frac{1}{2\sqrt{2}}} & \frac{1}{4} & 0\\
0\vphantom{\frac{1}{2\sqrt{2}}} & 0& \frac{1}{3} \\
0\vphantom{\frac{1}{2\sqrt{2}}} & 0 & \frac{1}{3}\\
0\vphantom{\frac{1}{2\sqrt{2}}} & \frac{1}{2}& 0
\end{pmatrix} +
\begin{pmatrix}
0 & \frac{1}{2\sqrt{2}} & 0\\
0 & \frac{1}{2\sqrt{2}} & 0\\
\frac{1}{\sqrt{2}} & 0 & \frac{1}{2\sqrt{3}}\\
0 & \frac{1}{2\sqrt{3}}& \frac{1}{3} \\
0 & \frac{1}{\sqrt{6}} & \frac{1}{3}\\
0 & 0& \frac{1}{\sqrt{6}}
\end{pmatrix} =
\begin{pmatrix}
\frac{1}{2} & \frac{1}{2\sqrt{2}}& 0\\
\frac{1}{2} & \frac{1}{2\sqrt{2}}& 0\\
\frac{1}{\sqrt{2}} & \frac{1}{4}& \frac{1}{
2\sqrt{3}}\\
0 & \frac{1}{2\sqrt{3}}& \frac{2}{
3}\\
0 & \frac{1}{\sqrt{6}}& \frac{2}{
3}\\
0 & \frac{1}{2}& \frac{1}{
\sqrt{6}}
\end{pmatrix}.
$$
We observe that $(\pmb{\ell}_M^{(1)})_{v_4}\neq (\pmb{\ell}_M^{(1)})_{v_5}$. We
note, however, that $
(\pmb{\ell}_{M_\textsl{WL}}^{(1)})_{v_4}=\textsc{Hash}\bigl((0,0,1),\ldbl(0,0,1),(0,1
,0)\rdbl\bigr)=(\pmb{\ell}_{M_\textsl{WL}}^{(1)})_{v_5}$. Hence,
$\pmb{\ell}_{M_{\textsl{WL}}}^{(1)}\not\sqsubseteq \pmb{\ell}_M^{(1)}$.
\end{proof}

The rest of this section is devoted to prove the second item in Theorem~\ref{thm:dmpnn}.
\begin{proposition}\label{prop:onestep}
$\architecture_{\textsl{deg}} \preceq_{+1} \architectureWL$
\end{proposition}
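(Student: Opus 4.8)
The plan is to show that for every degree-aware MPNN $M$ with $T$ rounds, the WL algorithm run for $T+1$ rounds (which is a member of $\architectureWL$, and for which $g(T)=T+1$) satisfies $\pmb{\ell}_{M_{\textsl{WL}}}^{(t+1)}\sqsubseteq\pmb{\ell}_{M}^{(t)}$ for every $0\le t\le T$. By Definition~\ref{def:classg} with $g(t)=t+1$, this is exactly $M\preceq_{+1}M_{\textsl{WL}}$, and lifting to classes gives the proposition. I would prove the displayed refinement by induction on $t$, following the structure of the proof of Proposition~\ref{pro:eqstrongWL} but carefully tracking the degree arguments that the message functions of a dMPNN are now allowed to read.

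The crucial observation --- and the reason a single extra step is enough --- is that after one round the WL labelling already determines the degree of each vertex. Indeed $(\pmb{\ell}_{M_{\textsl{WL}}}^{(1)})_u=\hash\bigl(\pmb{\nu}_u,\ldbl\pmb{\nu}_{u'}\st u'\in N_G(u)\rdbl\bigr)$, and since $\hash$ is injective, equal round-$1$ labels force the two neighbour multisets to have equal cardinality, hence $d_u=d_{u'}$. As each WL labelling refines the previous one, I would record this as a short preliminary claim: for every $t\ge 1$, $(\pmb{\ell}_{M_{\textsl{WL}}}^{(t)})_u=(\pmb{\ell}_{M_{\textsl{WL}}}^{(t)})_{u'}$ implies $d_u=d_{u'}$.

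For the base case $t=0$, injectivity of $\hash$ gives $\pmb{\ell}_{M_{\textsl{WL}}}^{(1)}\sqsubseteq\pmb{\ell}_{M_{\textsl{WL}}}^{(0)}=\pmb{\nu}=\pmb{\ell}_{M}^{(0)}$. For the inductive step I assume $\pmb{\ell}_{M_{\textsl{WL}}}^{(t)}\sqsubseteq\pmb{\ell}_{M}^{(t-1)}$ and take $v,w$ with $(\pmb{\ell}_{M_{\textsl{WL}}}^{(t+1)})_v=(\pmb{\ell}_{M_{\textsl{WL}}}^{(t+1)})_w$. Injectivity of $\hash$ yields both $(\pmb{\ell}_{M_{\textsl{WL}}}^{(t)})_v=(\pmb{\ell}_{M_{\textsl{WL}}}^{(t)})_w$ and the multiset equality $\ldbl(\pmb{\ell}_{M_{\textsl{WL}}}^{(t)})_u\st u\in N_G(v)\rdbl=\ldbl(\pmb{\ell}_{M_{\textsl{WL}}}^{(t)})_u\st u\in N_G(w)\rdbl$, and the latter provides a bijection $\beta:N_G(v)\to N_G(w)$ matching round-$t$ WL labels. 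Applying the induction hypothesis transfers all of these equalities to $M$: $(\pmb{\ell}_{M}^{(t-1)})_v=(\pmb{\ell}_{M}^{(t-1)})_w$ and $(\pmb{\ell}_{M}^{(t-1)})_u=(\pmb{\ell}_{M}^{(t-1)})_{\beta(u)}$ for every $u\in N_G(v)$.

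The one place this departs from the anonymous case --- and where I expect the real work to be --- is reconciling the degree arguments fed to the now degree-aware message function. Since $t\ge 1$, the preliminary claim supplies $d_u=d_{\beta(u)}$ for every neighbour, while the equal cardinality of the neighbour multisets gives $d_v=d_w$. Hence $\textsc{Msg}^{(t)}\bigl((\pmb{\ell}_M^{(t-1)})_v,(\pmb{\ell}_M^{(t-1)})_u,d_v,d_u\bigr)=\textsc{Msg}^{(t)}\bigl((\pmb{\ell}_M^{(t-1)})_w,(\pmb{\ell}_M^{(t-1)})_{\beta(u)},d_w,d_{\beta(u)}\bigr)$, and summing over the bijection $\beta$ yields $\mathbf{m}^{(t)}_v=\mathbf{m}^{(t)}_w$; feeding this into $\textsc{Upd}^{(t)}$ gives $(\pmb{\ell}_M^{(t)})_v=(\pmb{\ell}_M^{(t)})_w$, closing the induction. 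It is worth emphasising that without the extra step the degrees $d_u$ and $d_{\beta(u)}$ could still differ while the round-$0$ labels agree, which is precisely why the plain comparison $\architecture_{\textsl{deg}}\preceq\architectureWL$ fails (Proposition~\ref{prop:notweaker}) and why the single-step offset is exactly what is needed.
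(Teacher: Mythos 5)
Your proof is correct, but it takes a genuinely different route from the paper. You prove the refinement $\pmb{\ell}_{M_{\textsl{WL}}}^{(t+1)}\sqsubseteq\pmb{\ell}_{M}^{(t)}$ directly, by redoing the induction of Proposition~\ref{pro:eqstrongWL} for a degree-aware $M$ and plugging the gap with the observation that injectivity of \hash{} makes every WL label of round $\geq 1$ determine the vertex degree, so the degree arguments of $\textsc{Msg}^{(t)}$ are reconciled by the one-step offset; your offset bookkeeping (round-$(t+1)$ WL labels versus round-$(t-1)$ $M$-labels feeding $\textsc{Msg}^{(t)}$) is consistent throughout, and the base case and the class-level conclusion are handled correctly. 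The paper instead argues by reduction: invoking Theorem~\ref{thm:anonymous}, it suffices to show $\architecture_{\textsl{deg}}\preceq_{+1}\architectureano$, and this is done by constructing, for each dMPNN $M_1$, an \emph{anonymous} MPNN $M_2$ that spends its first round computing each vertex's degree and appending it to the label (Lemma~\ref{lem:countdeg}), and in every later round $t$ extracts the stored degrees and simulates round $t-1$ of $M_1$; the known relation $\architectureano\preceq\architectureWL$ then closes the argument. The trade-off: the paper's construction is modular --- it reuses Proposition~\ref{pro:eqstrongWL} as a black box and yields the stronger intermediate fact that every dMPNN is exactly simulated, one step behind, by an anonymous MPNN (its labels are reproduced verbatim, augmented with degrees, rather than merely refined) --- whereas your argument is self-contained, avoids building an intermediate machine, and isolates precisely where degree-awareness matters, namely that WL needs one round to recover what a dMPNN is handed for free. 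Your closing remark tying the necessity of the offset to Proposition~\ref{prop:notweaker} matches the paper's discussion.
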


We will need the following lemma that states that anonymous MPNNs can compute the degrees of vertices in the first round of computation.

\begin{lemma}\label{lem:countdeg}
Let $( G,\pmb{\nu})$ be a labelled graph with $\pmb{\nu}:V\to \mathbb{A}^s$. There
exists an aMPNN $M_{d}$ such that
$(\pmb{\ell}_{M_d}^{(1)})_v=(\pmb{\nu}_v,d_v)\in\mathbb{A}^{s+1}$ for every vertex
$v$ in $V$.
\end{lemma}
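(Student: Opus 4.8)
The plan is to exploit the fact that the aggregation step of an MPNN sums the messages received over all neighbours, so that sending the constant message $1$ from every neighbour produces exactly the degree as a byproduct of summation. Concretely, I would define a single-round aMPNN $M_d$ (so $T=1$) with the constant message function
\[
\textsc{Msg}^{(1)}(\mathbf{x},\mathbf{y},-,-) := 1 \in \mathbb{A},
\]
which ignores both the labels $\mathbf{x},\mathbf{y}$ and the function $f$; writing $-$ for the arguments emphasises that $f(v)=0$ is used, so this is genuinely an anonymous MPNN. With this choice, the aggregated message at each vertex $v$ is $\mathbf{m}^{(1)}_v = \sum_{u\in N_G(v)} 1 = |N_G(v)| = d_v$, exactly the degree of $v$.

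Next I would set the update function to concatenate the vertex's own (unchanged) label with this aggregated value,
\[
\textsc{Upd}^{(1)}(\mathbf{x},y) := (\mathbf{x},y) \in \mathbb{A}^{s+1}.
\]
Since $\pmb{\ell}^{(0)}_v = \pmb{\nu}_v$ by the initialisation of every MPNN, we obtain $(\pmb{\ell}_{M_d}^{(1)})_v = \textsc{Upd}^{(1)}(\pmb{\nu}_v, \mathbf{m}^{(1)}_v) = (\pmb{\nu}_v, d_v)$ for every $v\in V$, as claimed.

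There is essentially no obstacle to overcome here, as the construction uses only the built-in summation inherent in the message-passing step and requires a single round. The only point deserving a remark is that $M_d$ is indeed \emph{anonymous}: neither the message nor the update function reads $f$, and in particular the constant message $1$ does not itself depend on any degree information; it is purely the aggregation over $N_G(v)$ that recovers $d_v$. This is precisely the intuition, made explicit in the paper's introduction, behind why the WL algorithm derives degree information only after one step, whereas degree-aware MPNNs have it from the start.
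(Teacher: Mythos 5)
Your proof is correct and is essentially identical to the paper's own: both use the constant message function $\textsc{Msg}^{(1)}(\mathbf{x},\mathbf{y},-,-):=1$ so that summation over neighbours yields $d_v$, and the update function $\textsc{Upd}^{(1)}(\mathbf{x},z):=(\mathbf{x},z)$ to concatenate the original label with the degree. Nothing is missing; your additional remark on why $M_d$ remains anonymous matches the paper's intent.
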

\begin{proof}
We define the aMPNN $M_d$ with the following message and update functions. For each
$\mathbf{x}$, $\mathbf{y} \in \mathbb{A}^{s}$, $z\in\mathbb{A}$, and vertices $v,
u\in N_G(v)$ we define:
$$
\textsc{Msg}^{(1)}(\mathbf{x},\mathbf{y},-,-):= 1
\text{ and } \textsc{Upd}^{(1)}(\mathbf{x},z):=
\left(\mathbf{x},z\right).
$$
Then,
$
\mathbf{m}_v^{(1)}:=\sum_{u\in N_G(v)} 1 = d_v$ and $(\pmb{\ell}_{M_d}^{(1)})_v:=\textsc{Upd}^{(1)}(\pmb{\nu}_v,d_v)=(\pmb{\nu}_v,d_v)\in\mathbb{A}^{s+1}$,
as desired.
\end{proof}

We are now ready to prove Proposition~\ref{prop:onestep}.

\begin{proof}[Proof of Proposition~\ref{prop:onestep}]
By Theorem~\ref{thm:anonymous} it suffices to prove that the class
$\architecture_{\textsl{deg}}$ is weaker than $\architectureano$, with $1$ step
ahead. Let $( G,\pmb{\nu})$ be a labelled graph with
$\pmb{\nu}:V\to\mathbb{A}^{s_0}$. Take an arbitrary dMPNN $M_1$ such that for every
round $t\geq 1$ the message function is
$$
\textsc{Msg}_{M_1}^{(t)}(\mathbf{x},\mathbf{y},d_v,d_u) \in\mathbb{A}^{s_t'}$$
and $\textsc{Upd}_{M_1}^{(t)}(\mathbf{x},\mathbf{z})$ is the update function.

We construct an aMPNN $M_2$ such that $\pmb{\ell}_{M_2}^{(t+1)}\sqsubseteq
\pmb{\ell}_{M_1}^{(t)}$ holds, as follows. We denote the message and update
functions of $M_2$ by $\textsc{Msg}_{M_2}^{(t)}$ and $\textsc{Upd}_{M_2}^{(t)}$,
respectively. We will keep as an invariant {\bf(I1)} stating that for all $v$ if we
have $\mathbf{x}' = (\pmb{\ell}_{M_1}^{(t-1)})_v \in \mathbb{A}^{s_{t-1}}$ then
$\mathbf{x} = (\mathbf{x}',d_v) = (\pmb{\ell}_{M_2}^{(t)})_v \in \mathbb{A}^{s_{t-1}
+ 1}$.

For $t=1$, we let $\textsc{Msg}_{M_2}^{(1)}$ and $\textsc{Upd}_{M_2}^{(1)}$ be the
functions defined by Lemma~\ref{lem:countdeg}. As a consequence,
$(\pmb{\ell}_{M_2}^{(1)})_v=(\pmb{\nu}_v,d_v)\in\mathbb{A}^{s_0+1}$ for every vertex
$v$. We clearly have that $\pmb{\ell}_{M_2}^{(1)}\sqsubseteq \pmb{\ell}_{M_1}^{(0)}$
and the invariant {\bf(I1)} trivially holds.

For $t\geq 2$, we define the message and update functions of $M_2$ as follows:
\[
\textsc{Msg}_{M_2}^{(t)}(\mathbf{x},\mathbf{y},-,-):=\textsc{Msg}_{M_1}^{(t-1)}(\mathbf{x}',\mathbf{y}',x,y)
\]
where $\mathbf{x} = (\mathbf{x}',x)$ and $\mathbf{y} = (\mathbf{y}',y)$ and by
invariant {\bf (I1)} $x = d_v$ and $y = d_u$. Notice that the message function
remains anonymous as $d_u$ and $d_v$ are not obtained by setting $f(v)=d_v$ and
$f(u)=d_u$ but instead were computed once by the first message aggregation and
encoded in the labels of $v$ and $u$. The update function is defined as follows:
\[
\textsc{Upd}_{M_2}^{(t)}(\mathbf{x},\mathbf{z}):=\left(\textsc{Upd}_{M_1}^{(t-1)}(\mathbf{x}',\mathbf{z}'),x\right)\in\mathbb{A}^{s_{t-1}+1},
\]
where $\mathbf{x} = (\mathbf{x}',x)$ and by invariant {\bf (I1)} $x = d_v$. In other
words, in each round $t\geq 2$, $M_2$ extracts the degrees from the last entries in
the labels and simulates round $t-1$ of $M_1$. It is readily verified that
$\pmb{\ell}_{M_2}^{(t)}\sqsubseteq \pmb{\ell}_{M_1}^{(t-1)}$ for every $t$, as
desired and that the invariant {\bf (I1)} holds.
\end{proof}

In particular it follows from Proposition~\ref{prop:onestep} that for the dMPNN $M$
constructed in the proof of Proposition~\ref{prop:notweaker} it holds that
$\pmb{\ell}_{M_{\textsl{WL}}}^{(2)}\sqsubseteq \pmb{\ell}_M^{(1)}$.

\subsection{Graph neural network-based degree-aware MPNNs}
We next consider the relation between the WL algorithm and dMPNNs that originate
from graph neural networks as those listed in Table~\ref{tab:dMPNNs}. More
specifically, we consider the following general graph neural network architecture
\begin{equation}
\mathbf{L}^{(t)}:=\sigma\left(\mathbf{L}^{(t-1)}\mathbf{W}_1^{(t)}+\mathsf{diag}(\mathbf{g})(\mathbf{A}+p\mathbf{I})\mathsf{diag}(\mathbf{h})\mathbf{L}^{(t-1)}\mathbf{W}_2^{(t)} + \mathbf{B}^{(t)}\right), \label{eq:dGNN}
\end{equation}
where $p\in\mathbb{A}$ is parameter satisfying $0\leq p\leq 1$, $\mathbf{W}_1^{(t)}$
and $\mathbf{W}_2^{(t)}$ are learnable weight matrices in $\mathbb{A}^{s_{t-1}\times
s_{t}}$, $\mathbf{B}^{(t)}$ is a bias matrix consisting of $n$ copies of the same
row $\mathbf{b}^{(t)}$, and $\mathsf{diag}(\mathbf{g})$ and
$\mathsf{diag}(\mathbf{h})$ are positive diagonal matrices in $\mathbb{A}^{n\times
n}$ obtained by putting the vectors $\mathbf{g}$ and $\mathbf{h}$ in $\mathbb{A}^n$
on their diagonals, respectively. We only consider vectors $\mathbf{g}$ and
$\mathbf{h}$ which are \textit{degree-determined}. That is, when $d_v=d_w$ then
$\mathbf{g}_v=\mathbf{g}_w$ and $\mathbf{h}_v=\mathbf{h}_w$ for all vertices $v$ and
$w$. Furthermore, $\sigma$ is either the sign or ReLU non-linear activation function.

It is readily verified that all graph neural networks mentioned so far can be seen
as special cases of~(\ref{eq:dGNN}). Moreover, graph neural networks of the
form~(\ref{eq:dGNN}) can be cast as dMPNNs. We denote the resulting class of dMPNNs
by $\architecture_{\textsl{dGNN}}$. The reason that one obtains dMPNNs is because of
the degree-determinacy assumption. More specifically, degree-determinacy implies that
$$
\mathbf{g}=(g(d_{v_1}),g(d_{v_2}),\ldots,g(d_{v_n}))
\text{ and }
\mathbf{h}=(h(d_{v_1}),h(d_{v_2}),\ldots,h(d_{v_n}))
$$
for some functions $g:\mathbb{N}^+\to \mathbb{A}^+$ and
$h:\mathbb{N}^+\to\mathbb{A}^+$.

\begin{example}
The GCN architecture of~\cite{kipf-loose} corresponds to graph neural networks of
the form~(\ref{eq:dGNN}), with
$\mathbf{W}_1^{(t)}=\mathbf{0}\in\mathbb{A}^{s_{t-1}\times s_t}$, $p=1$,
$\mathbf{b}^{(t)}=\mathbf{0}\in\mathbb{A}^s$, and where $\mathbf{g}=\mathbf{h}$ are
defined by the function $g(n)=h(n)=(1+n)^{-1/2}$. \hfill$\blacksquare$
\end{example}

We define the class $\architecture_{\textsl{dGNN}}$ as the class of dMPNNs with
message and update functions of the form:
\begin{align}\textsc{Msg}^{(t)}\bigl(\mathbf{x},\mathbf{y},d_v,d_u)&:=\frac{1}{d_v}\left(
	\mathbf{x}\mathbf{W}_1^{(t)} + pg(d_v)h(d_v)\mathbf{x}\mathbf{W}_2^{(t)} \right)+ g(d_v)h(d_u)\mathbf{y}\mathbf{W}_2^{(t)} \label{eq:dGNNmsg}
\intertext{ and }
\textsc{Upd}^{(t)}(\mathbf{x},\mathbf{y}):=\sigma\left(\mathbf{y}\right) \label{eq:dGNNupd}
\end{align}
for any $\mathbf{x},\mathbf{y}\in\mathbb{A}^{s_{t-1}}$,
$\mathbf{W}_1^{(t)}\in\mathbb{A}^{s_{t-1}\times
s_t}$,$\mathbf{W}_2^{(t)}\in\mathbb{A}^{s_{t-1}\times s_t}$, bias vector
$\mathbf{b}^{(t)}\in\mathbb{A}^{s_t}$, and non-linear activation function $\sigma$.
We note that this encoding is just a generalisation of the encoding of GCNs as
dMPNNs given in Example~\ref{ex:KipfasMPNN}.

We know from Proposition~\ref{prop:onestep} that the class
$\architecture_{\textsl{dGNN}}$ is weaker than $\architectureWL$, with $1$ step
ahead. Indeed, it suffices to note that $\architecture_{\textsl{dGNN}}\subseteq
\architecture_{\textsl{deg}}$. In particular, the classes
$\architecture_{\textsl{dGNN}_1}$--$\architecture_{\textsl{dGNN}_6}$ corresponding
to the graph neural network architectures from Table~\ref{tab:dMPNNs} are all weaker
than $\architectureWL$, with $1$ step ahead. Furthermore, in the proof of
Proposition~\ref{prop:notweaker} we have shown that the condition that
$\architectureWL$ is $1$ step ahead is necessary for
$\architecture_{\textsl{dGNN}_4}$, and thus also for
$\architecture_{\textsl{dGNN}}$. We mentioned that one can provide similar examples
for $\architecture_{\textsl{dGNN}_2}$, $\architecture_{\textsl{dGNN}_5}$ and
$\architecture_{\textsl{dGNN}_6}$.

In contrast, we next show that the two remaining classes,
$\architecture_{\textsl{dGNN}_1}$ and $\architecture_{\textsl{dGNN}_3}$, are weaker
than $\architectureWL$ (with no step ahead). The reason is that dMPNNs in these
classes are equivalent to dMPNNs that only use degree information \textit{after}
aggregation takes places. These in turn are equivalent to anonymous MPNNs. We first
show a more general result, related to graph neural networks of the
form~\eqref{eq:dGNN} in which $\mathsf{diag}(\mathbf{h})=\mathbf{I}$. In other
words, the function $h:\mathbb{N}^+\to\mathbb{A}$ underlying $\mathbf{h}$ is the
constant one function, i.e., $h(n)=1$ for all $n\in\mathbb{N}^+$.

\begin{proposition}\label{prop:dGNNc}
The subclass of $\architecture_{\textsl{dGNN}}$, in which the function $h$ is the
constant one function, is weaker than $\architectureWL$.
\end{proposition}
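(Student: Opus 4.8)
The plan is to show that once the before-aggregation degree scaling is removed (i.e.\ $h\equiv 1$), every member of the subclass computes \emph{exactly} the same labelling as a genuine anonymous MPNN; the claim then follows from the already-established equivalence $\architectureano\equiv\architectureWL$ (Theorem~\ref{thm:anonymous} and Proposition~\ref{pro:eqstrongWL}), together with transitivity of $\preceq$ for classes.

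First I would unfold the per-vertex computation of an arbitrary $M_1$ in the subclass. Setting $h(d)=1$ in the message function~\eqref{eq:dGNNmsg} and summing over the $d_v$ neighbours of $v$, the $1/d_v$-scaled self-term is added exactly $d_v$ times, so the aggregated message collapses to
\begin{equation*}
\mathbf{m}_v^{(t)} = \pmb{\ell}_v^{(t-1)}\mathbf{W}_1^{(t)} + g(d_v)\Bigl(p\,\pmb{\ell}_v^{(t-1)} + \sum_{u\in N_G(v)}\pmb{\ell}_u^{(t-1)}\Bigr)\mathbf{W}_2^{(t)},
\end{equation*}
and applying $\textsc{Upd}^{(t)}=\sigma(\cdot)$ gives $\pmb{\ell}_v^{(t)}=\sigma(\mathbf{m}_v^{(t)})$ (folding in the bias $\mathbf{b}^{(t)}$ as needed). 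The crucial structural point is that the only degree appearing is $d_v$, the degree of the \emph{receiving} vertex, and that it enters only \emph{after} the neighbour-sum has been formed.

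The key step is then to recognise that $d_v$ can itself be produced by an anonymous aggregation, within the same round, so that no extra round is required. I would define an anonymous MPNN $M_2$ whose message appends a constant $1$ to the usual weighted neighbour label, $\textsc{Msg}^{(t)}(\mathbf{x},\mathbf{y},-,-):=(1,\mathbf{y}\mathbf{W}_2^{(t)})$, so that summing over $N_G(v)$ yields $\mathbf{m}_v^{(t)}=\bigl(d_v,\sum_{u}\pmb{\ell}_u^{(t-1)}\mathbf{W}_2^{(t)}\bigr)$; the first coordinate recovers the degree. The update then reads off this degree and reproduces $M_1$ exactly, $\textsc{Upd}^{(t)}(\mathbf{x},(z,\mathbf{m})):=\sigma\bigl(\mathbf{x}\mathbf{W}_1^{(t)}+g(z)(p\,\mathbf{x}\mathbf{W}_2^{(t)}+\mathbf{m})+\mathbf{b}^{(t)}\bigr)$. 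A one-line induction on $t$ (both nets start from $\pmb{\ell}^{(0)}=\pmb{\nu}$) shows $\pmb{\ell}_{M_2}^{(t)}=\pmb{\ell}_{M_1}^{(t)}$ for every $t$, whence $M_1\equiv M_2$ and in particular $M_1\preceq M_2$.

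Finally I would close by transitivity: $M_2$ is anonymous, so the subclass is weaker than $\architectureano$, and since $\architectureano\equiv\architectureWL$ the subclass is weaker than $\architectureWL$. I expect the only genuine obstacle to be conceptual rather than computational, namely isolating that with $h\equiv 1$ the degree is used purely post-aggregation and only as the \emph{receiver's own} degree --- precisely the feature that lets an anonymous MPNN compute it on the fly via the constant-$1$ message, in contrast to the general degree-aware case of Proposition~\ref{prop:onestep}, where dependence on the neighbour degree $d_u$ forces the one-step-ahead penalty. A minor bookkeeping point is that $G$ has no isolated vertices, so $d_v\geq 1$ and $g(d_v)$ is well defined and computable.
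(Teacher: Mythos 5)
Your proposal is correct and matches the paper's own proof essentially step for step: the paper likewise rewrites each member of the subclass as an equivalent anonymous MPNN by appending a constant $1$ to the messages so that the receiver's degree $d_v$ emerges from the aggregation within the same round, uses $g(d_v)$ only in the update, and concludes via $\architectureano\preceq\architectureWL$. Your closing observation about why this fails when $h$ depends on the neighbour's degree $d_u$ (forcing the one-step-ahead penalty of Proposition~\ref{prop:onestep}) is exactly the intended contrast.
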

\begin{proof}
We show that any MPNN $M$ in this class is an anonymous MPNNs. To see this, it
suffices to observe that any dMPNN in $\architecture_{\textsl{dGNN}}$, and thus also
$M$ in particular, is equivalent to a dMPNN with message and update functions
defined as follows. For every round $t \geq 1$, every
$\mathbf{x},\mathbf{y}\in\mathbb{A}^{s_{t-1}}$,
$\mathbf{z}=(\mathbf{z}',z)\in\mathbb{A}^{s_t+1}$, and every vertex $v$ and $u\in
N_G(v)$:
\begin{align}
\textsc{Msg}^{(t)}(\mathbf{x},\mathbf{y},d_v,d_u)&:=\left(h(d_u)\mathbf{y}\mathbf{W}_2^{(t)},1\right)\in\mathbb{A}^{s_t+1} \label{eq:h}
\intertext{ and }
\textsc{Upd}^{(t)}(\mathbf{x},\mathbf{z})&:=\sigma\left(\mathbf{x}\mathbf{W}_1^{(t)}+g(z)\mathbf{z}'+pg(z)h(z)\mathbf{x}\mathbf{W}_2^{(t)}+\mathbf{b}^{(t)}\right)\in\mathbb{A}^{s_t},
\end{align}
where $z\in\mathbb{A}$ will hold the degree information of the vertex under
consideration (i.e., $d_v$) after message passing. That is, we use a similar trick
as in Lemma~\ref{lem:countdeg}. Since we consider MPNNs in which $h(d_u)=1$, the
message function~(\ref{eq:h}) indeed only depends on $\mathbf{y}$. As a consequence,
$M$ is equivalent to an anonymous MPNN. From Theorem~\ref{thm:anonymous} and in
particular from $\architectureano\preceq \architectureWL$, the proposition follows.
\end{proof}

The architectures $\architecture_{\textsl{dGNN}_1}$ and
$\architecture_{\textsl{dGNN}_3}$ from Table~\ref{tab:dMPNNs} clearly satisfy the
assumption in the previous proposition and hence $\architecture_{\textsl{dGNN}_1},
\architecture_{\textsl{dGNN}_3} \preceq \architectureWL$.
  
We thus have shown the remaining part of the third item in Theorem~\ref{thm:dmpnn}.
\begin{corollary}\label{ex:landr}
The classes $\architecture_{\textsl{dGNN}_1}$ and $\architecture_{\textsl{dGNN}_3}$
are weaker than $\architectureWL$.
\end{corollary}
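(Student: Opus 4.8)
The plan is to derive this corollary directly from Proposition~\ref{prop:dGNNc}, whose hypothesis requires that the degree-determined vector $\mathbf{h}$ underlying the general architecture~\eqref{eq:dGNN} be the constant one vector. Since Proposition~\ref{prop:dGNNc} already establishes that every such dMPNN is equivalent to an anonymous MPNN and hence weaker than $\architectureWL$, all that remains is the bookkeeping task of exhibiting $\architecture_{\textsl{dGNN}_1}$ and $\architecture_{\textsl{dGNN}_3}$ as instances of~\eqref{eq:dGNN} in which $\mathsf{diag}(\mathbf{h})=\mathbf{I}$.

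First I would rewrite dGNN$_1$. Its propagation matrix is $\mathbf{D}^{-1}\mathbf{A}$, which I match to $\mathsf{diag}(\mathbf{g})(\mathbf{A}+p\mathbf{I})\mathsf{diag}(\mathbf{h})$ by taking $p=0$, $\mathsf{diag}(\mathbf{h})=\mathbf{I}$ (so that $h(n)=1$ for all $n\in\mathbb{N}^+$), and $\mathsf{diag}(\mathbf{g})=\mathbf{D}^{-1}$, i.e.\ the degree-determined function $g(n)=1/n$. Setting $\mathbf{W}_1^{(t)}=\mathbf{0}$, $\mathbf{W}_2^{(t)}=\mathbf{W}^{(t)}$, and $\mathbf{b}^{(t)}=\mathbf{0}$ then recovers dGNN$_1$ exactly as a special case of~\eqref{eq:dGNN}, with $g$ positive and degree-determined as required.

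Next I would do the same for dGNN$_3$, whose propagation matrix is $(\mathbf{D}+\mathbf{I})^{-1}(\mathbf{A}+\mathbf{I})$. Here I take $p=1$ so that $\mathbf{A}+p\mathbf{I}=\mathbf{A}+\mathbf{I}$, again set $\mathsf{diag}(\mathbf{h})=\mathbf{I}$ (so $h\equiv 1$), and choose $\mathsf{diag}(\mathbf{g})=(\mathbf{D}+\mathbf{I})^{-1}$, corresponding to the degree-determined function $g(n)=1/(n+1)$, together with $\mathbf{W}_1^{(t)}=\mathbf{0}$, $\mathbf{W}_2^{(t)}=\mathbf{W}^{(t)}$, and $\mathbf{b}^{(t)}=\mathbf{0}$. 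Since both architectures fall under the hypothesis $h\equiv 1$ of Proposition~\ref{prop:dGNNc}, that proposition immediately yields $\architecture_{\textsl{dGNN}_1}\preceq\architectureWL$ and $\architecture_{\textsl{dGNN}_3}\preceq\architectureWL$, completing the proof.

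There is essentially no genuine obstacle here: the only point requiring any care is to confirm that these two architectures really are the ones in which degree information enters \emph{only} through the left multiplier $\mathsf{diag}(\mathbf{g})$, applied \emph{after} aggregation, and never through a right multiplier $\mathsf{diag}(\mathbf{h})$ applied before aggregation. This is precisely what distinguishes dGNN$_1$ and dGNN$_3$ from $\architecture_{\textsl{dGNN}_2}$, $\architecture_{\textsl{dGNN}_4}$, and $\architecture_{\textsl{dGNN}_5}$, where the symmetric normalisation of the form $\mathbf{D}^{-1/2}\cdots\mathbf{D}^{-1/2}$ forces a nontrivial $\mathbf{h}$ and hence lies outside the scope of Proposition~\ref{prop:dGNNc}.
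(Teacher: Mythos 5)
Your proposal is correct and follows exactly the paper's route: the paper likewise derives this corollary by noting that dGNN$_1$ and dGNN$_3$ satisfy the hypothesis of Proposition~\ref{prop:dGNNc} (i.e., $h\equiv 1$), though it leaves the instantiation of $p$, $\mathbf{g}$, $\mathbf{W}_1^{(t)}$, $\mathbf{W}_2^{(t)}$ and $\mathbf{b}^{(t)}$ implicit where you spell it out. Your explicit matching ($p=0$, $g(n)=1/n$ for dGNN$_1$; $p=1$, $g(n)=1/(n+1)$ for dGNN$_3$) is accurate, so the argument is complete.
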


To conclude, we investigate whether $\architecture_{\textsl{dGNN}}$ and its
subclasses $\architecture_{\textsl{dGNN}_1}$--$\architecture_{\textsl{dGNN}_6}$ are
stronger than $\architectureWL$. For $\architecture_{\textsl{dGNN}}$ this follows
from Theorem~\ref{thm:anonymous}, stating in particular that
$\architecture_{\textsl{GNN}} \equiv \architecture_{\textsl{WL}}$, and from the
following remark.

\begin{remark}\label{rem:filips-second-try}
    It holds that $\architecture_{\textsl{GNN}} \preceq \architecture_{\textsl{dGNN}}$.
\end{remark}
Indeed, we first note that the class $\architecture_{\textsl{GNN}}$ is not a
subclass of $\architecture_{\textsl{dGNN}}$ since these classes differ in the
message and update functions used. We observe, however, that
$\architecture_{\textsl{GNN}}$ corresponds to the subclass of
$\architecture_{\textsl{dGNN}}$ in which the functions $g$ and $h$ are the constant
one function, i.e., $g(n)=h(n)=1$ for all $n\in\mathbb{N}^+$, and moreover, $p=0$.
More precisely, for every MPNN $M$ in $\architecture_{\textsl{GNN}}$ there is an
MPNN $M'$ in $\architecture_{\textsl{dGNN}}$ such that $M\equiv M'$, from which
Remark~\ref{rem:filips-second-try} follows.
 
So, we know already that $\architectureWL\preceq\architecture_{\textsl{dGNN}}$.
However, the aMPNN $M$ in $\architecture_{\textsl{GNN}}$ such that $M_{\textsl{WL}}
\preceq M$ holds, as constructed for Theorems~\ref{thm:grohe_lower}
and~\ref{thm:equalstrong}, does not comply with the forms of MPNNs corresponding to
the graph neural networks given in Table~\ref{tab:dMPNNs}. We next investigate which
classes $\architecture_{\textsl{dGNN}_i}$ are stronger that $\architectureWL$.

We start with some negative results, hereby showing part of the fourth item in Theorem~\ref{thm:dmpnn}.

\begin{proposition}\label{prop:notasstrong}
None of the classes $\architecture_{\textsl{dGNN}_i}$, for $i\in \{1,2, \dots, 5\}$, are stronger than $\architectureWL$.
\end{proposition}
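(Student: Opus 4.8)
The plan is to prove, for each $i\in\{1,\dots,5\}$, that $\architectureWL \not\preceq \architecture_{\textsl{dGNN}_i}$ by exhibiting a labelled graph $(G,\pmb{\nu})$ on which the WL algorithm separates two vertices $v,w$ already at round $1$, while \emph{every} member of $\architecture_{\textsl{dGNN}_i}$ assigns them the same label at round $1$. Once we have such a pair, $(\pmb{\ell}_M^{(1)})_v=(\pmb{\ell}_M^{(1)})_w$ for all $M\in\architecture_{\textsl{dGNN}_i}$ whereas $(\pmb{\ell}_{M_{\textsl{WL}}}^{(1)})_v\neq(\pmb{\ell}_{M_{\textsl{WL}}}^{(1)})_w$, so $\pmb{\ell}_M^{(1)}\not\sqsubseteq\pmb{\ell}_{M_{\textsl{WL}}}^{(1)}$; hence no $M$ is stronger than $M_{\textsl{WL}}$ run for $T=1$ round, which is exactly $\architectureWL\not\preceq\architecture_{\textsl{dGNN}_i}$. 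The observation that makes the collision hold for \emph{all} weight matrices at once is that in each architecture $(\pmb{\ell}_M^{(1)})_v=\sigma(\mathbf{c}_v\mathbf{W}^{(1)}+\mathbf{b}^{(1)})$, where $\mathbf{c}_v$ is a fixed row of the relevant normalised adjacency matrix applied to $\pmb{\nu}$ (depending only on $G$ and $\pmb{\nu}$), the bias row $\mathbf{b}^{(1)}$ is common to all vertices, and $\sigma$ is applied entrywise. Thus arranging $\mathbf{c}_v=\mathbf{c}_w$ forces $v,w$ to collapse for every choice of $\mathbf{W}^{(1)}$ and $\mathbf{b}^{(1)}$.

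I would split into two cases according to whether the architecture aggregates a vertex's own label. For $i\in\{1,2\}$ (the \emph{self-loop-free} architectures $\mathbf{D}^{-1}\mathbf{A}$ and $\mathbf{D}^{\nicefrac{-1}{2}}\mathbf{A}\mathbf{D}^{\nicefrac{-1}{2}}$), the row $\mathbf{c}_v$ is a degree-weighted sum over the neighbours of $v$ only and ignores $\pmb{\nu}_v$. I take the ``twin'' gadget on vertices $a,b,c$ with edges $\{a,c\},\{b,c\}$ and linearly independent labels $\pmb{\nu}_a\neq\pmb{\nu}_b$. Since $a$ and $b$ share the single neighbour $c$ and have equal degree $1$, one reads off $\mathbf{c}_a=\mathbf{c}_b$ (the same scalar multiple of $\pmb{\nu}_c$ in both cases), so $a,b$ collapse under dGNN$_1$ and dGNN$_2$; yet $\textsc{Hash}(\pmb{\nu}_a,\ldbl\pmb{\nu}_c\rdbl)\neq\textsc{Hash}(\pmb{\nu}_b,\ldbl\pmb{\nu}_c\rdbl)$ because $\pmb{\nu}_a\neq\pmb{\nu}_b$, so WL separates $a,b$ at round $1$.

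For $i\in\{3,4,5\}$ (the self-loop architectures obtained from $\mathbf{A}+\mathbf{I}$, resp.\ $\mathbf{D}^{\nicefrac{-1}{2}}\mathbf{A}\mathbf{D}^{\nicefrac{-1}{2}}+\mathbf{I}$), the own label now enters $\mathbf{c}_v$ with a fixed, degree-determined weight, so the twin gadget no longer collapses. Here I instead use the single-edge gadget $\{v,w\}$ with linearly independent labels $\pmb{\nu}_v\neq\pmb{\nu}_w$. Because $d_v=d_w=1$ all normalisation constants coincide for $v$ and $w$, and the own-versus-neighbour weighting is symmetric, so in each of the three architectures $\mathbf{c}_v$ and $\mathbf{c}_w$ equal the \emph{same} symmetric combination of $\pmb{\nu}_v$ and $\pmb{\nu}_w$ (for instance $\tfrac12(\pmb{\nu}_v+\pmb{\nu}_w)$ for dGNN$_4$); hence $v,w$ collapse. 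WL, on the other hand, distinguishes them, as the tuples $(\pmb{\nu}_v,\ldbl\pmb{\nu}_w\rdbl)$ and $(\pmb{\nu}_w,\ldbl\pmb{\nu}_v\rdbl)$ differ. If a single input graph is preferred, one may take the disjoint union of the two gadgets and let the witnessing pair depend on $i$; the two components do not interact during message passing. Note also that both gadgets use linearly independent unique labels, so the construction is compatible with the standing row-independence assumption on $\pmb{\nu}$.

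The routine part is merely reading off the explicit value of $\mathbf{c}_v$ in each of the five cases from the corresponding row of $\mathbf{D}^{-1}\mathbf{A}$, $\mathbf{D}^{\nicefrac{-1}{2}}\mathbf{A}\mathbf{D}^{\nicefrac{-1}{2}}$, $(\mathbf{D}+\mathbf{I})^{-1}(\mathbf{A}+\mathbf{I})$, $(\mathbf{D}+\mathbf{I})^{\nicefrac{-1}{2}}(\mathbf{A}+\mathbf{I})(\mathbf{D}+\mathbf{I})^{\nicefrac{-1}{2}}$, and $\mathbf{D}^{\nicefrac{-1}{2}}\mathbf{A}\mathbf{D}^{\nicefrac{-1}{2}}+\mathbf{I}$, which is immediate once the degrees are fixed. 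The only conceptual obstacle is the self-loop case: there a vertex's own label \emph{is} used, so it cannot simply be perturbed as in the twin gadget. The resolution is to exploit that these architectures combine own and neighbour labels in a \emph{fixed}, non-learnable ratio; placing the two distinct labels in symmetric own/neighbour roles via the single-edge swap makes that fixed combination blind to the difference, whereas WL keeps the own label and the neighbour multiset as separate coordinates of its hashed tuple and therefore still separates $v$ and $w$. This is precisely the deficiency that the learnable parameter of dGNN$_6$ repairs, as shown later in Proposition~\ref{prop:indeed-wl-power}.
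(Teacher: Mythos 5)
Your proof is correct and follows the same overall strategy as the paper: for each class, exhibit a labelled graph on which \emph{every} member collapses two vertices at round~1 --- because the pre-activation rows coincide, the bias row is common to all vertices, and $\sigma$ acts entrywise --- while injectivity of $\textsc{Hash}$ keeps those vertices apart under WL. The differences lie in the witnesses. For $i=3,4$ your single-edge gadget is exactly the paper's graph $G_2$. For $i=1,2$ the paper instead uses a $4$-cycle in which two differently labelled, opposite vertices share both neighbours; your path $a$--$c$--$b$ achieves the same effect with one vertex fewer, and, like the paper's example, it remains valid for arbitrary degree-determined normalisations $\mathbf{g},\mathbf{h}$ since $d_a=d_b$. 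The most notable divergence is dGNN$_5$: the paper builds a separate ten-vertex graph (two labelled stars) because $\mathbf{D}^{-1/2}\mathbf{A}\mathbf{D}^{-1/2}+\mathbf{I}$ does not fit the template $\mathsf{diag}(\mathbf{g})(\mathbf{A}+\mathbf{I})\mathsf{diag}(\mathbf{h})$ that covers $i=3,4$, whereas you observe that on the single edge $\mathbf{D}=\mathbf{I}$, so dGNN$_5$ degenerates to $\mathbf{A}+\mathbf{I}$ and the same two-vertex witness applies; this is a genuine economy over the paper's argument. Two minor points: the disjoint-union remark is unnecessary, since the proposition quantifies over each class separately and the paper likewise uses a different graph per class; and if you do form the union, you should choose the five initial labels pairwise linearly independent so that the standing row-independence assumption on $\pmb{\nu}$ still holds.
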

\begin{proof}
The proof consists of a number of counterexamples related to the various classes of
dMPNNs under consideration. For convenience, we describe the counterexamples in
terms of graph neural networks rather than in their dMPNN form.

We first prove the proposition for classes of dMPNNs related to graph neural
networks of the form:
$$
\mathbf{L}^{(t)}:=\sigma\left(\mathsf{diag}(\mathbf{g})\mathbf{A}\mathsf{diag}(\mathbf{h})\mathbf{L}^{(t-1)}\mathbf{W}^{(t)}+\mathbf{B}^{(t)}\right).
$$
This includes $\architecture_{\textsl{dGNN}_i}$, for $i=1,2$. Consider the labelled
graph $( G_1,\pmb{\nu})$ with vertex labelling $\pmb{\nu}_{v_1}=(1,0,0)$,
$\pmb{\nu}_{v_2}=\pmb{\nu}_{v_3}=(0,1,0)$ and $\pmb{\nu}_{v_4}=(0,0,1)$, and edges
$\{v_1,v_2\}$, $\{v_1,v_3\}$, $\{v_4, v_2\}$ and $\{v_4,v_3\}$, as depicted in
Figure~\ref{fig:graphG1}.

\begin{figure}[ht]
\centering
\begin{tikzpicture}
\node[fill=black,circle,inner sep=2pt,align=center,label=below:{$v_2$}] (v2) {};
\node[below=1cm of v2,fill=black,circle,inner sep=2pt,align=center,label=below:{$v_3$}] (v3) {};
\node[below right = 0.5cm and 1cm of v2,fill=black,circle,inner sep=2pt,align=center,label=below:{$v_4$}] (v4) {};
\node[below left = 0.5cm and 1cm of v2,fill=black,circle,inner sep=2pt,align=center,label=below:{$v_1$}] (v1) {};

\path
(v1) edge[-] (v2)
(v1) edge[-] (v3)
(v4) edge[-] (v2)
(v4) edge[-] (v3)
;
\end{tikzpicture}
\caption{Graph $G_1$.}\label{fig:graphG1}
\end{figure}
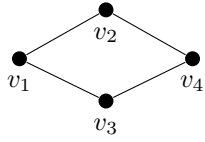

By definition, $\mathbf{L}^{(0)}:=\left(\begin{smallmatrix}1 & 0 &0\\
    0 & 1 &0\\
	0 & 1 &0\\
	0 & 0 &1 \end{smallmatrix}\right)$.
We note that $$(\pmb{\ell}_{M_{\textsl{WL}}}^{(1)})_{v_1}=\textsc{Hash}\bigl((1,0,0),\ldbl(0,1,0),(0,1,0)\rdbl\bigr)\neq
	(\pmb{\ell}_{M_{\textsl{WL}}}^{(1)})_{v_4}=\textsc{Hash}\bigl((0,0,1),\ldbl(0,1,0),(0,1,0)\rdbl\bigr).$$ We next show that there exist no
$\mathbf{W}^{(1)},\mathbf{B}^{(1)}$ such that $\mathbf{L}^{(1)}\sqsubseteq
\pmb{\ell}_{M_{\textsl{WL}}}^{(1)}$. Indeed, since the degree of all vertices is $2$
the computation is quite simple
	\allowdisplaybreaks
\begin{align*}
    \mathbf{L}^{(1)}& :=\sigma\left(\mathsf{diag}(\mathbf{g})\begin{pmatrix}0 & 1 & 1 &0 \\
    1 & 0 & 0 & 1\\
    1 & 0 & 0 & 1\\
    0 & 1 & 1 & 0\\
    \end{pmatrix}\mathsf{diag}(\mathbf{h})\mathbf{L}^{(0)}\mathbf{W}^{(1)} + \mathbf{B}^{(1)}\right)\\
 &    =\sigma\left(\begin{pmatrix}0 & g(2)h(2) & g(2)h(2) &0 \\
    g(2)h(2)& 0 & 0 & g(2)h(2)\\
    g(2)h(2) & 0 & 0 & g(2)h(2)\\
    0 & g(2)h(2) & g(2)h(2) & 0\\
    \end{pmatrix}
	\begin{pmatrix}1 & 0 &0\\
	    0 & 1 &0\\
		0 & 1 &0\\
		0 & 0 &1 \end{pmatrix}\mathbf{W}^{(1)} + \mathbf{B}^{(1)}\right)\\
	&=
    \sigma\left(\begin{pmatrix}
  0 &  2g(2)h(2)& 0\\
  g(2)h(2) & 0 & g(2)h(2)\\
  g(2)h(2) & 0 & g(2)h(2)\\
  0 &  2g(2)h(2)& 0\\  
    \end{pmatrix}\mathbf{W}^{(1)} + \mathbf{B}^{(1)}\right).
 \end{align*}
Finally, we recall that $\mathbf{B}^{(1)}$ consists of $n$ copies of the same
row. Hence, independently of the choice of $\mathbf{W}^{(1)}$ and
$\mathbf{B}^{(1)}$, vertices $v_1$ and $v_4$ will be assigned the same label,
and thus $\mathbf{L}^{(1)}\not\sqsubseteq \pmb{\ell}_{M_{\textsl{WL}}}^{(1)}$.
	
The second class of dMPNNs we consider are those related to graph neural networks of
the form:
$$
\mathbf{L}^{(t)}:=\sigma\left(\mathsf{diag}(\mathbf{g})(\mathbf{A}+\mathbf{I})\mathsf{diag}(\mathbf{h})\mathbf{L}^{(t-1)}\mathbf{W}^{(t)}+\mathbf{B}^{(t)}\right).
$$
This includes $\architecture_{\textsl{dGNN}_i}$, for $i=3,4$. Indeed, consider the
labelled graph $( G_2,\pmb{\nu})$ with one edge $\{v_1,v_2\}$, as depicted in
Figure~\ref{fig:graphG2}, and vertex labelling $\pmb{\nu}_{v_1}=(1,0)$ and
$\pmb{\nu}_{v_2}=(0,1)$.
    
\begin{figure}[ht]
\centering
\begin{tikzpicture}
\node[fill=black,circle,inner sep=2pt,align=center,label=below:{$v_1$}] (v1) {};
\node[right=1cm of v1,fill=black,circle,inner sep=2pt,align=center,label=below:{$v_2$}] (v2) {};

\path
(v1) edge[-] (v2)
;
\end{tikzpicture}
\caption{Graph $G_2$.}\label{fig:graphG2}
\end{figure}
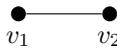
    
By definition, $\mathbf{L}^{(0)}:=\left(\begin{smallmatrix}1 & 0\\ 0 &
1\end{smallmatrix}\right)$. We also note that
$$(\pmb{\ell}_{M_{\textsl{WL}}}^{(1)})_{v_1}=\textsc{Hash}\bigl((1,0),\ldbl
(0,1)\rdbl\bigr)\neq
(\pmb{\ell}_{M_{\textsl{WL}}}^{(1)})_{v_2}=\textsc{Hash}\bigl((0,1),\ldbl(1,0)\rdbl
\bigr).$$ We next show that there exist no $\mathbf{W}^{(1)},\mathbf{B}^{(1)}$ such
that $\mathbf{L}^{(1)}\sqsubseteq \pmb{\ell}_{M_{\textsl{WL}}}^{(1)}$. Indeed,
	\allowdisplaybreaks
\begin{align*}
    \mathbf{F}^{(1)}& :=\sigma\Biggl(\mathsf{diag}(\mathbf{g})\biggl(\begin{pmatrix}0 & 1 \\
    1 & 0
    \end{pmatrix}+
    \begin{pmatrix}
    	 1 & 0 \\
    0 & 1 \end{pmatrix}\biggl)\mathsf{diag}(\mathbf{h})\mathbf{L}^{(0)}\mathbf{W}^{(1)} + \mathbf{B}^{(1)}\Biggr)\\
 &    =\sigma\Biggl(\begin{pmatrix}g(1) & 0 \\
    0 & g(1)
    \end{pmatrix}\begin{pmatrix}1 & 1 \\
    1 & 1
    \end{pmatrix}\begin{pmatrix}h(1) & 0 \\
    0 & h(1)
    \end{pmatrix}
   \begin{pmatrix}1 & 0\\
       0 & 1\end{pmatrix}\mathbf{W}^{(1)} + \mathbf{B}^{(1)}\Biggr)\\
	&=
    \sigma\Biggl(\begin{pmatrix}
    g(1)h(1)& g(1)h(1)\\
    g(1)h(1) & g(1)g(1)\\
    \end{pmatrix}\mathbf{W}^{(1)} + \mathbf{B}^{(1)}\Biggr).
 \end{align*}
Hence, independently of the choice of $\mathbf{W}^{(1)}$ and $\mathbf{B}^{(1)}$,
both vertices will be assigned the same label, and thus
$\mathbf{L}^{(1)}\not\sqsubseteq \pmb{\ell}_{M_{\textsl{WL}}}^{(1)}$.
	
Finally, we deal with the class $\architecture_{\textsl{dGNN}_5}$, i.e., dMPNNs
related to graph neural networks of the form
$$
\mathbf{L}^{(t)}:=\sigma\left((\mathbf{D}^{-1/2}\mathbf{A}\mathbf{D}^{-1/2}+\mathbf{I})\mathbf{L}^{(t-1)}\mathbf{W}^{(t)} + \mathbf{B}^{(t)}\right).$$
We consider the labelled graph $( G_3,\pmb{\nu})$ with vertex labelling
$\pmb{\nu}_{v_1}=\pmb{\nu}_{w_2}=\pmb{\nu}_{w_3}=(1,0,0)$,
$\pmb{\nu}_{w_1}=\pmb{\nu}_{v_2}=\pmb{\nu}_{v_3}=(0,1,0)$ and $\pmb{\nu}_{v_4} =
\pmb{\nu}_{v_5} = \pmb{\nu}_{w_4} = \pmb{\nu}_{w_5} = (0,0,1)$ and edges
$\{v_1,v_2\}$, $\{v_1,v_3\}$, $\{v_1,v_4\}$, $\{v_1,v_5\}$ and $\{w_1,w_2\}$,
$\{w_1,w_3\}$, $\{w_1,w_4\}$, $\{w_1,w_5\}$, as depicted in Figure~\ref{fig:graphG3}.

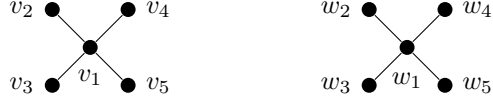
\begin{figure}[ht]
\centering
\begin{tikzpicture}[every node/.style={fill=black,circle,inner sep=2pt},node distance=0.5cm]

\node[label=below:{$v_1$}](v1) {};
\node[above left= of v1,label=left:{$v_2$}](v2) {};
\node[below left= of v1,label=left:{$v_3$}](v3) {};
\node[above right= of v1,label=right:{$v_4$}](v4) {};
\node[below right= of v1,label=right:{$v_5$}](v5) {};
\node[right=4cm of v1,label=below:{$w_1$}](w1) {};
\node[above left= of w1,label=left:{$w_2$}](w2) {};
\node[below left= of w1,label=left:{$w_3$}](w3) {};
\node[above right= of w1,label=right:{$w_4$}](w4) {};
\node[below right= of w1,label=right:{$w_5$}](w5) {};

\path
(v1) edge[-] (v2)
(v1) edge[-] (v3)
(v1) edge[-] (v4)
(v1) edge[-] (v5)
(w1) edge[-] (w2)
(w1) edge[-] (w3)
(w1) edge[-] (w4)
(w1) edge[-] (w5)
;
\end{tikzpicture}
\caption{Graph $G_3$.}\label{fig:graphG3}
\end{figure}

By definition, $\mathbf{L}^{(0)}:=\left(
\begin{smallmatrix}
1 & 0 & 0\\
0 & 1 & 0\\
0 & 1 & 0\\
0 & 0 & 1\\
0 & 0 & 1\\
0 & 1 & 0\\
1 & 0 & 0\\
1 & 0 & 0\\
0 & 0 & 1\\
0 & 0 & 1\end{smallmatrix}\right)$.
	 % such that $\mathbf{F}^{(0)}$ is good for $\pmb{\ell}$.	 
We also note that 
\begin{align*}
&(\pmb{\ell}_{M_{\textsl{WL}}}^{(1)})_{v_1}=\textsc{Hash}\bigl((1,0,0),\ldbl(0,1,0),(0,1,0),(0,0,1),(0,0,1)\rdbl\bigr)\\
\neq&(\pmb{\ell}_{M_{\textsl{WL}}}^{(1)})_{w_1}=\textsc{Hash}\bigl((0,1,0),\ldbl(1,0,0),(1,0,0),(0,0,1),(0,0,1)\rdbl\bigr).
\end{align*}
We next show that there exist no $\mathbf{W}^{(1)},\mathbf{B}^{(1)}$ such that $\mathbf{L}^{(1)}\sqsubseteq \pmb{\ell}_{M_{\textsl{WL}}}^{(1)}$.
Indeed,
	\allowdisplaybreaks
\begin{align*}
    \mathbf{L}^{(1)}& :=\sigma\left(\left(\mathsf{diag}\left(
    \begin{pmatrix}\frac{1}{2}\\
	1\\
	1\\
	1\\
	1\\
	\frac{1}{2}\\
	1\\
	1\\
	1\\
	1\\
	\end{pmatrix}\right)
	\begin{pmatrix}
	0 & 1 & 1 & 1 & 1 & 0 & 0 & 0 & 0 & 0\\
	1 & 0 & 0 & 0 & 0 & 0 & 0 & 0 & 0 & 0\\
	1 & 0 & 0 & 0 & 0 & 0 & 0 & 0 & 0 & 0\\
	1 & 0 & 0 & 0 & 0 & 0 & 0 & 0 & 0 & 0\\
	1 & 0 & 0 & 0 & 0 & 0 & 0 & 0 & 0 & 0\\
	0 & 0 & 0 & 0 & 0 & 0 & 1 & 1 & 1 & 1\\
	0 & 0 & 0 & 0 & 0 & 1 & 0 & 0 & 0 & 0\\
	0 & 0 & 0 & 0 & 0 & 1 & 0 & 0 & 0 & 0\\
	0 & 0 & 0 & 0 & 0 & 1 & 0 & 0 & 0 & 0\\
	0 & 0 & 0 & 0 & 0 & 1 & 0 & 0 & 0 & 0\\
	 \end{pmatrix}\mathsf{diag}\left(
	\begin{pmatrix}\frac{1}{2}\\
	1\\
	1\\
	1\\
	1\\
	\frac{1}{2}\\
	1\\
	1\\
	1\\
	1\\
	\end{pmatrix}\right)+\mathbf{I}\right)\mathbf{L}^{(0)}\mathbf{W}^{(1)} + \mathbf{B}^{(1)}\right)\\
	&=\sigma\left(\begin{pmatrix}
	0 & \frac{1}{2} & \frac{1}{2} & \frac{1}{2} & \frac{1}{2} & 0 & 0 & 0 & 0 & 0\\
	\frac{1}{2} & 0 & 0 & 0 & 0 & 0 & 0 & 0 & 0 & 0\\
	\frac{1}{2} & 0 & 0 & 0 & 0 & 0 & 0 & 0 & 0 & 0\\
	\frac{1}{2} & 0 & 0 & 0 & 0 & 0 & 0 & 0 & 0 & 0\\
	\frac{1}{2} & 0 & 0 & 0 & 0 & 0 & 0 & 0 & 0 & 0\\
	0 & 0 & 0 & 0 & 0 & 0 & \frac{1}{2} & \frac{1}{2} & \frac{1}{2} & \frac{1}{2}\\
	0 & 0 & 0 & 0 & 0 & \frac{1}{2} & 0 & 0 & 0 & 0\\
	0 & 0 & 0 & 0 & 0 & \frac{1}{2} & 0 & 0 & 0 & 0\\
	0 & 0 & 0 & 0 & 0 & \frac{1}{2} & 0 & 0 & 0 & 0\\
	0 & 0 & 0 & 0 & 0 & \frac{1}{2} & 0 & 0 & 0 & 0\\
	\end{pmatrix}
	\begin{pmatrix}
    1 & 0 & 0\\
    0 & 1 & 0\\
    0 & 1 & 0\\
    0 & 0 & 1\\
    0 & 0 & 1\\
    0 & 1 & 0\\
    1 & 0 & 0\\
    1 & 0 & 0\\
    0 & 0 & 1\\
    0 & 0 & 1
	\end{pmatrix}\mathbf{W}^{(1)}\right)
	 =\sigma\left(\begin{pmatrix}
	1 & 1 & 1\\
    \frac{1}{2} & 1 & 0\\
    \frac{1}{2} & 1 & 0\\
    \frac{1}{2} & 0 & 1\\
    \frac{1}{2} & 0 & 1\\
    1 & 1 & 1\\
    1 & \frac{1}{2} & 0\\
    1 & \frac{1}{2} & 0\\
    0 & \frac{1}{2} & 1\\
    0 & \frac{1}{2} & 1
	 \end{pmatrix}\mathbf{W}^{(1)} + \mathbf{B}^{(1)}\right).
 \end{align*}
Hence, independently of the choice of $\mathbf{W}^{(1)}$ and $\mathbf{B}^{(1)}$,
vertices $v_1$ and $w_1$ will be assigned the same label, and thus
$\mathbf{L}^{(1)}\not\sqsubseteq \pmb{\ell}_{M_{\textsl{WL}}}^{(1)}$.
\end{proof}

In particular, the class $\architecture_{\textsl{dGNN}_4}$, corresponding to the
popular graph neural networks of~\cite{kipf-loose}, is not stronger than
$\architectureWL$. We also remark, based on the first counterexample in the proof,
that the class of aMPNNs, corresponding to simple graph neural networks of the form
$\mathbf{L}^{(t)}:=\sigma(\mathbf{A}\mathbf{L}^{(t-1)}\mathbf{W}^{(t)}+\mathbf{B}^{(t
)})$, is not stronger than $\architectureWL$. We know, however, from
Corollary~\ref{cor:pluspstrongwl} that the slight extension
$\mathbf{L}^{(t)}=\sigma\left((\mathbf{A}+p\mathbf{I})\mathbf{L}^{(t-1)}\mathbf{W}^{(
t)}-q\mathbf{J}\right)$ results in a class of aMPNNs that is stronger than
$\architectureWL$. It will follow from our next result that a similar extension
suffices to make the graph neural networks of~\cite{kipf-loose} stronger than
$\architectureWL$.

We will now argue that the remaining $\architecture_{\textsl{dGNN}_6}$ class from
Table~\ref{tab:dMPNNs} is stronger than $\architectureWL$, hereby concluding the
proof of the fourth item in Theorem~\ref{thm:dmpnn}.

\begin{proposition}\label{prop:indeed-wl-power}
The class $\architecture_{\textsl{dGNN}_6}$ is stronger than $\architectureWL$.
\end{proposition}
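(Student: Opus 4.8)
The plan is to reduce this to the anonymous case already settled by Corollary~\ref{cor:pluspstrongwl}. Recall that $\architecture_{\textsl{dGNN}_6}$ is obtained from
\[
\mathbf{L}^{(t)}:=\sigma\left((r\mathbf{I}+(1-r)\mathbf{D})^{-1/2}(\mathbf{A}+p\mathbf{I})(r\mathbf{I}+(1-r)\mathbf{D})^{-1/2}\mathbf{L}^{(t-1)}\mathbf{W}^{(t)}+\mathbf{B}^{(t)}\right)
\]
by varying $\mathbf{W}^{(t)}$, the bias $\mathbf{B}^{(t)}$, and the parameters $p,r$. The crucial observation is that the degree-normalisation factor is entirely controlled by $r$: choosing $r=1$ gives $r\mathbf{I}+(1-r)\mathbf{D}=\mathbf{I}$, so both normalisation matrices $(r\mathbf{I}+(1-r)\mathbf{D})^{-1/2}$ collapse to the identity. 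With $r=1$ the architecture therefore reduces exactly to
\[
\mathbf{L}^{(t)}=\sigma\left((\mathbf{A}+p\mathbf{I})\mathbf{L}^{(t-1)}\mathbf{W}^{(t)}+\mathbf{B}^{(t)}\right),
\]
which, taking $\mathbf{B}^{(t)}=-q\mathbf{J}$, is precisely the form~\eqref{GNN:plusp} defining $\architecture_{\textsl{GNN}^-}$.

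Concretely, I would argue as follows. Fix a round budget $T$ and the associated WL-based aMPNN $M_{\textsl{WL}}\in\architectureWL$. By Corollary~\ref{cor:pluspstrongwl} we have $\architectureWL\preceq\architecture_{\textsl{GNN}^-}$, so there is an aMPNN $M$ of the form~\eqref{GNN:plusp} — with activation $\sigma$ equal to $\text{ReLU}$ (or sign), some $0\le p\le 1$, weight matrices $\mathbf{W}^{(t)}$, and bias $-q\mathbf{J}$ with $0\le q\le 1$ — such that $\pmb{\ell}_M^{(t)}\sqsubseteq\pmb{\ell}_{M_{\textsl{WL}}}^{(t)}$ for all $t\ge 0$. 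I then instantiate this very same $M$ inside $\architecture_{\textsl{dGNN}_6}$ by setting $r=1$ and keeping the same $p$, the same $\mathbf{W}^{(t)}$, and $\mathbf{B}^{(t)}=-q\mathbf{J}$. By the collapse above this dGNN$_6$ computes exactly the labellings of $M$, hence it is a member of $\architecture_{\textsl{dGNN}_6}$ that is stronger than $M_{\textsl{WL}}$. Since $T$ was arbitrary, this establishes $\architectureWL\preceq\architecture_{\textsl{dGNN}_6}$, i.e., $\architecture_{\textsl{dGNN}_6}$ is stronger than $\architectureWL$.

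There is essentially no hard computation here; the work is purely conceptual, and I do not expect a genuine obstacle. The one point I would verify carefully is that the parameter ranges line up: the interval $0\le p\le 1$ of~\eqref{GNN:plusp} coincides with the admissible range of $p$ in~\eqref{eq:dGNN}, the repeated-row bias $-q\mathbf{J}$ is exactly the kind of bias $\mathbf{B}^{(t)}$ implicitly allowed for the architectures of Table~\ref{tab:dMPNNs}, and $r=1$ is a legal choice of the trade-off parameter (so that $r\mathbf{I}+(1-r)\mathbf{D}$ is a positive diagonal matrix — indeed the identity — with well-defined inverse square root). Granting these routine checks, the proposition is immediate. This is precisely the theoretical justification, promised in the introduction, for the learnable trade-off parameter suggested by Kipf and Welling: switching the degree normalisation off via $r$ recovers the full distinguishing power of the WL algorithm.
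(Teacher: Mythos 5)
Your proof is correct for the proposition as literally stated, but it takes a genuinely different and much lighter route than the paper. Since $\architecture_{\textsl{dGNN}_6}$ is defined by varying $\mathbf{W}^{(t)}$, $\mathbf{B}^{(t)}$, $p$ \emph{and} $r$, the choice $r=1$ is indeed admissible ($r\mathbf{I}+(1-r)\mathbf{D}=\mathbf{I}$ is a positive, degree-determined diagonal matrix), and collapsing the normalisation reduces the claim to Corollary~\ref{cor:pluspstrongwl}, whose proof (Theorem~\ref{thm:equalstrong}) is established independently of Section~\ref{sec:dMPNNs}, so there is no circularity; the parameter ranges and the repeated-row bias $-q\mathbf{J}$ check out as you say. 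The paper instead re-runs the inductive simulation of Theorem~\ref{thm:equalstrong} for \emph{arbitrary fixed} degree-determined $\mathbf{g}$ and $\mathbf{h}$, which requires two extra ingredients your argument avoids: Lemma~\ref{lem:indep} (row-independence modulo equality survives multiplication by $\mathsf{diag}(\mathbf{h})$) and Lemma~\ref{lem:choose-p} (a choice of $p$ close enough to $1$ so that the scalars $g(d_v)$ cannot create spurious label collisions). What the paper's harder proof buys is precisely what your shortcut cannot deliver: the simulation works with the degree normalisation \emph{left intact}, in particular for the Kipf--Welling normalisation $(\mathbf{D}+\mathbf{I})^{-1/2}$ in Equation~\eqref{gnn:kipfp}, which is the basis for the paper's claim that adding the parameter $p$ to actual GCNs (not to GCNs with normalisation switched off) recovers WL power. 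Relatedly, your closing interpretation misattributes the contribution: the parameter theoretically justified by the paper is the self-feature trade-off $p$ suggested by Kipf and Welling, not the normalisation knob $r$; your argument shows only that $\architecture_{\textsl{dGNN}_6}$ attains WL power by degenerating to an unnormalised GNN, which says nothing about architectures that genuinely normalise by degrees.
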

\begin{proof}
We recall that dMPNNs in $\architecture_{\textsl{dGNN}_6}$ correspond to graph
neural network architectures of the form
\begin{align}\label{eq:GNN6}
\mathbf{L}^{(t)}&:=\sigma(\mathsf{diag}(\mathbf{g})(\mathbf{A}+p\mathbf{I})\mathsf{diag}(\mathbf{h})\mathbf{L}^{(t-1)}\mathbf{W}^{(t)}+\mathbf{B}^{(t)}),
\end{align}
where $\mathsf{diag}(\mathbf{g}) = \mathsf{diag}(\mathbf{h}) =
(r\mathbf{I}+(1-r)\mathbf{D})^{\nicefrac{-1}{2}}$ and $\sigma$ is ReLU or sign. In
fact, our proof will work for any degree-determined $\mathbf{g}$ and $\mathbf{h}$.

The argument closely follows the proof of Theorem~\ref{thm:equalstrong}. More
specifically, we construct a dMPNN $M$ corresponding to~(\ref{eq:GNN6}) such that
$\pmb{\ell}_M^{(t)}\sqsubseteq\pmb{\ell}_{M_{\textsl{WL}}}^{(t)}$ for all $t\geq 0$.
The induction hypothesis is that
$\pmb{\ell}_M^{(t)}\sqsubseteq\pmb{\ell}_{M_{\textsl{WL}}}^{(t)}$ and
$\pmb{\ell}_M^{(t)}$ is row-independent modulo equality. This hypothesis is clearly
satisfied, by definition, for $t=0$.

For the inductive step we assume that
$\pmb{\ell}_M^{(t-1)}\sqsubseteq\pmb{\ell}_{M_{\textsl{WL}}}^{(t-1)}$ and
$\pmb{\ell}_M^{(t-1)}$ is row-independent modulo equality. Let us define the
labelling $\pmb{\kappa}_M^{(t-1)}$ such that
$(\pmb{\kappa}_M^{(t-1)})_v:=h(d_v)(\pmb{\ell}_M^{(t-1)})_v$ for all vertices $v$.

\begin{lemma}\label{lem:indep}
We have that $\pmb{\kappa}_M^{(t-1)}\sqsubseteq\pmb{\ell}_{M_{\textsl{WL}}}^{(t-1)}$
and $\pmb{\kappa}_M^{(t-1)}$ is row-independent modulo equality.
\end{lemma}
\begin{proof}
Suppose that there are two vertices $v$ and $w$ such that
$$(\pmb{\kappa}_M^{(t-1)})_v=h(d_v)(\pmb{\ell}_M^{(t-1)})_v=h(d_w)(\pmb{\ell}_M^{(t-1)})_w=(\pmb{\kappa}_M^{(t-1)})_w.$$
This implies that $(\pmb{\ell}_M^{(t-1)})_v$ is a (non-zero) scalar multiple of
$(\pmb{\ell}_M^{(t-1)})_w$. This is only possible when
$(\pmb{\ell}_M^{(t-1)})_v=(\pmb{\ell}_M^{(t-1)})_w$ because $\pmb{\ell}_M^{(t-1)}$
is row-independent modulo equality. In other words,
$\pmb{\kappa}_M^{(t-1)}\sqsubseteq\pmb{\ell}_M^{(t-1)}\sqsubseteq\pmb{\ell}_{M_{\text
sl{WL}}}^{(t-1)}$. Similarly, suppose that $\pmb{\kappa}_M^{(t-1)}$ is not
row-independent modulo equality then, due to the definition of
$\pmb{\kappa}_M^{(t-1)}$, this implies that $\pmb{\ell}_M^{(t-1)}$ is also not
row-independent modulo equality.
\end{proof}

Lemma~\ref{lem:indep} gives us sufficient conditions to repeat a key part of the
argument in the proof of Theorem~\ref{thm:equalstrong}. That is, we can find a
matrix $\mathbf{U}^{(t)}$ such that the labelling
$\pmb{\mu}^{(t)}:v\mapsto\left((\mathbf{A}+p\mathbf{I})\mathsf{diag}(\mathbf{h})\mathbf{L}^{(t-1)}\mathbf{U}^{(t)}\right)_{v}$ satisfies $\pmb{\mu}^{(t)}\sqsubseteq
\pmb{\ell}_{M_{\textsl{WL}}}^{(t)}$. 

We will now prove that the labelling $\pmb{\lambda}^{(t)}$ defined by
$\pmb{\lambda}^{(t)}_v:=g(d_v)\pmb{\mu}^{(t)}_v$, also satisfies
$\pmb{\lambda}^{(t)}\sqsubseteq \pmb{\ell}_{M_{\textsl{WL}}}^{(t)}$. We remark that
$\pmb{\lambda}^{(t)}$ coincides with the labelling:
$$
v\mapsto \bigl(\mathsf{diag}(\mathbf{g})(\mathbf{A}+p\mathbf{I})\mathsf{diag}(\mathbf{h})\mathbf{L}^{(t-1)}\mathbf{U}^{(t)})_{v}.
$$

\begin{lemma}\label{lem:choose-p}
The exists a constant $m_p$, only dependent on $\mathbf{g}$ and the number $n$ of
vertices, such that $\pmb{\lambda}^{(t)}\sqsubseteq
\pmb{\ell}_{M_{\textsl{WL}}}^{(t)}$, for every $m_p<p<1$.
\end{lemma} 
\begin{proof}
We will choose $m_p$ at the end of the proof. For now suppose that
$\pmb{\lambda}^{(t)} \not \sqsubseteq \pmb{\ell}_{M_{\textsl{WL}}}^{(t)}$. Then
there exist two vertices $v$ and $w$ such that
$$
\pmb{\lambda}^{(t)}_v=\pmb{\lambda}^{(t)}_w \text{ and } (\pmb{\ell}_{M_{\textsl{WL}}}^{(t)})_v\neq (\pmb{\ell}_{M_{\textsl{WL}}}^{(t)})_w.$$ 
The latter implies that $\pmb{\mu}^{(t)}_v\neq \pmb{\mu}^{(t)}_w$ and thus
$\pmb{\lambda}^{(t)}_v=\pmb{\lambda}^{(t)}_w$ implies that $g(d_v)\neq g(d_w)$.

We recall some facts from the proof of Theorem~\ref{thm:equalstrong}, and from
equations~\eqref{eq:labelmu} and~\eqref{eq:linearcomb} in particular. An entry in
$\pmb{\mu}^{(t)}_v$ is either $0$ or $1,2,\ldots,n$ or $i+p$, for some $i\in
\{0,1,\dots,n\}$. Furthermore, at least one entry must be distinct from $0$. Also,
$\pmb{\lambda}^{(t)}_v=\pmb{\lambda}^{(t)}_w$ implies that the positions of the
non-zero entries in $\pmb{\mu}^{(t)}_v$ and $\pmb{\mu}^{(t)}_w$ coincide. (Recall
that the image of $g$ is $\mathbb{A}^+$). Let $ Z$ be the positions in
$\pmb{\mu}^{(t)}_v$ (and thus also in $\pmb{\mu}^{(t)}_w$) that carry non-zero
values.

We can now infer that $\pmb{\lambda}^{(t)}_v=\pmb{\lambda}^{(t)}_w$ implies that for
every $i\in Z$:
$$
\frac{\pmb{\mu}^{(t)}_{vi}}{\pmb{\mu}^{(t)}_{wi}}=\frac{g(d_w)}{g(d_v)}\neq 1.
$$
Moreover, both in $\pmb{\mu}^{(t)}_v$ and $\pmb{\mu}^{(t)}_w$ there are unique
positions $i_1$ and $i_2$, respectively, whose corresponding entry contain $p$. We
now consider three cases:
$$
\text{(a)~}
\frac{\pmb{\mu}^{(t)}_{vi_1}}{\pmb{\mu}^{(t)}_{wi_1}}=\frac{i+p}{j}\text{; \quad(b)~}
\frac{\pmb{\mu}^{(t)}_{vi_2}}{\pmb{\mu}^{(t)}_{wi_2}}=\frac{i}{j+p}\text{; \quad (c)~}
\frac{\pmb{\mu}^{(t)}_{vi_1}}{\pmb{\mu}^{(t)}_{wi_1}}=\frac{i+p}{j+p}\text{ (this is the case if and only if $i_1=i_2$),}
$$
for some $i,j\in \{0,1,2,\dots,n\}$. To define $m_p$, let
$\Gamma:=\left\{\frac{g(d_w)}{g(d_v)} \:\middle|\: g(d_v)\neq g(d_w) \text{ and }
v,w\in V\right\}$ and consider
	\begin{align*}
	P_a&:=\left\{\alpha j -i\hspace{0.5ex}\:\middle|\: 0\leq \alpha j -i < 1, i,j\in \{0,1,2\dots,n\},\alpha\in \Gamma \vphantom{\frac{1}{\alpha}}\right\}\\
	P_b&:=\left\{ \frac{i -\alpha j}{\alpha} \:\middle|\: 0\leq \frac{i -\alpha j}{\alpha} < 1, i,j\in \{0,1,2,\dots,n\},\alpha\in \Gamma\right\}\\
	P_c&:=\left\{ \frac{\alpha j-i}{1-\alpha} \:\middle|\: 0\leq \frac{\alpha(j-i)}{1-\alpha} < 1, i,j\in \{0,1,2,\dots,n\},\alpha\in \Gamma\right\}.
	\end{align*}
We define $m_p=\max\{P_1\cup P_2\cup P_3\cup\{0\}\}$ and we claim that for all $p$
satisfying $m_p<p<1$ the lemma holds.

By definition of $P_a$, $\alpha j-i \neq p$ and thus $\frac{i+p}{j} \neq \alpha$ for
any $\alpha\in\Gamma$ and $i,j\in\{0,1,\ldots,n\}$. This rules out (a). Similarly,
by definition of $P_b$, $\frac{i-\alpha j}{\alpha}\neq p$ and thus
$\frac{i}{j+p}\neq \alpha$ for any $\alpha\in\Gamma$ $i,j\in\{0,1,\ldots,n\}$. This
rules out (b). Finally, by definition of $P_3$, $\frac{\alpha j-i}{1-\alpha}\neq p$
and thus $\frac{i+p}{j+p}\neq \alpha$ for any $\alpha\in\Gamma$
$i,j\in\{0,1,\ldots,n\}$. This rules out (c). We conclude, as our initial assumption
cannot be valid for this $m_p$.
\end{proof}

From here, we can again follow the proof of Theorem~\ref{thm:equalstrong} to
construct a matrix $\mathbf{X}^{(t)}$ such that the labelling $\pmb{\ell}_M^{(t)}$
defined by
$\sigma(\mathsf{diag}(\mathbf{g})(\mathbf{A}+p\mathbf{I})\mathsf{diag}(\mathbf{h})
\mathbf{L}^{(t-1)}\mathbf{U}^{(t)}\mathbf{X}^{(t)}+\mathbf{B}^{(t)})$ with
$\mathbf{B}^{(t)}=-\mathbf{J}$ if $\sigma$ is the sign function, and
$\mathbf{B}^{(t)}=-q\mathbf{J}$ if $\sigma$ is the ReLU function, is such that
$\pmb{\ell}_M^{(t)}\sqsubseteq\pmb{\ell}_{M_{\textsl{WL}}}^{(t)}$ and
$\pmb{\ell}_M^{(t)}$ is row-independent modulo equality. This concludes the proof
for dMPNNs arising from graph neural networks of the form~\eqref{eq:GNN6}.
\end{proof}

We already mentioned that the proof of Proposition~\ref{prop:indeed-wl-power} works
for any degree-determined $\mathbf{g}$ and $\mathbf{h}$. In particular, the class of
dMPNNs originating from graph neural networks of the form
\begin{equation}
	\mathbf{L}^{(t)}:=\sigma\Bigl(\bigl(\mathbf{D}+\mathbf{I}\bigr)^{-1/2} (\mathbf{A}+p\mathbf{I})\bigl(\mathbf{D}+\mathbf{I}\bigr)^{-1/2} \mathbf{L}^{(t-1)}\mathbf{W}^{(t)} -q \mathbf{J}\Bigr),\label{gnn:kipfp}
\end{equation}
with $p,q\in\mathbb{A}$, $0\leq p,q\leq 1$, is stronger than $\architectureWL$. The
introduction of the parameter $p$ was already suggested in~\cite{kipf-loose}. The
proof of Proposition~\ref{prop:notweaker} shows that this parameter is necessary to
encode the WL algorithm. Our result thus provide a theoretical justification for
including this parameter.

\section{Conclusions}\label{sec:conclude}
In this paper we investigate the distinguishing power of two classes of MPNNs,
anonymous and degree-aware MPNNs, in order to better understand the presence of
degree information in commonly used graph neural network architectures. We show that
both classes of MPNNs are equivalent to the WL algorithm, in terms of their
distinguishing power, when one ignores the number of computation rounds. Taking the
computation rounds into consideration, however, reveals that degree information may
boost the distinguishing power.

Furthermore, we identify classes of MPNNs corresponding to specific
linear-algebra-based architectures of graph neural networks. We again distinguish
between anonymous graph neural networks~\cite{hyl17,grohewl} and degree-aware graph
neural networks~\cite{kipf-loose,DBLP:journals/corr/abs-1905-03046}. Here, we again
make connections to the WL algorithm, identify which architectures of graph neural
networks can or cannot simulate the WL algorithm, and describe how a simple
modification results in graph neural networks that are as powerful as the WL
algorithm.

Regarding future work, we point out that, following the work of~\cite{grohewl}, we
fix the input graph in our analysis. We use this particularly when we prove that
certain classes of MPNNs, based on graph neural network architectures, are stronger
than the WL algorithm (Theorem~\ref{thm:equalstrong} and
Proposition~\ref{prop:indeed-wl-power}). We prove this by constructing an MPNN that
simulates the WL algorithm on a fixed input graph. As such, the constructed MPNN may
not simulate the WL algorithm on another graph. It is natural to ask whether there
are graph neural network-based MPNNs that can simulate the WL algorithm on all
graphs.

% \bibliographystyle{apalike}
% \bibliography{refs}

\end{document}